\newtheorem{theorem}{Theorem}
\newtheorem{definition}{Definition}
\newtheorem{proposition}{Proposition}
\newtheorem{corollary}{Corollary}
\newtheorem{remark}{Remark}
\Crefname{equation}{Equation}{Eqs.}
\begin{document}

\title{
\LARGE \bf
STITCHER: Constrained Trajectory Planning in Complex Environments with Real-Time Motion Primitive Search 
}

\author{Helene J. Levy and Brett T. Lopez
\thanks{Authors are with the VECTR Laboratory, University of California, Los Angeles, Los Angeles, CA, USA. {\tt\small \{hjlevy, btlopez\}@ucla.edu}}%
}

\markboth{Journal of \LaTeX\ Class Files,~Vol.~0, No.~0, September~2025}%
{Shell \MakeLowercase{\textit{et al.}}: A Sample Article Using IEEEtran.cls for IEEE Journals}


\maketitle
\thispagestyle{empty}
\pagestyle{empty}
\setlength{\parskip}{0pt}

\begin{abstract}
Autonomous high-speed navigation through large, complex environments requires real-time generation of agile trajectories that are dynamically feasible, collision-free, and satisfy state or actuator constraints.
Modern trajectory planning techniques primarily use numerical optimization, as they enable the systematic computation of high-quality, expressive trajectories that satisfy various constraints.
However, stringent requirements on computation time and the risk of numerical instability can limit the use of optimization-based planners in safety-critical scenarios. 
This work presents an optimization-free planning framework called STITCHER that stitches short trajectory segments together with graph search to compute long-range, expressive, and near-optimal trajectories in real-time.
STITCHER outperforms modern optimization-based planners through our innovative planning architecture and several algorithmic developments that make real-time planning possible. 
Extensive simulation testing is performed to analyze the algorithmic components that make up STITCHER, along with a thorough comparison with three state-of-the-art optimization planners.
Simulation tests show that safe trajectories can be created within a few milliseconds for paths that span the entirety of two 50 m x 50 m environments. 
Hardware tests with a custom quadrotor verify that STITCHER can produce trackable paths in real-time while respecting nonconvex constraints, such as limits on tilt angle and motor forces, with flight speeds up to 63 km/h.
\end{abstract}

\begin{IEEEkeywords}
Trajectory planning, aerial systems, motion primitives, graph search, collision avoidance.
\end{IEEEkeywords}

\vspace{0.05in}
\noindent \textbf{\small Code:} \href{https://github.com/vectr-ucla/stitcher}{\small https://github.com/vectr-ucla/stitcher}

\section{INTRODUCTION}
\IEEEPARstart{P}{lanning} collision-free, dynamically feasible trajectories in real-time through complex environments is crucial for many autonomous systems. 
As a result, trajectory planning has garnered significant interest from the research community, but meeting the reliability requirements for safety-critical real-world applications remains challenging. 
Specifically, few methods have guarantees regarding trajectory optimality and time/memory complexity without sacrificing trajectory length, computation time, or expressiveness. 
Our approach addresses this gap by combining optimal control with graph search to generate near-optimal trajectories over long distances in real-time, resulting in a framework that provides strong guarantees on path quality and algorithm complexity.

Optimization-based trajectory planning has emerged as the primary framework for autonomous systems that must navigate complex environments.
This is because constraints and performance objectives can be naturally stated in the optimization problem.
Most approaches can be broadly classified by their use of continuous or integer variables. 
Continuous variable methods employ gradient-based optimization to jointly optimize over the coefficients of basis functions (e.g., polynomials) and waypoint arrival times while imposing obstacle and state constraints \cite{Mellinger11:Minimum, Richter16:Polynomial,Oleynikova16:Continuous-time, Zhou2021:RAPTOR, Wang22:Geometrically, Ren22:Bubble, Kondo26:Mighty}.
Integer variable methods require that the free space of the environment be represented as the union of convex sets (continuous variable methods have also used this representation, e.g., \cite{Wang22:Geometrically,Ren22:Bubble, Kondo26:Mighty}) and solve a mixed-integer program for a collision-free trajectory \cite{Deits15:Computing, Deits15:Efficient, Tordesillas22:FASTER, Marcucci23:Motion}.
Despite continued innovations, these methods lack \textit{a priori} time complexity bounds and often scale very poorly with trajectory length; this is especially true for integer programming approaches.

\begin{figure}[t!]
  \centering
  \includegraphics[width=\columnwidth]{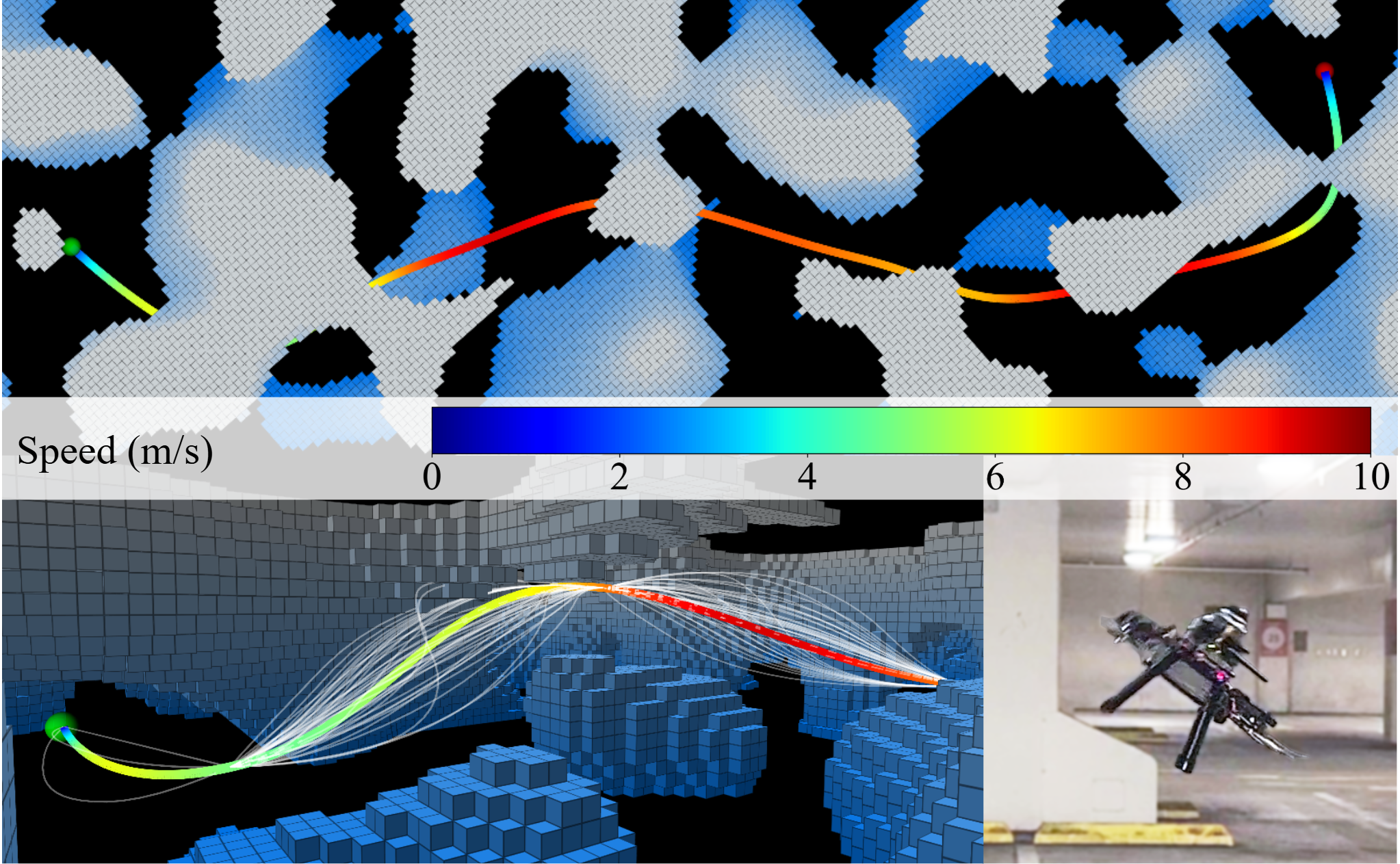}
  \caption{A trajectory (colored based on speed) generated by our proposed algorithm called STITCHER through a perlin noise environment. STITCHER searches over candidates motion primitives (white) to find a safe trajectory in real-time with time and memory complexity guarantees. Flight experiments were performed to verify dynamic feasibility of high-speed trajectory plans.}
  \label{fig:first_pic}
\vskip -0.05in    
\end{figure}

A computationally efficient alternative to optimization-based trajectory planning is the use of so-called motion primitives: a library of short length or duration trajectories that can be efficiently computed and evaluated \cite{Mueller15:A_Computationally, Florence16:Integrated, Lopez17:Aggressive, Ryll19:Efficient}.
To effectively use motion primitives, a planner must operate in a receding horizon fashion, i.e., continuously replan, because motion primitives are inherently near-sighted with their short length or duration.  
This can introduce several performance issues, e.g., myopic behavior or unexpressive trajectories, that are exacerbated in large, complex environments. 
Subsequent work has attempted to pose the problem as a graph search with nodes and edges being desired states (position, velocity, etc.) and motion primitives, respectively \cite{Liu17:Search_Based, Liu18:Search_Based, Zhou19:Robust, Foehn21:Alphapilot}.
While this allows for the creation of long-range trajectories, search times can be extremely high (seconds) because of the graph size. 
An admissible search heuristic can be used to reduce the number of node expansions required to find a solution, also known as search effort, while preserving optimality of the graph \cite{Pearl84:Heuristics}. However, designing such a search heuristic is non-trivial. 

We propose a new trajectory planning algorithm called STITCHER that can perform real-time motion primitive searches across long distances in complex environments.
STITCHER utilizes an innovative three-stage planning architecture to generate smooth and expressive trajectories by \emph{stitching} motion primitives together through graph search.
The first two stages are designed to expedite the motion primitive search in the final stage by constructing a compact but expressive search graph and search heuristic.
Specifically, given a set of waypoints computed in the first stage, we create a velocity graph by representing nodes as sampled velocities at each waypoint, and edges as quick-to-generate minimum-time trajectories.
We employ dynamic programming on this graph, using the Bellman equation to compute the cost-to-go for each node. 
Critically, the cost-to-go is then used as an admissible heuristic to efficiently guide the motion primitive search in the third stage.
We also leverage a greedy graph pre-processing step to form a compact motion primitive graph. 
We prove all graphs are finite and that the proposed heuristic is admissible.
These technical results are critical as they guarantee i) \emph{a priori} time and memory complexity bounds and ii) trajectory optimality with respect to the graph discretization. 
To further reduce computation time, we improve the collision checking procedure from \cite{Lopez17:Aggressive}, by leveraging the known free space from previous nearest-neighbor queries, bypassing the rigidity and computational complexity of free space decomposition. 
Additionally, we show that employing a simple sampling procedure in the final search stage is effective at pruning candidate trajectories that violate complex state or actuator constraints. 
STITCHER was extensively tested in two simulation environments to evaluate algorithmic innovations and assess its performance against three state-of-the-art optimization-based planners \cite{Tordesillas22:FASTER,Wang22:Geometrically,Kondo26:Mighty}.
STITCHER is shown to generate high-quality, dynamically feasible trajectories over long distances (over 50 m) with computation times in the milliseconds, and consistently generates trajectories faster than the state-of-the-art with comparable execution times.
Finally, we tested the trajectories designed by STITCHER in hardware to show that the trajectories were dynamically feasible, adhered to physical constraints, and could be tracked via standard geometric cascaded control at high speeds (up to 63 km/h).

\section{RELATED WORKS}
\subsection{Optimization-based Planning}
Designing high quality trajectories using online optimization has become a popular planning strategy as a performance index and constraints can be systematically incorporated into an optimization problem.
Optimization-based trajectory planners can be categorized using several criteria, but the clearest delineation is whether the method uses continuous or integer variables.
For methods that use only continuous variables, the work by \cite{Richter16:Polynomial} reformulated the quadratic program in \cite{Mellinger11:Minimum} to jointly optimize over polynomial endpoint derivatives and arrival times for a trajectory passing through waypoints. 
Collisions were handled by adding intermediate waypoints and redoing the trajectory optimization if the original trajectory was in collision. 
Oleynikova et al.~\cite{Oleynikova16:Continuous-time} represented obstacles using an Euclidean Signed Distance Field (ESDF) which was incorporated into a nonconvex solver as a soft constraint.
Zhou et al.~\cite{Zhou2021:RAPTOR} used a similar penalty-based method but introduced a topological path search to escape local minima. 
An alternative approach is to decompose the occupied space or free space into convex polyhedra \cite{Mellinger12:Mixed-integer, Deits15:Computing, Liu17:Planning} which can be easily incorporated as constraints in an optimization. 
The methods in \cite{Wang22:Geometrically, Ren22:Bubble, Kondo26:Mighty} treat these constraints as soft while efficiently optimizing over polynomial trajectory segments that must pass near waypoints. 
One can also use the free-space polyhedra to formulate a mixed-integer program \cite{Deits15:Efficient, Landry16:Aggressive, Tordesillas22:FASTER, Marcucci23:Motion} to bypass the nonconvexity introduced by having unknown waypoint arrival times, but at the expense of poor scalability with trajectory length and number of polyhedra. Marcucci et al.
\cite{Marcucci24:Fast} addresses scalability concerns of \cite{Marcucci23:Motion} by solving a sequence of convex problems instead of one large-scale optimization but requires an offline process for generating collision-free convex sets and does not reason about dynamics. 

\subsection{Motion Primitives}
Motion primitive planners have been proposed as an alternative to optimization-based planners to address computational complexity and numerical instability concerns.
The underlying idea of motion primitive planners is to select trajectories online from a precomputed, finite library of trajectories.
Initial work on motion primitives for quadrotors leveraged differential flatness and known solutions to specific optimal control problems to efficiently compute point-to-point trajectories in real-time \cite{Hehn11:Quadrocopter,Mueller15:A_Computationally}.
Later work employed motion primitives for receding horizon collision avoidance where primitives were efficiently generated online by sampling desired final states, and selected at each planning step based on safety and trajectory cost \cite{Howard08:State, Florence16:Integrated,Lopez17:Aggressive, Lopez17:AggressiveFOV, Ryll19:Efficient, Dharmadhikari20:Motion, Hou25:Primitive-Swarm}.
Howard et al. \cite{Howard08:State} first introduced this idea of searching over feasible trajectories of a car with a model predictive control framework. 
Subsequent works extended this methodology to quadcopters using depth images \cite{Florence16:Integrated, Ryll19:Efficient}, point clouds \cite{Lopez17:Aggressive,Lopez17:AggressiveFOV}, or ESDFs \cite{Dharmadhikari20:Motion} for motion primitive evaluations and collision avoidance.
While computationally efficient, the behavior of these planners can be myopic, leading to suboptimal behavior in complex environments which limit their use for planning long-term trajectories.

\subsection{Motion Primitive Search}
One way to address nearsightedness is to perform a graph search over motion primitives, i.e., stitch motion primitives together. 
This can be achieved by extending traditional graph search algorithms \cite{Dijkstra59:Dijkstra, Hart68:Astar, Harabor11:JPS}, which typically use coarse discrete action sets, to using a lattice of motion primitives \cite{Dolgov10:Path,Pivtoraiko11:Kinodynamic,Liu17:Search_Based, Liu18:Search_Based, Zhou19:Robust, Andersson18:Receding}. 
Graph search algorithms are an attractive method for planning due to inherent guarantees of completeness, optimality\footnote{In the context of graph search, optimality refers to resolution optimality, i.e., optimality with respect to the discretized state space.}, and bounded time and memory complexity \cite{Russell16:Artificial}.
The works by Liu et al. 
\cite{Liu17:Search_Based,Liu18:Search_Based} were some of the first works to successfully showcase a search-based algorithm using a lattice of motion primitives for use on quadcopters. 
However, these methods can be computationally expensive as generating high-quality trajectories relies on generating a large number of motion primitives for sufficient density. 
Jarin et al. \cite{Jarin21:Dispersion} addresses computation concerns by improving upon the sampling of different motion primitives, inspired by a minimum dispersion sampling method \cite{Palmieri20:Dispertio}. 
Another way to narrow the search space is by utilizing a geometric path as a prior and constraining motion primitives to pass through waypoints from the path. 
Recently, \cite{Foehn21:Alphapilot, Penicka22:Minimum, Romero22:Model, Romero22:Time} proposed an efficient motion primitive search in velocity space using minimum-time input-constrained trajectories from a double integrator model restricted to pass through a set of waypoints. 
The search can be done in real-time but the resulting bang-bang acceleration profile is dynamically infeasible for quadrotors, leading to poor tracking performance. 
An additional smoothing step, e.g., model predictive contouring control, is required to achieve sufficient trajectory smoothness \cite{Romero22:Model, Romero22:Time, Krinner24:MPCC++}. 

\subsection{Search Heuristics}
Fast graph search speed while retaining optimality guarantees can be achieved by employing an admissible search heuristic \cite{Russell16:Artificial} to guide the search to the goal state.
Constructing an informative and admissible heuristic, however, is non-trivial. 
Much of the previous work in motion primitive search overlooks the importance of the heuristic by generating a weak approximation to the goal \cite{Zhou19:Robust}, using an inadmissible heuristic which forfeits optimality guarantees \cite{Liu18:Search_Based}, or proceeding without a heuristic \cite{Foehn21:Alphapilot}. 
As a result, motion primitive search algorithms to date scale poorly in large environments and for large planning horizons, making them unsuitable for systems with limited onboard computational resources.
Paden et al. \cite{Paden17:Verification} proposed a method to systematically construct admissible heuristics for use in kinodynamic planning using sum-of-squares (SOS) programming.
However, the resulting size of the SOS program requires heuristic calculations to be performed offline.
Other strategies involve learning a search heuristic or cost-to-go \cite{Kim20:LHA*, Thayer11:Learning, Bhardwaj17:Learning, Pandy22:Learning}. Kim et al. \cite{Kim20:LHA*} uses a neural network to approximate graph node distances and provides a sub-optimality bound on the solution. 
Reinforcement and imitation learning have also been proposed for learning search heuristics \cite{Thayer11:Learning, Bhardwaj17:Learning, Pandy22:Learning}, but these works focus on minimizing node expansions rather than ensuring admissibility, sacrificing the optimality guarantees of graph search. 

\section{PROBLEM FORMULATION} 
This work is concerned with solving the following trajectory planning problem: 
\begin{equation}
\label{eq:tpp}
    \begin{aligned}
    \min_{\mathbold{u} \, \in \, \mathcal{U}} \quad &J = r(T) +  \int_{0}^{T} q(\mathbf{x},\mathbold{u}) \,  dt \\
    \text{s.t.} \quad &\dot{\mathbf{x}} = A\mathbf{x}+B\mathbold{u} \\
    & \mathbf{x} \in \mathcal{X}_s, \ \mathbf{x} \notin \mathcal{X}_{obst}, \ \mathbold{u} \in \mathcal{U} \\
    &\mathbf{x}(0) = \mathbf{x}_0, \  \mathbf{x}(T) = \mathbf{x}_f,
    \end{aligned}
\end{equation}
where $\mathbf{x}\in \mathbb{R}^n$ is the state that must satisfy state $\mathcal{X}_s$ and obstacle (collision) $\mathcal{X}_{obst}$ constraints, $\mathbold{u} \in \mathbb{R}^m$ is the control input that must satisfy actuator constraints $\mathcal{U}$, $A \in \mathbb{R}^{n\times n}$ and $B\in \mathbb{R}^{n\times m}$ govern the system's dynamics and are assumed to take a multi-axis chain of integrators form, and $r : \mathbb{R}_+ \rightarrow \mathbb{R}_+$ and $q:\mathbb{R}^n \times \mathbb{R}^m \rightarrow \mathbb{R}_+$ are the terminal and stage cost, respectively. 
The goal is to find an optimal final time $T^*$ and feasible optimal state trajectory $\mathbf{x}^*(t)$ with a corresponding control input sequence $\mathbold{u}^*(t)$ for $t \in [0\ T^*]$ that steers the system from an initial state $\mathbf{x}_0$ to a desired final state $\mathbf{x}_f$ that minimizes the cost functional $J$. 
We assume that the desired final state $\mathbf{x}_f$ is reachable from the initial state $\mathbf{x}_0$ under actuator constraints $\mathcal{U}$.
While the dynamics are linear in \cref{eq:tpp}, many nonlinear systems can be placed into the linear control affine form if they are differentially flat, e.g., vertical take-off and landing (VTOL) vehicles like quadrotors, capturing a large class of systems of interest.
In many cases, the state vector can be $\mathbf{x} = ( \mathbold{r}, \,  {\mathbold{v}}, \, {\mathbold{a}}, \, \dots, \, \mathbold{r}^{(p-1)} )^\top$ and the control input can be $\mathbold{u} = \mathbold{r}^{(p)}$ where $\mathbold{r} = (x,\,y,\,z)^\top$ is the position of the vehicle in some reference frame.

\subsection{Background: Motion Primitives}
We define motion primitives to be closed-form solutions to specific optimal control problems. 
In this work, we will restrict our attention to the following two optimal control problems: the input-constrained minimum-time problem for a double integrator and the linear quadratic minimum time problem for a $p$-th order integrator.
We will briefly review each optimal control problem and the structure of its solution. 
The formulations will be presented for a single axis, but can be repeated for all three position coordinates.

\begin{figure*}[t]
  \centering
  \includegraphics[width=\textwidth]{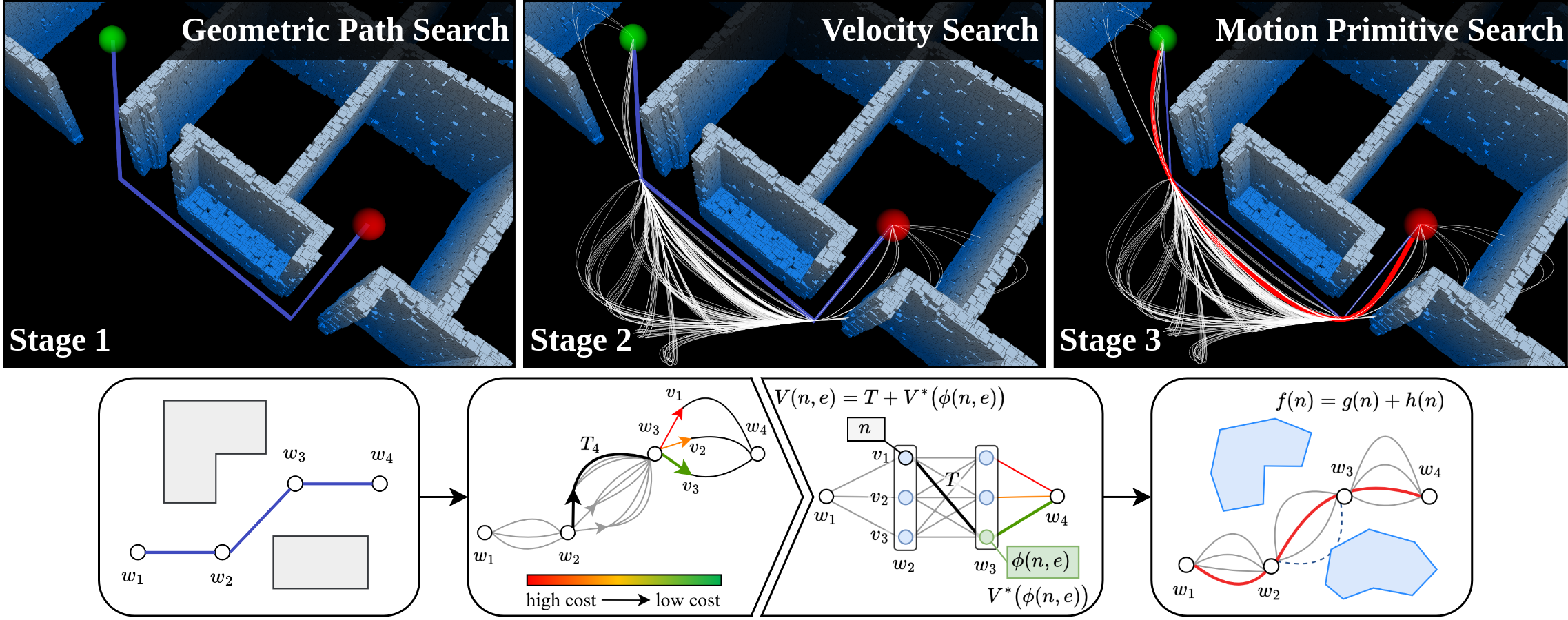}
  \caption{System architecture describing the three planning stages. Stage 1: A sparse geometric path is found via jump point search through the voxelized environment. Stage 2: A velocity state is then introduced at each waypoint and dynamic programming is used to recursively solve for the cost-to-go at each node. Stage 3:  A full motion primitive search informed by the previous stages is performed, and checks for collisions are completed to yield the final trajectory.}
  \label{fig:full_sys_arch}
\vskip -0.2in    
\end{figure*}

\textit{Minimum-Time Double Integrator:} Given an initial state $({s}_0, \, {{v}}_0) \in \mathbb{R}^2$ and desired final state $({s}_f, \, {{v}}_f) \in \mathbb{R}^2$, the minimum-time double integrator optimal control problem is 
\begin{align}
\min_{{u}} \quad & J = T \label{eq:double_min_time}
 \\
\text{s.t.} \quad &\ddot{{s}} = {u}, ~ |{u}|  \leq u_{max} \notag \\
& {s}(0) = {s}_0, \ {{v}}(0) = {{v}}_0 \notag \\
& {{s}}(T) = {{s}}_f,  \ {{v}}(T) = {{v}}_f, \notag
\end{align}
where the final time $T$ is free.
From Pontryagin's minimum principle, the optimal control solution is known to have a bang-bang control profile.
The switching times can be efficiently computed by solving a quadratic equation.
While each coordinate axis can be solved independently, all coordinate axis trajectories must share the same execution time. The minimum-time problem is therefore solved in two steps. First, the minimum-time horizon is found for each axis, yielding $T_x,\, T_y, \, T_z$. The overall trajectory duration is then defined as the limiting minimum-time horizon, $T^* = \text{max}\{T_x,T_y,T_z\}$. 
Next, the non-limiting axes are re-solved as fixed-time problems by applying the limiting minimum-time horizon $T^*$ as a known variable. We preserve the bang-bang control structure and solve a quadratic equation for the new reduced control bound, $\bar{u}\leq u_{max}$, which satisfies the longer fixed-time. 

\textit{Linear Quadratic Minimum-Time $p$-th Order Integrator:} 
Smooth trajectories can be generated by solving the linear quadratic minimum-time (LQMT) optimal control problem,
\begin{align}
    \min_{T,\,{u}} \quad & J = \rho \, T + \int_{0}^{T} {u}^2 \, dt \label{eq:triple_min_effort} \\
    \text{s.t.} \quad &{{s}}^{(p)} = {u} \notag \\
    & {{s}}(0) = {{s}}_0, \, {{v}}(0) = {{v}}_0, \dots, \, s^{(p\!-\!1)}(0) = s^{(p\!-\!1)}_0 \notag \\
    & {{s}}(T) = {{s}}_f, \, {{v}}(T) = {{v}}_f, \, s^{(k\!-\!1)}(T) \ \text{free for $3 \leq k \leq p$} \notag
\end{align}
where $\rho>1$ penalizes the final time.
The final time $T$ and all terminal states except position and velocity are free.  
The optimal trajectory is a polynomial in time so the cost functional can be expressed analytically in terms of $T$ and the known boundary conditions.
The final time can be found efficiently using a root-finding algorithm such as QR algorithm \cite{Demmel97:Applied}.
State constraints are omitted from \cref{eq:triple_min_effort} as it is more efficient to prune many candidate trajectories once the final time is known, as discussed in \cref{sec:pruning}.

\section{METHODOLOGY}

\begin{algorithm}[t]
    \small
	\SetAlgoLined
	\textbf{input:} $\mathcal{P} \leftarrow$ point cloud, $n_{s}\leftarrow$ start, $n_{g}\leftarrow$ goal; \, \\
	\textbf{output:} $\mathbf{s}^*(t)$; \\
	
	\BlankLine
	
    \footnotesize {\tcp{extract waypoints and path features}} 
    $\mathbf{w}, \mathbf{q}, \mathbf{\mathcal{H}} \leftarrow$ getGeometricPath($\mathcal{P}$, $n_{s}$, $n_{g}$);\\
    $\mathcal{G}\leftarrow$ buildVelocityGraph($\mathbf{w},\mathbf{q}, \mathbf{\mathcal{H}}$); \\ 
    \footnotesize {\tcp{get heuristic from recursive cost-to-go}} 
    $h(n) \leftarrow$ dynamicProgramming($\mathcal{G}, n_s, n_g$);\\
    $\mathcal{G}_{mp}\leftarrow$ buildFullStateGraph($\mathcal{G}$);\\ 
	\textbf{function} planPath($\mathcal{P}, \mathcal{G}_{mp}, n_s, n_g$): \\
        $n_{curr} = n_s$; \\
        \While{$n_{curr} \neq n_g$} {
        \footnotesize {\tcp{get node with lowest cost g(n)+h(n)}} 
        $n_{curr} = $ OPEN.pop(); \\
        CLOSED.insert($n_{curr}$); \\
        \If{$n_{curr} = n_g$}{ 
        break;}

        $\mathcal{E}_{n_{curr}} \leftarrow$ getSuccessors($n_{curr},\mathcal{G}_{mp}$); \\
        \For{$e \text{ in } \mathcal{E}_{n_{curr}}$}{
            \footnotesize {\tcp{collision and state constraint check}} 
            pruneMotionPrimitive($e, \mathcal{P}$); \\
            OPEN.insert($\phi(n,e)$); \\
        }
        }  
    $\mathbf{s}^*(t) \leftarrow$ getFinalMotionPrimitives($n_{curr}$, CLOSED);
	\BlankLine
    \caption{STITCHER Trajectory Planner}
	\label{alg:multilevel_search}
\end{algorithm}

STITCHER, detailed in \cref{alg:multilevel_search}, generates a full-state trajectory by \emph{stitching} collision-free, dynamically feasible trajectory segments together through graph search.
At its core, STITCHER searches over closed-form solutions, i.e., motion primitives, to optimal control problems of the form discussed above.
These solutions serve as a basis set for the solution space to \cref{eq:tpp}.
To achieve real-time performance, STITCHER utilizes a three stage planning process (see \cref{fig:full_sys_arch}). 
In Stage 1 (left), jump point search (JPS) is used to produce a sparse geometric path, i.e., waypoints, in the free space of the environment (line 3); this is standard in many planning frameworks.
In Stage 2 (middle), nodes representing sampled velocities at the waypoints are formed into a velocity graph where dynamic programming is used to compute the minimum time path from each node to the desired final state using a control-constrained double integrator model (lines 4-5). 
This step is critical for constructing an admissible heuristic to guide the full motion primitive search, and is one of the key innovations that enables real-time performance. 
Note that the optimal ``path" in velocity space is never used; computing the cost-to-go is the primary objective as it serves as an admissible heuristic for motion primitive search as shown in \cref{sec:DP}.
In Stage 3 (right), an A* search is performed using the heuristic from Stage 2 over motion primitives of a $p$-th order integrator where $p\geq2$ .
A greedy pre-processing step is used to construct a compact motion primitive graph (line 6), ensuring the search remains real-time. 
At this stage, position and all higher-order derivatives are considered, yielding a full state trajectory that can be tracked by the system (lines 7-21).
The remainder of this section expands upon each component of STITCHER.

\subsection{Stage 1: Forward Geometric Path Search}
STITCHER requires a sequence of waypoints that essentially guides the motion primitive search by limiting the size of the search space. 
This can be done by generating a collision-free geometric path (see \cref{fig:full_sys_arch} left) through the environment with  JPS or any other path-finding algorithm (e.g., Dijkstra, A*, RRT*) where the environment is represented as a 3D voxel grid with each grid cell containing occupancy information.
Let the collision-free, geometric path generated by a discrete graph search algorithm be composed of points $\mathcal{O}= \{\mathbold{o}_1,\mathbold{o}_2, ..., \mathbold{o}_H\}$ where $\mathbold{o}_i \in \mathbb{R}^3$. 
The set of points $\mathcal{O}$ is further pruned to create a sparse set of waypoints $\mathcal{W} = \{\mathbold{w}_1, \mathbold{w}_2, ..., \mathbold{w}_N\}$ where $N \leq H$ and $\mathbold{w}_i \in \mathbb{R}^3$. 
Sparsification is done by finding the minimal set of points in $\mathcal{O}$ that can be connected with collision-free line segments.
The geometric path search is used in line 3 of \cref{alg:multilevel_search}.

\subsection{State 2: Backward Velocity Search} \label{sec:phase_2_vel_graph}
The ordered waypoint set $\mathcal{W}$ found in Stage 1 only provides a collision-free geometric path through the environment.
In other words, the velocity, acceleration, and higher-order states necessary for tracking control are not specified. 
We propose creating a velocity graph (see \cref{fig:full_sys_arch} middle) where each node in the graph is defined by a position and velocity.
The positions are restricted to waypoint locations and $M$ velocities are sampled at each waypoint. 
More explicitly, for each waypoint $\mathbold{w}_i \in \mathcal{W}$, we sample a set of velocities $\mathcal{V} = \{\mathbold{v}_1, ..., \mathbold{v}_M\}$, where $\mathcal{V}$ is composed of candidate velocity magnitudes $\mathcal{V}_m$ and directions $\mathcal{V}_d$. 
The choice of $\mathcal{V}_m$ and $\mathcal{V}_d$ can impact the STITCHER's performance in terms of path optimality and computational complexity; this point will be discussed in more detail in \cref{sec:results}.

With the ordered waypoint $\mathcal{W}$ and sampled velocity $\mathcal{V}$ sets, we create a velocity graph $\mathcal{G} = (\mathcal{N}, \mathcal{E})$, where node $n \in \mathcal{N}$ is a given position and sampled velocity, i.e., $n = (\mathbold{w}_i,\,\mathbold{v}_j)$ with $\mathbold{w}_i \in \mathcal{W}$ and $\mathbold{v}_j \in \mathcal{V}$, and edge $e \in \mathcal{E}$ is the \emph{double integrator control-constrained minimum-time} motion primitive $\mathbold{r}(t)$ from \cref{eq:double_min_time} that connects neighboring nodes. 
Recall that the solution to \cref{eq:double_min_time} is fully determined by having an initial and final position and velocity pair which is precisely how each node in $\mathcal{N}$ is defined.
At this stage, collision and state constraints are not enforced to prevent candidate trajectories from being eliminated prematurely.

We recursively compute and store the ranked list of cost-to-go's $V_d:\mathcal{N}\times \mathcal{E} \rightarrow \mathbb{R}_+$ for each node $n \in \mathcal{N}$ and all connecting edges $e \in \mathcal{E}_n$ of $n$ where
\begin{align}
\label{eq:bellman}
    V_d(n,e) = \ell(n,e) + V_d^*\big(\phi(n,e)\big) ~~ \forall  e\in \mathcal{E}_n,
\end{align}
with the optimal cost-to-go $V_d^*(n) = \min_{e\in \mathcal{E}_n} V_d(n,e)$, the cost of taking edge $e$ from node $n$
being $\ell(n,e)$, and the node reached by taking edge $e$ being $\phi(n,e)$. 
The cost of taking an edge is given by $\ell(n,e) = T^*_d(n,e)$, where $T^*_d(n,e)$ is the minimum-time of trajectory $\mathbold{r}(t)$ connecting the states of node $n$ to the states of $\phi(n,e)$. 
Minimizing \eqref{eq:bellman} is the well-known Bellman equation, which is guaranteed to return the optimal cost-to-go.
In \cref{sec:DP}, we prove $V_d^*(n)$ for each node in a graph $\mathcal{G}$ is an admissible heuristic for an A* search over a broad class of motion primitives.
Building and searching the velocity graph are shown in lines 4-5 of \cref{alg:multilevel_search}.

\subsection{Stage 3: Forward Motion Primitive Search} \label{sec:stage3}
The cost-to-go's computed in Stage 2 for the sampled velocities at each waypoint serve as an admissible heuristic (formally defined later in \cref{def:admissable}) that guides an efficient A* search over motion primitives.
The motion primitives can be generated using any chain of integrators model of order at least two so long as i) the initial and final position and velocities match those used to construct the velocity graph $\mathcal{G}$ and ii) the allowable acceleration is maximally bounded by $u_{max}$ given in \cref{eq:double_min_time}.
It is important to note that the bound on allowable acceleration can be easily satisfied with user defined $u_{max}$ or  simply by applying the box constraint $\|\boldsymbol{a}\|_{\infty}\leq u_{max}$. 
The motion primitive search graph is denoted as $\mathcal{G}_{mp} = (\mathcal{N}_{mp},\, \mathcal{E}_{mp})$ where $\mathcal{N}_{mp}$ is the set of nodes, each corresponding to a state vector, and $\mathcal{E}_{mp}$ is the set of edges, each corresponding to a motion primitive that connects neighboring nodes.
A* search is used to meet real-time constraints where the search minimizes the cost $f(n) = g(n) + h(n)$ where $n\in \mathcal{N}_{mp}$ is the current node, $g: \mathcal{N}_{mp} \rightarrow \mathbb{R}_+$ is the cost from the start node $n_s$ to node $n$, and $h: \mathcal{N}_{mp} \rightarrow \mathbb{R}_+$ is the estimated cost from the current node $n$ to the goal node $n_g$.  
In the context of optimal control, $g$ is the cost accrued, i.e., the cost functional $J$, for a path from $n_s$ to $n$ whereas $h$ is the estimated cost-to-go, i.e., the estimated value function $V^*$, from $n$ to $n_g$.
In this stage, collision and state constraints are checked for each candidate motion primitive to ensure safety.
The methodology for both is discussed in \cref{sec:pruning}.
Each step of the motion primitive A* search is shown in lines 6-21 of \cref{alg:multilevel_search}.

\subsection{Pruning Infeasible \& In-Collision Motion Primitives}
\label{sec:pruning}

STITCHER guarantees safety by pruning motion primitives from the final search that violate constraints or are in collision.
For state and actuator constraints, many optimization-based planning approaches approximate the true physical constraints of the system with simple convex constraints, e.g., $\Vert \mathbold{{v}}\Vert_\infty \leq v_{max},~\Vert \mathbold{{a}}\Vert_\infty \leq a_{max},$ etc., to reduce computational complexity.
When polynomials are used to represent the optimal trajectory, imposing a convex hull constraint on the polynomial is one method for enforcing such state constraints \cite{Zhou19:Robust, Tordesillas22:FASTER}.
However, many of these approximations are made only to simplify the resulting optimization problem and do not accurately reflect the actual physical constraint, which can lead to conservatism.
STITCHER has the freedom to use a variety of methods to enforce state and actuator constraints. For the examples shown in this work, we uniformly sample candidate trajectories in time to check for constraint violations as it was found to be effective and efficient.
Sampling avoids mistakenly eliminating safe trajectories, and the observed computation time under the current discretization was better than using convex hulls.
Critically, sampling allows for the inclusion of more complex constraints, such as those that couple multiple axes. 
Examples of such constraints for a VTOL vehicle are given in \cref{sec:exp_constr}, such as individual motor thrust constraints, tilt angle, and linear and angular velocity magnitude, which couple multiple axes and depend nonlinearly on the desired position trajectory and its derivatives.

\begin{figure}[t]
  \centering
  \includegraphics[width=\columnwidth]{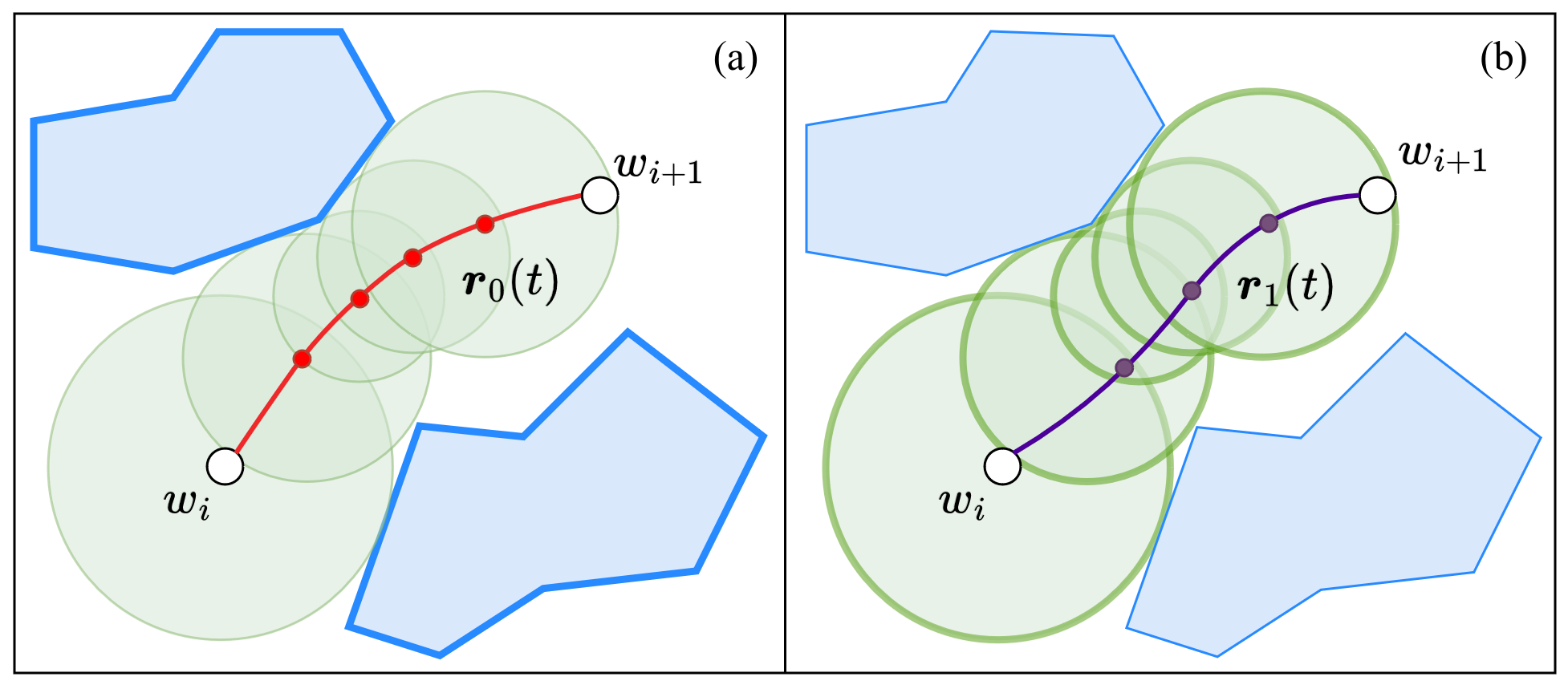}
  \caption{Removing redundant collision checks. (a): Motion primitive $\mathbold{r}_0(t)$ checks for collisions using \cite{Lopez17:Aggressive}. (b): Sampled points of $\mathbold{r}_1(t)$ are checked to lie within obstacle-free regions derived from $\mathbold{r}_0(t)$ calculations.}
  \label{fig:smart_collision}
\end{figure}

An efficient collision checking strategy was devised by constructing a safe set of spheres resulting from a sampling-based collision checking approach developed in \cite{Lopez17:Aggressive} (see \cref{fig:smart_collision}). 
The core idea from \cite{Lopez17:Aggressive} is that a trajectory can be intelligently sampled for collisions by estimating the next possible ``time-of-collision" along the trajectory by combining obstacle proximity and the vehicle's maximum speed. 
Leveraging this idea, further computation time savings can be achieved by storing and reusing nearest neighbor queries. 
\cref{alg:collision_check} details our strategy which takes in a \textit{k}-d tree data structure filled with points from a point cloud $\mathcal{P}$ of the environment.
For the first candidate motion primitive connecting two successive waypoints, we use the strategy from \cite{Lopez17:Aggressive} to intelligently sample for collisions while also storing the resulting set of safe, obstacle-free spheres $\mathcal{S}$, defined by center and radius vectors, ${\mathbf{c}}$ and $\mathbf{R}$ (line 4). 
For subsequent motion primitives between the same waypoint pair, a nearest neighbor query is only done if the primitive is expected to leave the initial set of obstacle-free spheres. 
For a point found to be within a certain sphere (lines 12-15), the next possible ``time-of-collision" is when the trajectory intersects the edge of the sphere, which can estimated by assuming the trajectory is emanating radially from the center of the sphere at maximum velocity (line 18).
The process is repeated until the final time horizon $T$ is reached.
Unlike spherical safety corridors, our safe set is only used as a means to avoid repeated calculation, and allows for on-the-fly addition of collision-free spheres.
In other words, our approach does not restrict solutions to remain within convex sets centered along the geometric path.
STITCHER thus has the flexibility to create and check candidate trajectories without being restricted to pre-defined safety spheres. 

\begin{algorithm}[t]
    \small
	\SetAlgoLined
	\textbf{input:} $\mathcal{T} \leftarrow \text{\textit{k}-d tree}, $\, $\mathbold{r}(t) \leftarrow \text{motion primitive}, \,$ \\
	\textbf{output:} bool \textit{collision} \\
	
	\BlankLine
	
    \If{$\mathcal{S} = \emptyset $}{ 
         \footnotesize {\tcp{initial collision check using \textit{k}-d tree}}
        $collision,~\mathcal{S} \leftarrow$ collisionCheckMap($\mathbold{r}(t), \, \mathcal{T}$); \\
        \Return{\textit{collision}}
    }
    \BlankLine
    $\tau \leftarrow 0$; \\
    $d_{min} \leftarrow \infty$; \\
    \While{$\tau \leq T$} {
        $\mathbf{d} \leftarrow$ calcDistToSphereCenters($\mathbf{c},\mathbold{r}(\tau)$) \\
        \For{$i = 1$ to $|\mathcal{S}|$}{
        \If{$d_i < R_i$ \& $d_i \leq d_{min}$ }{$d_{min} \leftarrow d_i$; \\ $k \leftarrow i$;}
        }

        \eIf{$d_{min} < \infty$}{\footnotesize {\tcp{update sample time}} $\tau \leftarrow \tau + (R_k-d_k)/v_{max}$; }{\footnotesize {\tcp{point outside spheres, use \textit{k}-d tree}} 
            $collision,~\mathcal{S} \leftarrow$ collisionCheckMap($\mathbold{r}(t), \, \mathcal{T}$); \\ 
        }
    }
    
    \Return{\textit{collision}}
    
	\BlankLine
    \caption{Collision Check}
	\label{alg:collision_check}
\end{algorithm}

\begin{figure*}[t]
 \centering
 \includegraphics[width=\textwidth]{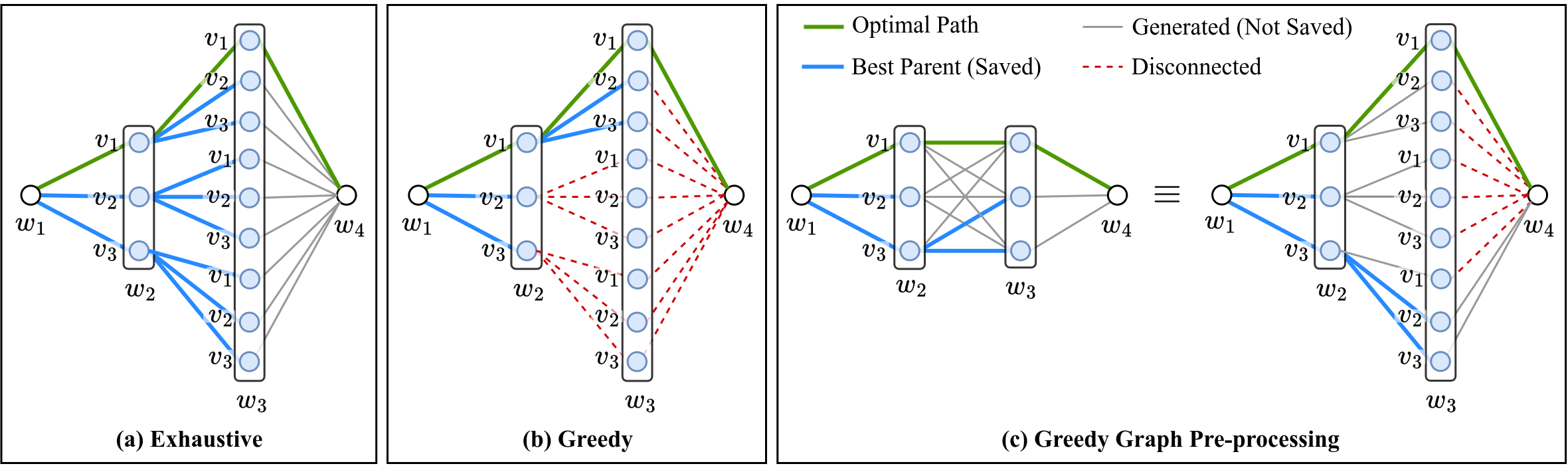}
\caption{Difference in motion primitive graphs with free terminal accelerations employing an (a): exhaustive search, (b): greedy search, and (c): greedy graph pre-processing step. Green edges constitute the optimal path, blue edges are the least cost parent saved in memory, grey edges are those evaluated but not saved and red dashed edges are connections that no longer exist. Greedy pre-processing (c) leads to some disconnections for graphs with over 3 waypoints, but maintains more connections than a greedy search (b), and does not suffer from exponential edge growth like the graph of an exhaustive search (a).}
\label{fig:greedy_graph} 
\vskip -0.2in 
\end{figure*}

\subsection{Motion Primitive Search Graph with Triple Integrator}
In many applications, a triple integrator model for generating motion primitives is sufficient because the resulting trajectory is smooth enough for most aerial vehicles to track as discontinuities in jerk typically do not severely degrade tracking. This was verified through hardware experiments discussed in \Cref{sec:hardware_results}. 
Motion primitives in our formulation \cref{eq:triple_min_effort} are derived imposing a free terminal acceleration.

Constructing a motion primitive search graph where nodes are a waypoint-velocity-acceleration tuple drastically increases both computation and memory consumption as the graph size depends on both the number of sampled velocities and accelerations. If the acceleration at each node, i.e., the final acceleration, ${\mathbold{a}}_f$, is free, the number of edges grows exponentially with respect to the number of waypoints (See \cref{fig:greedy_graph}a). 
Our formulation employs a greedy pre-processing step (\cref{fig:greedy_graph}c) in which the motion primitive search graph $\mathcal{G}_{mp}$ is identical in size to the velocity graph $\mathcal{G}$ (graph size detailed in Section \ref{sec:vel_graph_size}).
This formulation offers an advantage in terms of computational efficiency, as a full state trajectory is generated while the graph size is restricted by only the number of sampled velocities. 
Excluding acceleration information from graph creation assumes that the optimal stitched trajectory is only weakly dependent on acceleration at each waypoint. 
Specifically, instead of storing multiple acceleration states for each waypoint-velocity pair as in \cref{fig:greedy_graph}a, only one acceleration state is retained. Each node retains a single acceleration value corresponding to the free terminal acceleration of the best parent edge (shown in blue in \cref{fig:greedy_graph}c). 
The best parent edge is determined by the cost accrued and estimated cost-to-go as detailed in \cref{sec:stage3}.
This acceleration is then used as an initial condition for all outgoing motion primitives generated from the node. Consequently, portions of the exhaustive graph that would be generated from alternative accelerations are not explored, making the pre-processing step greedy.
It is important to note the difference between a greedy algorithm (\cref{fig:greedy_graph}b) and our greedy graph pre-processing step (\cref{fig:greedy_graph}c). While the pre-processing step does limit edges that could be generated in the exhaustive graph, it maintains more than a greedy algorithm. 
Further comparisons of the relative solution cost and computation speed using STITCHER's greedy pre-processing step versus the exhaustive search were conducted in Section \ref{sec:greediness_benchmarking}.

\section{THEORETICAL ANALYSIS}
In this section we prove STITCHER has bounded time and memory complexity by showing the velocity and motion primitive graphs are finite. 
We also show STITCHER is complete and optimal by proving the heuristic used in the motion primitive search is admissible.

\subsection{Velocity Graph Complexity} \label{sec:vel_graph_size}
The following proposition proves the size of the velocity graph $\mathcal{G}$ is finite and solely depends on the number of waypoints and sampled velocities; a property that also holds for the motion primitive graph $\mathcal{G}_{mp}$ by extension.
This result is critical as a finite graph yields \emph{known time complexity} for the motion primitive search.
In other words, an upper bound can be placed on the computation time of the planner given known quantities. 
This is in contrast to optimization-based methods where the time complexity depends on the number of iterations required to converge---which cannot be known \textit{a priori}---so the time to compute a trajectory via optimization does not have an \textit{a priori} bound.
 
\begin{proposition} \label{prop_bwd_size}
For $N$ waypoints and $M$ sampled velocities, the number of nodes $|\mathcal{N}|$ and edges $|\mathcal{E}|$ in graph $\mathcal{G}$ is
\begin{align}
|\mathcal{N}| &= (N-2)M + 2, \label{eq:bwd_total_nodes}\\
|\mathcal{E}| &= (N-3)M^2 + 2M ~~ \text{for}~N > 2.
\label{eq:bwd_total_edges}
\end{align}
\end{proposition}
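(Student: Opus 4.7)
The plan is to prove the two counting identities by a direct combinatorial decomposition of the velocity graph $\mathcal{G}$ according to which waypoint each node is associated with. First I would recall from the construction in \cref{sec:phase_2_vel_graph} that every node $n=(\mathbold{w}_i,\mathbold{v}_j)$ is anchored to exactly one waypoint $\mathbold{w}_i\in\mathcal{W}$, and that edges are only placed between nodes anchored at \emph{neighboring} waypoints (i.e., $\mathbold{w}_i$ and $\mathbold{w}_{i+1}$). This layered structure turns the size question into two elementary bookkeeping steps: count nodes per layer, then count edges per consecutive pair of layers.

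For the node count, I would observe that at the start waypoint $\mathbold{w}_1$ and the goal waypoint $\mathbold{w}_N$ the velocity is prescribed by the boundary conditions of the planning problem in \cref{eq:tpp}, so each of these endpoints contributes a single node rather than $M$. Every other waypoint $\mathbold{w}_2,\dots,\mathbold{w}_{N-1}$ is populated with the full sampled velocity set $\mathcal{V}$ of cardinality $M$. Summing yields
\[
|\mathcal{N}| = 1 + (N-2)M + 1 = (N-2)M + 2,
\]
which is \cref{eq:bwd_total_nodes}.

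For the edge count, I would use the fact that between two adjacent waypoint layers with $a$ and $b$ nodes respectively, the velocity graph is a complete bipartite graph (each source--target pair defines a unique double-integrator minimum-time primitive via \cref{eq:double_min_time}, and no collision or state constraints are enforced at this stage), contributing $ab$ edges. For $N>2$ there are three types of consecutive layer pairs: (i) start-to-first-interior, giving $1\cdot M = M$ edges, (ii) interior-to-interior, of which there are $N-3$ pairs each contributing $M\cdot M=M^2$ edges, and (iii) last-interior-to-goal, giving $M\cdot 1 = M$ edges. Summing produces
\[
|\mathcal{E}| = M + (N-3)M^2 + M = (N-3)M^2 + 2M,
\]
which is \cref{eq:bwd_total_edges}. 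Both counts are finite whenever $N$ and $M$ are finite, establishing the claim.

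There is no real obstacle here; the only subtlety worth stating carefully is that the start and goal each host only one velocity sample (they are fixed by $\mathbf{x}_0$ and $\mathbf{x}_f$), which is what produces the $+2$ in the node formula and the two linear $M$ terms, rather than $M^2$, at the boundary edge layers. The formula for $|\mathcal{E}|$ is stated for $N>2$; the $N=2$ case trivially collapses to one start--goal edge and should be noted as a boundary case outside the stated regime.
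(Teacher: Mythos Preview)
Your proof is correct and follows essentially the same layer-by-layer counting argument as the paper: both treat the start and goal waypoints as single nodes, the $N-2$ intermediate waypoints as contributing $M$ nodes each, and then count edges as complete bipartite connections between consecutive layers to obtain the $2M$ boundary terms and the $(N-3)M^2$ interior term. Your write-up is slightly more explicit about why the endpoints host only one velocity sample and about the bipartite structure, but the substance is identical to the paper's proof.
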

\begin{proof}
Using \cref{fig:full_sys_arch} (middle), the start and goal nodes contribute 2 nodes to the graph $\mathcal{G}$. 
For intermediate waypoints, given $M$ sampled velocities, there are $M$ nodes per waypoint.
As a result, $|\mathcal{N}| = (N-2)M + 2$ which is \cref{eq:bwd_total_nodes}.
For each edge, we consider the transition to successive waypoints. 
Ignoring the trivial $N = 2$ case where $|\mathcal{E}| = 1$, there are $M$ connections between the start node and next waypoint, which also has $M$ nodes. 
The same applies for connecting waypoint $w_{N-1}$ to the goal node, resulting in a total of $2M$ edges. 
For all other intermediate waypoint pairs, $M$ nodes connect to $M$ nodes at the next waypoint so there are $M^2$ edges. 
The total number of edges is then \cref{eq:bwd_total_edges}.
\end{proof}

\begin{corollary}
\label{cor:lqmt}
    The size of the motion primitive graph $\mathcal{G}_{mp}$ using Linear Quadratic Minimum Time (LQMT) motion primitives with free terminal acceleration for a triple integrator is identical to the velocity graph $\mathcal{G}$.
\end{corollary}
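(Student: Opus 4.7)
The plan is to reduce the counting to Proposition~\ref{prop_bwd_size} by arguing that the greedy pre-processing step renders $\mathcal{G}_{mp}$ structurally isomorphic to $\mathcal{G}$. The high-level idea is that although each edge of $\mathcal{G}_{mp}$ is now an LQMT trajectory of a triple integrator rather than a minimum-time trajectory of a double integrator, the indexing set used to identify nodes and the connectivity pattern between consecutive waypoints are unchanged, so the counts from \cref{eq:bwd_total_nodes,eq:bwd_total_edges} carry over.

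For the node count, I would first note that a naive LQMT construction would index nodes by (waypoint, velocity, acceleration) tuples, and that this is precisely what produces the exponential edge growth depicted in \cref{fig:greedy_graph}a. Because the LQMT formulation in \cref{eq:triple_min_effort} leaves the terminal acceleration free when $p=3$, the greedy pre-processing step is free to commit to a single acceleration value at each (waypoint, velocity) pair. Consequently the node set of $\mathcal{G}_{mp}$ is in bijection with the (waypoint, velocity) pairs of $\mathcal{G}$, so \cref{eq:bwd_total_nodes} applies verbatim and $|\mathcal{N}_{mp}|=|\mathcal{N}|$. For the edge count, I would use that once the initial position, velocity, and acceleration together with the terminal position and velocity are all fixed, the LQMT problem has a unique optimal trajectory (the root-finding step returns a single optimal $T$ and the corresponding polynomial is then fully determined). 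After greedy pre-processing assigns one acceleration per node, this means there is exactly one LQMT edge between each admissible pair of nodes at consecutive waypoints, so the connectivity follows the same $M$-to-$M$ pattern enumerated in the proof of \cref{prop_bwd_size}, giving $|\mathcal{E}_{mp}|=(N-3)M^2+2M$.

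The step I expect to be the main obstacle is reconciling the "identical in size" claim with the remark in the caption of \cref{fig:greedy_graph}c that greedy pre-processing can introduce some disconnections for graphs with more than three waypoints. I would want to verify that "identical in size" refers to the graph produced by the greedy acceleration assignment prior to any search-time disconnections, or equivalently that the disconnections in \cref{fig:greedy_graph}c only remove edges that are redundant in the sense that they would have been dominated in the search anyway. If instead those disconnections genuinely reduce the edge cardinality, the cleanest fix is to state \cref{cor:lqmt} as an upper bound on $|\mathcal{E}_{mp}|$, which still delivers the key practical consequence: a finite, non-exponential graph whose size is governed only by $N$ and $M$.
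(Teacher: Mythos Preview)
Your argument is essentially the paper's, only far more elaborate: the paper's entire proof is one sentence observing that because the terminal acceleration is free, nothing beyond waypoints and velocities is sampled, so $N$ and $M$ are the same for both graphs and \cref{prop_bwd_size} applies directly. Your bijection and uniqueness-of-LQMT reasoning is correct but more machinery than the corollary needs.

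Your worry about the disconnections in \cref{fig:greedy_graph}c is a misreading, and you should drop the proposed ``upper bound'' fallback. The red dashed edges in that figure are edges of the \emph{exhaustive} graph (\cref{fig:greedy_graph}a) that cannot be realized as paths through $\mathcal{G}_{mp}$ once the greedy step has fixed a single acceleration at each $(\mathbold{w}_i,\mathbold{v}_j)$ node; they are not edges missing from $\mathcal{G}_{mp}$ itself. The greedy graph still has exactly one LQMT edge from every node at waypoint $i$ to every node at waypoint $i{+}1$, so $|\mathcal{E}_{mp}|=(N-3)M^2+2M$ holds with equality. The ``disconnections'' capture a loss of \emph{path options} relative to the exhaustive construction (this is precisely the suboptimality quantified in Section~\ref{sec:greediness_benchmarking}), not a reduction in the edge cardinality of $\mathcal{G}_{mp}$.
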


\begin{proof}
    The proof is immediate since the terminal acceleration is free so $N$ and $M$ are identical for both graphs.
\end{proof}

\begin{remark}
    \Cref{cor:lqmt} can be generalized to any motion primitive search graph where the primitives are solutions to an optimal control problem with the dynamics being a chain of integrators and all terminal state derivatives of second order or higher are free.
    Note that this assumes the greedy graph pre-processing step still yields adequate performance.
\end{remark}

\subsection{Admissible Heuristic for Motion Primitive Search} \label{sec:DP}
It is well known that heuristics can be used to expedite searching through a graph by incentivizing the search to prioritize exploring promising nodes.
For example, in A* search, the next node explored is selected based on minimizing the cost $f(n) = g(n) + h(n)$, where $g$ is the stage cost to get from the start node $n_s$ to node $n$, and $h$ is a heuristic estimate of the remaining cost to reach the goal node $n_g$.
A* search is guaranteed to find an optimal solution so long as the heuristic function $h$ is admissible (see \cref{def:admissable}) \cite{Russell16:Artificial}.
Below, we prove the cost-to-go $V^*$ for each node in the velocity graph $\mathcal{G}$ calculated in Stage 2, i.e., the minimum-time to goal for a double integrator, is an admissible heuristic for an A* search over motion primitives of any higher-order chain of integrators.
 
\begin{definition}[{\cite{Russell16:Artificial}}] 
\label{def:admissable}
A function $h : \mathcal{N} \rightarrow \mathbb{R}$ is an admissible heuristic if for all $n \in \mathcal{N}$ then $h(n) \leq h^*(n)$, 
where $h^*$ is the optimal cost from $n$ to the goal node $n_g$.
\end{definition}

\begin{proposition}
\label{prop:c2g_opt}
    Consider the optimal control problem
    \begin{align}
    \label{eq:gen_opt}
            \min_{T,\,\mathbold{u}} \quad & J =  \rho \, T + \int^T_0 q(\mathbold{r},\mathbold{{v}},\dots,\mathbold{u}) \, dt  \\ 
            \mathrm{s.t.} \quad &  \mathbold{r}^{(p)} = \mathbold{u}, \ c(\mathbold{a}) \leq 0 \notag \\
            & \mathbold{r}(0) = \mathbold{r}_0, \, \mathbold{v}(0) = \mathbold{v}_0, \, \dots , \mathbold{r}^{(p\!-\!1)}(0) = \mathbold{r}_0^{(p\!-\!1)} \notag\\
            & \mathbold{r}(T) = \mathbold{r}_f, \, \mathbold{v}(T) = \mathbold{v}_f, \,  \mathbold{r}^{(k\!-\!1)}(T) \ \mathrm{free} \ \mathrm{for} \ 3 \leq k \leq p  \notag
    \end{align}
    where $q$ is a positive definite function, the system is at least second order ($p\geq 2$), and the position and velocities boundary conditions are identical to those of \cref{eq:double_min_time}, with all other boundary constraints free to specify. 
    If $u_{max}$ in \cref{eq:double_min_time} is the maximum possible acceleration achievable in a given axis imposed by $c(\mathbold{a}) \leq 0$, then the optimal cost-to-go $V^*$ from the initial conditions for \cref{eq:gen_opt} satisfies $V^*  \geq  \rho \, T_d^*$ where $T_d^*$ is the optimal final time for \cref{eq:double_min_time}.
\end{proposition}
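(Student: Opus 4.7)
The plan is to establish the inequality $V^* \geq \rho\, T_d^*$ by splitting it into two cleaner inequalities: $V^* \geq \rho\, T^*$, which exploits positivity of the stage cost, and $T^* \geq T_d^*$, which relies on embedding any feasible trajectory of \cref{eq:gen_opt} into the feasible set of \cref{eq:double_min_time}. The admissibility claim then follows directly from the optimality of $T_d^*$ as the double-integrator minimum time. This is conceptually a short argument; the only subtlety is making the feasibility embedding precise axis by axis.

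First I would handle the cost bound. Since $q$ is positive definite, $q(\mathbf{r},\mathbf{v},\dots,\mathbf{u}) \geq 0$ along any trajectory, so
\begin{equation*}
V^* = \rho\, T^* + \int_0^{T^*} q(\mathbf{r}^*,\mathbf{v}^*,\dots,\mathbf{u}^*)\, dt \;\geq\; \rho\, T^*,
\end{equation*}
where $T^*$ and $(\mathbf{r}^*,\mathbf{u}^*)$ denote the optimal final time and trajectory of \cref{eq:gen_opt}. Thus it suffices to prove $T^* \geq T_d^*$.

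Next I would argue the feasibility embedding. Let $(\mathbf{r}^*(t),\mathbf{u}^*(t))$ on $[0,T^*]$ be optimal for \cref{eq:gen_opt}. Define $\tilde{\mathbf{u}}(t) := \ddot{\mathbf{r}}^*(t) = \mathbf{a}^*(t)$ and view $\mathbf{r}^*$ as a solution of the double-integrator dynamics $\ddot{\mathbf{r}} = \tilde{\mathbf{u}}$. The acceleration constraint $c(\mathbf{a})\leq 0$ bounds the per-axis acceleration by $u_{max}$ (by hypothesis of the proposition), so $|\tilde{u}_i(t)| \leq u_{max}$ for each axis, matching the constraint in \cref{eq:double_min_time}. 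The boundary conditions on position and velocity at $t=0$ and $t=T^*$ coincide with those of \cref{eq:double_min_time}. Hence $(\mathbf{r}^*,\tilde{\mathbf{u}})$ is feasible for the double-integrator minimum-time problem on each axis, which by optimality of $T_d^*$ as the limiting per-axis minimum time gives $T^* \geq T_d^*$.

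Combining the two inequalities yields $V^* \geq \rho\, T^* \geq \rho\, T_d^*$, which is the desired bound. The hard part, such as it is, lies entirely in justifying the feasibility embedding: one must be careful that the acceleration bound $u_{max}$ used in the double-integrator problem genuinely upper-bounds the per-axis acceleration allowed by $c(\mathbf{a})\leq 0$ in \cref{eq:gen_opt}, and that no other constraint of \cref{eq:gen_opt} (such as higher-order terminal conditions being free) interferes with reinterpreting the trajectory as a double-integrator solution. Both conditions are explicitly given in the hypothesis, so the embedding is immediate and the proof is complete.
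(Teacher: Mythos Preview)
Your argument is correct and, in fact, cleaner than the paper's. The paper proceeds by case analysis on $p$: for $p=2$ it invokes monotonicity of the minimum time in the control bound (a tighter acceleration set $c(\mathbold{a})\leq 0$ inside the box $|u|\leq u_{max}$ can only increase the time), and for $p>2$ it argues by contradiction that matching $T_d^*$ would force the acceleration of \cref{eq:gen_opt} to replicate the bang-bang profile of \cref{eq:double_min_time}, which is impossible since $\mathbold{a}$ is continuous when $p>2$. Your feasibility embedding sidesteps both devices: by simply reading off $\tilde{\mathbold{u}}=\mathbold{a}^*$ from the optimal trajectory of \cref{eq:gen_opt} and checking it is admissible for \cref{eq:double_min_time}, you get $T^*\geq T_d^*$ uniformly in $p\geq 2$, and then $V^*\geq \rho T^*$ follows from $q\geq 0$. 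The paper's route yields the bonus of a strict inequality when $p>2$ (via the continuity observation), but for the proposition as stated your relaxation argument is more direct and avoids the case split entirely.
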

\begin{proof}
    First, consider the case when $p=2$. 
    For a given axis, if $u_{max}$ is chosen so that it exceeds the allowable acceleration imposed by $c(\boldsymbol{a}) \leq 0$, e.g., $u_{x,max} \geq \max_{a_x}c({\boldsymbol{a}})$ (see \cref{fig:coupled_constr}), then the optimal final time $T^*$ for $\cref{eq:gen_opt}$ will always be greater than that of \cref{eq:double_min_time} even when $q = 0$.
    Specifically, when $q = 0$, one can show the optimal final time for \cref{eq:double_min_time} increases as $u_{max}$ decreases. 
    Moreover, $T^*_d$ for \cref{eq:double_min_time} is guaranteed to exist and be unique \cite{Kirk04:Optimal}.
    Hence, by appropriate selection of $u_{max}$, we can ensure $T^* \geq T^*_d$ always, where equality holds when $q = 0$ and $c(\boldsymbol{a})$ is a box constraint. 
    If $q \neq 0$, then it immediately follows that $T^* > T^*_d$ because $q$ is positive definite by construction.
    Now consider the case when $p > 2$.
    We can deduce $V^*  >  \rho \, T^*_d$ by contradiction.
    Specifically, assume $T^* = T_d^*$ for $p > 2$.
    This would require $\boldsymbol{a}$ to be discontinuous in order to match the bang-bang acceleration profile of \cref{eq:double_min_time}.
    However, \cref{eq:gen_opt} is a continuous-time linear system that will not exhibit discrete behaviors, e.g., jumps, so it is mathematically impossible to generate an optimal control sequence where the acceleration profiles for \cref{eq:gen_opt,eq:double_min_time} will be identical. 
    It can then be concluded $V^* > \rho \, T^*_d$ for $p > 2$.
    Therefore, $V^* \geq \rho \, T^*_d$ for $p \geq 2$, as desired.
\end{proof}

\begin{remark}
    \cref{prop:c2g_opt} also holds when inequality state or actuator constraints in \cref{eq:gen_opt} are present, and when the terminal desired states are specified rather than free.
\end{remark}

The main result of this section can now be stated. 

\begin{theorem}
\label{theorem:admissible}
    The optimal cost-to-go for the minimum-time input-constrained double integrator optimal control problem \cref{eq:double_min_time} is an admissible heuristic for motion primitive search where the primitives are solutions to the optimal control problem of the form \cref{eq:gen_opt}.
\end{theorem}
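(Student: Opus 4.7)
The plan is to prove admissibility directly from the definition by showing, for every node $n \in \mathcal{N}_{mp}$, that the stored heuristic $h(n) = V_d^*(n)$ from Stage 2 lower-bounds the true optimal cost-to-go $h^*(n)$ for any path in $\mathcal{G}_{mp}$. First, I would fix $n \in \mathcal{N}_{mp}$, let $\pi^* = (e_1,\dots,e_K)$ be an optimal path in $\mathcal{G}_{mp}$ from $n$ to $n_g$, and write its cost as
\begin{equation*}
h^*(n) \;=\; \sum_{k=1}^{K} J_{\mathrm{LQMT}}(e_k) \;=\; \sum_{k=1}^{K}\!\left(\rho\, T_k^* + \int_0^{T_k^*}\! q(\cdot)\,dt\right).
\end{equation*}

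Next I would apply \cref{prop:c2g_opt} to each edge individually. Every edge $e_k$ connects two full-state nodes whose position/velocity endpoints agree with those used to build the corresponding double integrator problem \cref{eq:double_min_time}, and by construction $u_{max}$ was chosen so that it upper-bounds the per-axis acceleration permitted by $c(\boldsymbol{a})\le 0$. Hence \cref{prop:c2g_opt} gives $J_{\mathrm{LQMT}}(e_k)\ge \rho\, T_{d,k}^*$, where $T_{d,k}^*$ is the double-integrator minimum time between the endpoints of $e_k$. Summing and using $\rho \ge 1$ (recall $\rho>1$ in \cref{eq:triple_min_effort}) yields $h^*(n)\ge \rho\sum_k T_{d,k}^* \ge \sum_k T_{d,k}^*$.

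The remaining step is to connect the right-hand side to $V_d^*(n)$. Here I would use \cref{prop_bwd_size}: the velocity graph $\mathcal{G}$ contains every edge between consecutive waypoint--velocity layers, so the projection of $\pi^*$ onto its position--velocity coordinates is itself a valid path in $\mathcal{G}$ with cost $\sum_k T_{d,k}^*$. Because $V_d^*(n)$ is the Bellman-optimal cost over all such paths in $\mathcal{G}$ \cref{eq:bellman}, we obtain $V_d^*(n)\le \sum_k T_{d,k}^*\le h^*(n)$, giving admissibility in the sense of \cref{def:admissable}. Finally, since $h(n)$ is defined only at the waypoint--velocity nodes of $\mathcal{G}$ but motion-primitive nodes carry extra higher-order state, I would note that the projection map used above is well-defined by \cref{cor:lqmt} (free terminal acceleration means $\mathcal{G}_{mp}$ and $\mathcal{G}$ share the same position--velocity skeleton).

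The main obstacle I anticipate is the discrepancy between the dense velocity graph $\mathcal{G}$ on which the heuristic is computed and the greedily pre-processed motion primitive graph $\mathcal{G}_{mp}$ on which the A* search actually runs. The argument handles this by going through $\mathcal{G}_{mp}$'s \emph{own} optimal path and projecting it back into $\mathcal{G}$, exploiting the fact that $\mathcal{G}$ is fully connected between adjacent waypoint layers so any projected sequence is realizable; the greedy pruning of $\mathcal{G}_{mp}$ can only \emph{inflate} $h^*(n)$ relative to what one would get on an exhaustive graph, so the inequality $h(n)\le h^*(n)$ is preserved and if anything strengthened.
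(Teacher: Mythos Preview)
Your argument is correct and mirrors the paper's: both apply \cref{prop:c2g_opt} edge-by-edge along the optimal path in $\mathcal{G}_{mp}$ (the paper phrases this as ``by induction'') and then compare the resulting sum of double-integrator minimum times to the Bellman optimum $V_d^*$ in $\mathcal{G}$. Your version is more explicit---spelling out the path decomposition, the projection onto $\mathcal{G}$, and the observation that greedy pruning can only inflate $h^*$---but the underlying mechanism is identical; one minor quibble is that \cref{prop_bwd_size} concerns cardinality rather than layer-to-layer connectivity, though the latter holds by construction of $\mathcal{G}$.
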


\begin{proof}
    Let $\mathcal{G} = (\mathcal{N},\mathcal{E})$ be a graph with nodes being sampled velocities at waypoints and edges being the time-optimal trajectories using an input-constrained double integrator.
    Further, let $\mathcal{G}_{mp} = (\mathcal{N}_{mp},\mathcal{E}_{mp})$ be a graph with nodes being sampled velocities, accelerations, etc. at waypoints and edges being trajectories that are solutions to \cref{eq:gen_opt}. 
    Using the Bellman equation, the optimal cost-to-go $V_{mp}^*(n)$ for any $n \in \mathcal{N}_{mp}$ can be computed recursively.
    Using \cref{prop:c2g_opt}, $V_{mp}^*(n) \geq V_d^*(n^\prime)$ by induction where $V_d^*$ is the optimal cost-to-go for the minimum-time input-constrained double integrator with $n^\prime \in \mathcal{N}$. 
    Recognizing $\mathcal{N} \subseteq \mathcal{N}_{mp}$, $V_d^*(n^\prime)$ can be rewritten as $V_d^*(n)$.    
    Setting $h^*(n) = V^*_{mp}(n)$ and $h(n) = V_d^*(n)$, it can be concluded $h(n) \leq h^*(n)$.
    Therefore, by \cref{def:admissable}, the optimal cost-to-go computed for $\mathcal{G}$ is an admissible heuristic for the motion primitive search over $\mathcal{G}_{mp}$.
\end{proof}

The importance of \cref{theorem:admissible} follows from the well-known result that searching a graph with an admissible heuristic is \emph{guaranteed} to return the optimal path through the graph \cite{Russell16:Artificial}, and can significantly improve search efficiency because not every node in the graph has to be explored.
The effectiveness of the proposed heuristic both in terms of path quality and search times is analyzed in \cref{sec:heuristic_benchmark}.

\subsection{Summary}
The preceding theoretical analysis identifies two significant contributions to the computational efficiency of STITCHER. First, we show that the motion primitive search graph is finite. This result is significant because the graph size remains bounded and only scales with the number of velocity samples, as compared to an exhaustive motion primitive search (see \cref{fig:greedy_graph}a), which grows exponentially with respect to the number of waypoints. Next, we develop an admissible heuristic to reduce search effort while maintaining the optimality guarantees of A* search. Importantly, the heuristic is environment agnostic because it only relies on the estimated minimum-time cost between states and does not take into account obstacle or perception data. These contributions reduce the size of the search graph and the number of node expansions, providing a theoretical foundation for experimental results in \cref{sec:results}.

\section{SIMULATION RESULTS}
\label{sec:results}
\begin{figure}[t]
 \centering
 \includegraphics[width=\columnwidth]{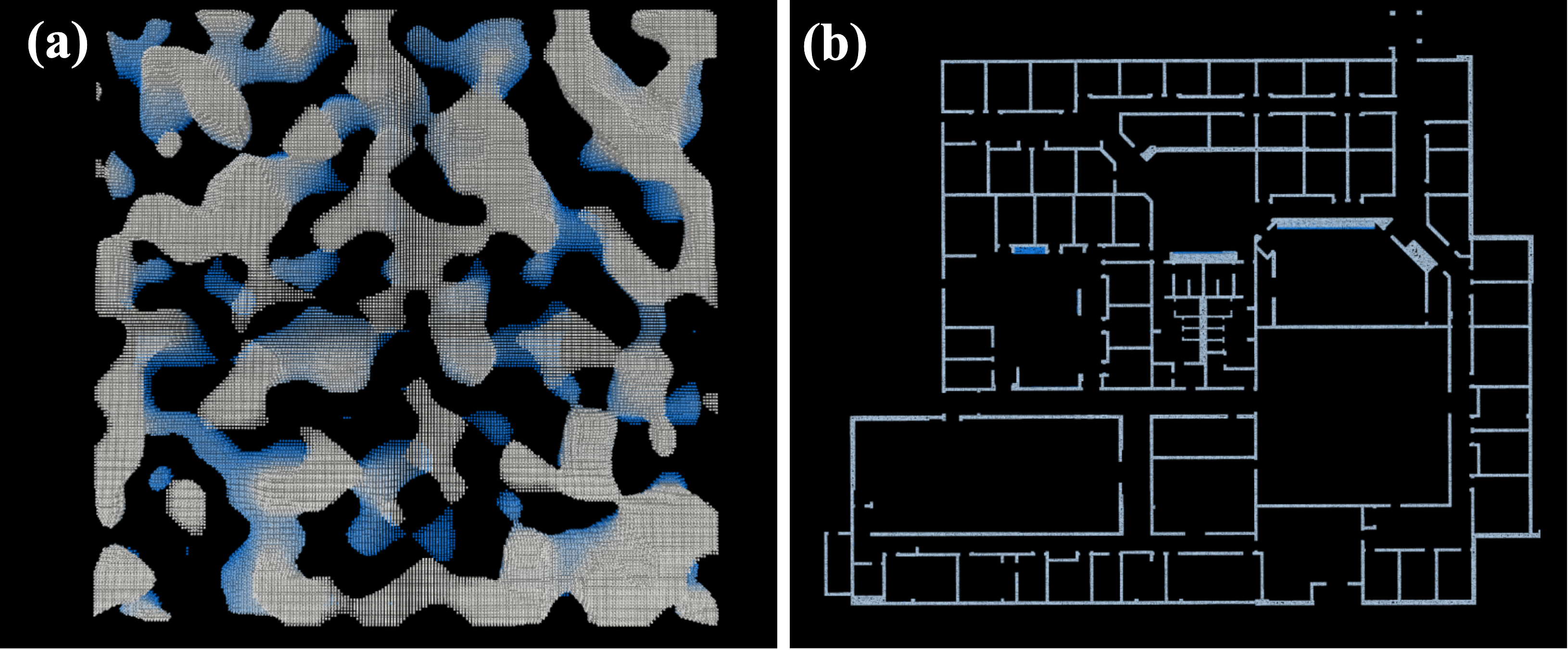}
\caption{Simulation test environments. (a): Willow Garage environment. (b): Perlin Noise environment.}
\label{fig:env_type} 
\vskip -0.1in
\end{figure}

This section contains an analysis of STITCHER. 
First, the state and actuator constraints enforced throughout the experiments are defined.
Second, we conduct a parameter sensitivity study to determine a suitable sampled velocity set (direction and speed) using a modified version of Dijkstra's algorithm that has access to a dense set of velocities and no constraints on computation time.
Third, STITCHER is compared to a non-greedy algorithm variant to characterize the impact of our greedy graph pre-processing step on solution cost and computation.
Fourth, we investigate the effectiveness of the heuristic proposed in \cref{sec:DP} in reducing the number of edges generated by STITCHER.
Fifth, we characterize the average computation time of the different components that make up STITCHER.
Sixth, a study of STITCHER constraining individual motor forces is presented to demonstrate the flexibility of our method in adhering to complex actuator constraints. 
Lastly, STITCHER is compared to three optimization-based modern planners \cite{Tordesillas22:FASTER, Wang22:Geometrically, Kondo26:Mighty} capable of running in real-time.

Simulation experiments were run in a Perlin Noise environment and the Willow Garage environment,
both with a volume of approximately $50\times50\times5$ m (see \cref{fig:env_type}).
Geometric paths with $N =$ 4, 6, 8 waypoints were found for different start and end locations in each environment. 
For all experiments, we imposed the following constraints assuming agile drone flight and reasonable thrust limits: $f_{min} = 0.85 \text{ m/s}^2$, $f_{max} = 18.75 \text{ m/s}^2$, $\theta_{max} = 60^\circ$, $\omega_{max} = 6 \text{ rad/s}$, $v_{max} = 10 \text{ m/s}$, and a time penalty $\rho = 1000$.
All reported times are from tests conducted on an 11$^\mathrm{th}$ generation Intel i7 CPU.

\subsection{Experimental Constraints} \label{sec:exp_constr}
 Throughout all experiments, trajectories are required to satisfy state and actuator limits of a quadrotor system. These limits are enforced during the motion primitive search using the sampling-based pruning method described in \Cref{sec:pruning}. Specifically, the following constraints are imposed:

\begin{equation}
    \label{eq:state_constr}
    \begin{aligned}
        \text{Thrust Magnitude:} & ~~ 
        0 < f_{min} \leq \|\boldsymbol{f}\|_2 \leq f_{max} \\
        \text{Thrust Tilt Angle:} & ~~ \|\boldsymbol{f}\|_2 \cos(\theta_{max}) \leq \hat{\boldsymbol{e}}_z^\top \boldsymbol{f} \\
        \text{Linear Velocity:} & ~~ \|\boldsymbol{v}\|_2 \leq v_{max} \\ 
        \text{Angular Velocity:} & ~~ \|\boldsymbol{\omega}\|_2 \leq \omega_{max},
    \end{aligned}
\end{equation}
where $\mathbold{f}$ is the mass-normalized thrust vector, $\theta$ is the thrust tilt angle, and $\mathbold{\omega}$ is the angular velocity. 
Note that differential flatness can be leveraged to express the angular velocity constraint in terms of derivatives of position.
\Cref{fig:coupled_constr} depicts the achievable mass-normalized thrust of a VTOL vehicle given thrust and tilt constraints in \eqref{eq:state_constr}. 
The constraints are nonconvex making it difficult to include in real-time optimization-based planners without some form of relaxation, e.g., as in \cite{Acikmese07:Convex} for a double integrator, which is tight, or a more conservative relaxation. 
Even more direct system constraints which limit the maximum force exerted by individual motors are highly nonconvex functions of the flat variables and their derivatives. 
While the majority of this paper showcases results that enforce constraints \eqref{eq:state_constr}, we include a case study of STITCHER constraining individual thrusters in Section \ref{sec:indiv_motor}.  

\begin{figure}[t]
  \centering
  \includegraphics[width=0.9\columnwidth]{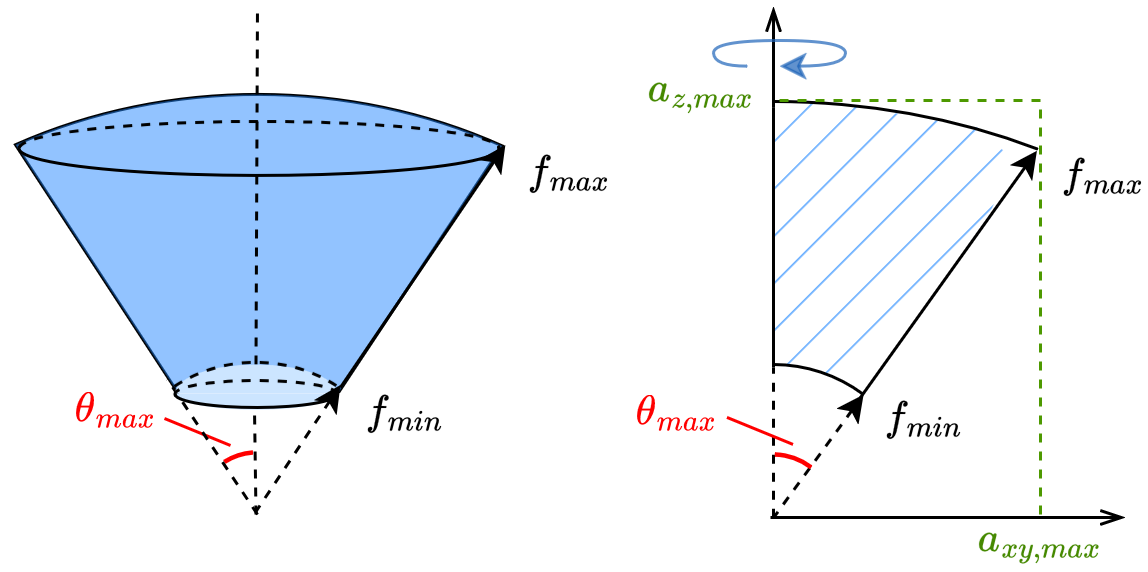}
  \caption{The achievable mass-normalized thrust (nonconvex) of a aerial VTOL vehicle with limits on minimum thrust $f_{min}$, maximum thrust $f_{max}$, and maximum tilt $\theta_{max}$.}
  \label{fig:coupled_constr}
\end{figure}

\subsection{Parameter Sensitivity Analysis: Sampled Velocity Set}
\label{sub:velocity_set}
STITCHER requires a discrete velocity set $\mathcal{V}$ composed of a set of magnitudes and directions. 
As shown in \cref{sec:phase_2_vel_graph}, the size of the velocity graph scales quadratically with the number of sampled velocities, so it is desirable to choose $\mathcal{V}$ to be small without degrading path quality. 
Hence, this section focuses on constructing a velocity sample set $\mathcal{V}$ that optimizes the trade-off between computation time (i.e., the time to generate a trajectory) and path quality as measured by execution time (i.e., the trajectory duration).
We conducted two studies to determine i) the predominate sampled velocity directions used by an offline version of Dijkstra's algorithm and ii) the trade-off between path cost and the number of sampled speeds. 
In the following comparison, Dijkstra's algorithm was modified to search over a dense set of velocities with 3611 terminal velocities sampled per waypoint; the method is referred to as Dense Dijkstra.
Dijkstra's algorithm is a complete and optimal search algorithm with respect to the set of actions \cite{Russell16:Artificial}, making it a suitable benchmark. 

\begin{figure}[t!]
  \centering
  \includegraphics[width=\columnwidth]{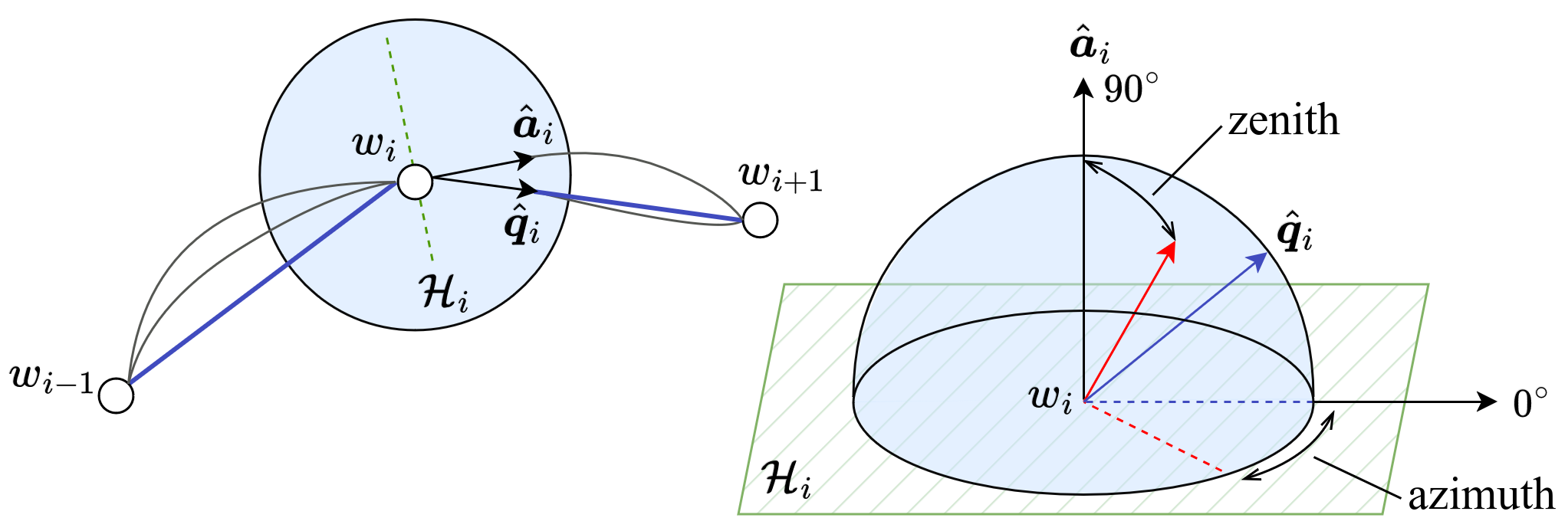}
  \caption{Velocity directions sampled at waypoint $w_i$, where $\hat{\mathbold{a}}_i$ is normal to the separating hyperplane $\mathcal{H}_i$ and $\hat{\mathbold{q}}_i$ is the heading toward $w_{i+1}$.}
  \label{fig:vel_sampling}
\vskip -0.1in   
\end{figure}

\begin{table}[t]
\caption{Frequency of Velocity Directions in Final Trajectory.}
\label{tab:freq_vel_dir}
\centering
\begin{tabular}{lcccccc}
\toprule
Zenith Angle  & 70$^\circ$ &80$^\circ$ & \multicolumn{2}{c}{90$^\circ$} & 100$^\circ$ & 110$^\circ$ \\
\cmidrule(r){2-7}
Frequency & 0\% & 13\% & \multicolumn{2}{c}{79\%} & 8\% & 0\%\\
\midrule
Azimuth Angle & -50$^\circ$ & -20$^\circ$ & -10$^\circ$ & 0$^\circ$ & 10$^\circ$ & 20$^\circ$ \\
\cmidrule(r){2-7}
Frequency & 4\% & 4\% & 17\% & 42\% & 29\% & 4\% \\
\bottomrule
\end{tabular}
\vskip 0.05in
\end{table}

\subsubsection{Sampled Direction} 
Dense Dijkstra was used to statistically identify velocity directions set $\mathcal{V}_d$ commonly employed in a variety of test cases.
Using \cref{fig:vel_sampling}, we define angles with respect to the hyperplane $\mathcal{H}$ with normal vector $\hat{\mathbold{a}}$ at the plane of symmetry between path segments connecting two waypoints.
The search was given velocity direction zenith and azimuth angles in the range [0$^\circ$, 180$^\circ$] and [-90$^\circ$, 90$^\circ$] sampled in 10$^\circ$ increments.

\cref{tab:freq_vel_dir} shows the frequency of velocity directions chosen by Dense Dijkstra across all motion primitives for six different path length trials in the Perlin Noise and Willow Garage environment. 
The velocities chosen by Dense Dijkstra align with the normal vector $\hat{\mathbold{a}}$ of the hyperplane 80\% of the time. 
From these results, sampling at the center and boundaries of a $20^\circ$ cone centered around a given $\hat{\mathbold{a}}$ will yield a suitable set of velocity directions.

\subsubsection{Sampled Speed} Using the velocity direction set $\mathcal{V}_d$ identified in the previous test, the performance of Dense Dijkstra was compared to STITCHER with different sampled speed sets $\mathcal{V}_m$. 
Each set $\mathcal{V}_m$ consists of $k$ discrete speeds sampled from the interval $[0,\,v_{max}]$, such that $|\mathcal{V}_m| = k$.
In order to ensure a fair comparison, these sets must be subsets of those used by Dense Dijkstra. 
In other words, we ensure $k \leq K$, where $K$ denotes the number of speeds sampled in Dense Dijkstra, and our sampled sets are as evenly distributed as possible within the interval. 

\Cref{fig:pathcost_planning_discr} shows the relative execution time (left) and computation times (right) for different sizes of $\mathcal{V}_m$ compared to Dense Dijkstra.
As expected, the execution time increases as the sampled speed set becomes sparser because the planner has fewer speed options available.
However, the observed increase in execution time is at most 8\% even with the sparsest speed set tested ($|\mathcal{V}_m| = 5$).
Critically, the minimal increase in execution time is accompanied by a significant reduction in computation time: a speed-up of four orders of magnitude with the sparsest sampled speed set tested.
Although a sample speed set size of $|\mathcal{V}_m| = 11$ yields nearly identical execution times to the dense set while offering three orders of magnitude improvement in computation time, adequate performance is achieved when $|\mathcal{V}_m| = 5$.
Hence, we use the set
$\mathcal{V}_m = \left\{0,\, 0.25\,v_{max}, \,0.5\,v_{max}, \,0.75\,v_{max}, \, v_{max}\right\}$ for the remainder of our analysis.
A representative speed profile for different sized speed sets is shown in \cref{fig:vel_norm}.

\begin{figure}[t!]
  \centering
  \includegraphics[width=\columnwidth]{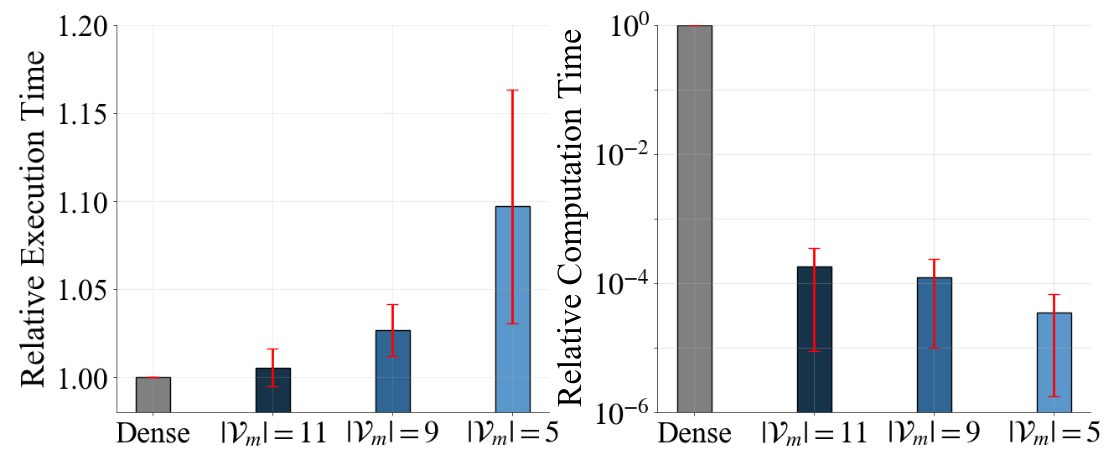}
  \caption{Analysis of speed discretization on execution and computation time. As the discretization of our method is increased, we converge to the Dense Dijkstra solution. We use $\mathcal{V}_m = 5$ for our experiments as it achieves significant computational advantages while retaining suitable performance.}
  \label{fig:pathcost_planning_discr}
\vskip -0.17in   
\end{figure}

\begin{figure}[t!]
 \centering
 \includegraphics[width=\columnwidth]{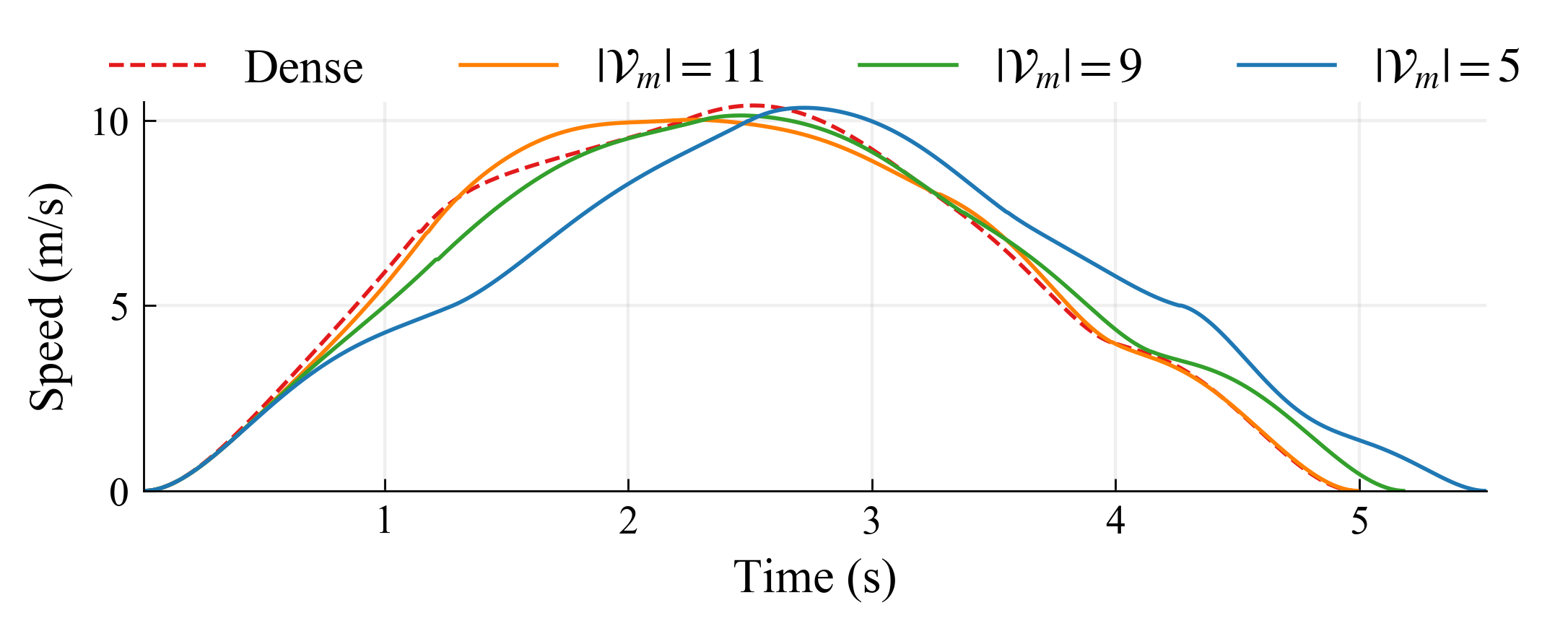}
\caption{Speed profiles of STITCHER using varied speed discretizations. As the number of speed samples increases, STITCHER converges to the optimal solution approximated by Dense Dijkstra.}
\label{fig:vel_norm}
\vskip -0.08in    
\end{figure}

\begin{figure}[t]
 \centering
 \includegraphics[width=\columnwidth]{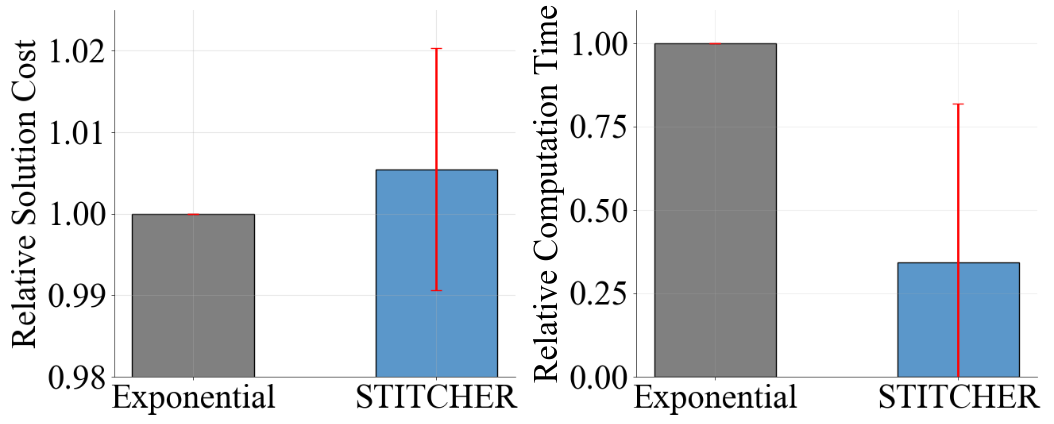}
\caption{Evaluation of the effect of a greedy pre-processing step on solution quality and computation speed. Results are from a monte carlo simulation of 50 random realizations for trajectories with between 2 and 10 waypoints.}
\label{fig:exp_v_greedy_comp_cost}
\vskip -0.1in    
\end{figure}

\subsection{Greediness Benchmarking} \label{sec:greediness_benchmarking}
STITCHER employs a greedy graph pre-processing step in order to generate a compact final search graph which enables real-time trajectory computation.
It is important to note that this greedy pre-processing step results in a graph that does not exhaustively account for all possible acceleration states at a given waypoint (see \cref{fig:greedy_graph}). 
Hence, we evaluated the extent to which this greedy pre-processing step leads to sub-optimal solutions by performing a Monte Carlo simulation comparing the path cost of STITCHER to that of an exhaustive exponentially-growing graph. 
The experiment includes 50 realizations of randomly generated initial and final positions, with the number of waypoints ranging between 2 and 10.

\Cref{fig:exp_v_greedy_comp_cost} shows the relative solution cost (left) and computation times (right) resulting from searching over the exponentially-growing graph and running STITCHER.
On average, STITCHER achieved a 0.5\% difference in solution cost, with the maximum difference being 7\%. 
Additionally, STITCHER achieves approximately 66\% speed-up in computation time as a result of the reduced graph size.
Excluding solution cases with 2 or 3 waypoints where the exponential growth of the graph including terminal acceleration states cannot be observed, STITCHER achieves an average of 82\% speed-up.
\Cref{fig:greedy_office} shows the speed profile of a case in which STITCHER achieved a solution cost that was 2\% different from that of the exponential graph. Qualitatively, the trajectories are very similar with only a minor deviation at the end of the trajectory. 
Therefore, the greedy graph pre-processing step results in a negligible increase in solution cost, but a substantial improvement in computation time. 

\begin{figure}[t!]
 \centering
 \includegraphics[width=\columnwidth]{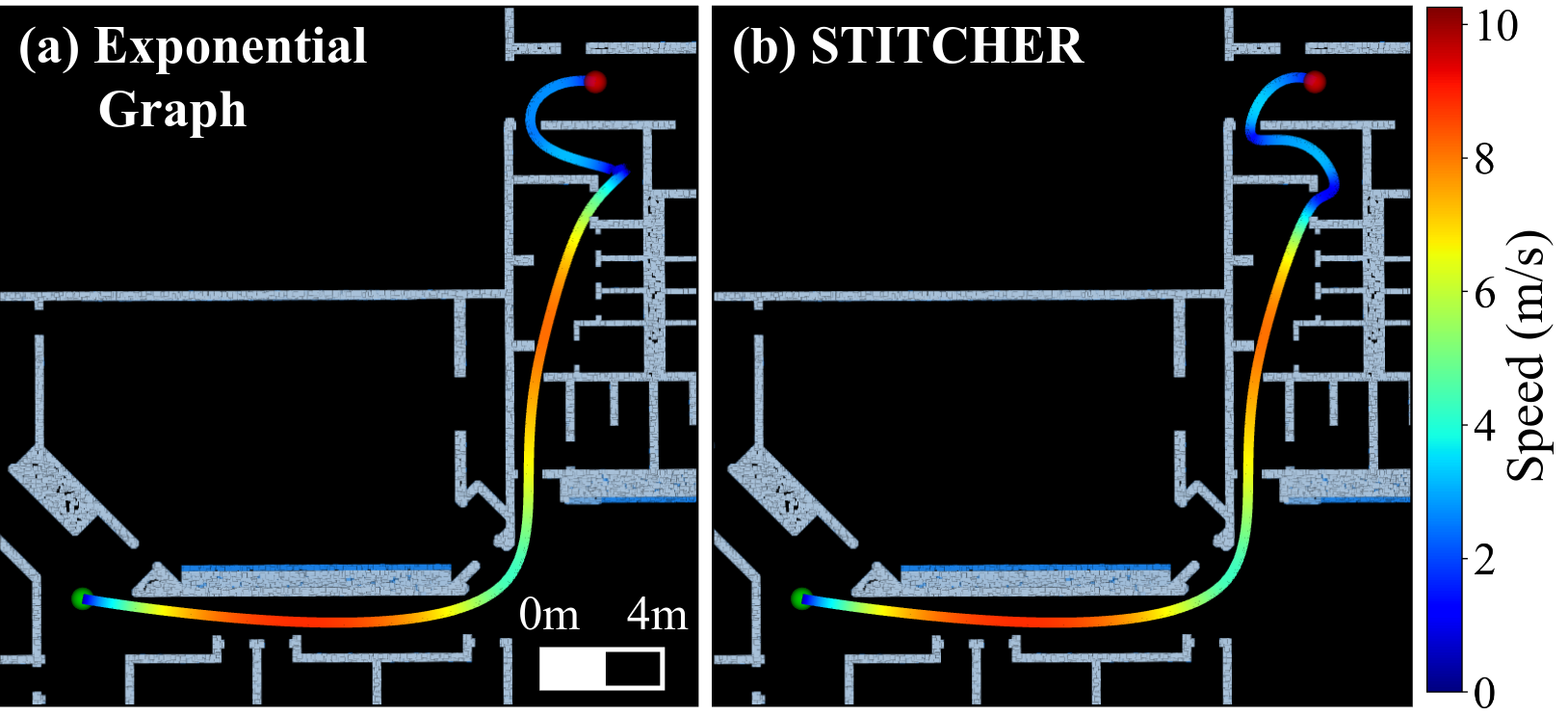}
\caption{The final speed profile given by a motion primitive search with (a) an exhaustive exponentially growing graph and (b) STITCHER.}
\label{fig:greedy_office}
\vskip -0.08in    
\end{figure}

\begingroup

\setlength{\tabcolsep}{8pt} 
\renewcommand{\arraystretch}{1.25} 
\begin{table*}[t]
\caption{Heuristic Evaluation with Varied Edge Costs and Acceleration Constraints via Percent Reduction in Generated Edges Compared to Dijkstra's.}
\vskip -0.1in  
\label{tab:heuristic_edge_explored}
\begin{center}
\begin{tabular}{c|c|c|c|c|c|c|c}
\hline
\multirow{4}{*}{\parbox{1cm}{\centering Edge Cost}} & \multirow{4}{*}{\parbox{2.5cm}{\centering Acceleration Constraint}} & \multicolumn{6}{c}{Map}\\
& & \multicolumn{3}{c|}{Perlin Noise} & \multicolumn{3}{c}{Willow Garage} \\
\cline{3-8}
& & $N=4$ & $N=6$ &  $N=8$ &  $N=4$ & $N=6$ & $N=8$\\
& & \% Red. & \% Red. & \% Red. & \% Red. & \% Red. & \% Red. \\
\hline
LQMT & Admissible Truncated Cone & 8 & 8 & 7 & 0 & 3 & 13 \\ 
Time & Admissible Truncated Cone  & 22 & 21 & 15 & 0.3 & 13 &  26 \\ 
Time & Admissible Box & 27 & 34 & 28 & 2 & 20 & 24 \\
Time & Inadmissible Truncated Cone & 45 & 44 & 22 & 9 & 38 & 39 \\
\hline
\end{tabular}
\end{center}
\vskip -0.3in  
\end{table*}

\endgroup

\subsection{Heuristic Benchmarking}\label{sec:heuristic_benchmark}
The quality of the heuristic used to guide the motion primitive search phase of STITCHER can be quantified by comparing the number of edges, i.e., motion primitives, generated by STITCHER with that of Dijkstra's algorithm.
The velocity magnitude and direction sets were kept constant across both planners with $|\mathcal{V}_m|=11$ and $|\mathcal{V}_d|= 3$.
The number of edges created is a better evaluation metric than the nodes explored because the main source of computation time comes from generating motion primitives and checking them for constraint violations. 
The effectiveness of the search heuristic depends on two main factors: how accurately it approximates the true cost-to-go, and whether the heuristic is admissible under the specified constraints. 
The edge cost used in the final search affects the tightness of the heuristic approximation, while the acceleration constraint imposed to prune motion primitives can influence admissibility. 
Therefore, we assess the effectiveness of STITCHER's heuristic with various graph edge costs and acceleration constraints to provide a general understanding of how search performance is impacted.   

\subsubsection{Varying Edge Cost}
We evaluate the quality of the heuristic with two different edge costs: (1) equivalent to the LQMT performance index and (2) execution time. 
\Cref{tab:heuristic_edge_explored} shows the percent reduction in the number of edges created for STITCHER compared to Dijkstra's algorithm.
Using the LQMT edge cost in the final search, STITCHER creates 7\% fewer edges on average compared to Dijkstra's algorithm in the Perlin Noise environment and 5\% in the Willow Garage environment.
Greater edge reduction can be achieved by defining the edge cost to be the trajectory execution time.
Recall that the search heuristic is the minimum time required to reach the goal using a double integrator model.  
Therefore, by using an edge cost soley defined by trajectory duration, the heuristic now more closely approximates the cost-to-go.
This tighter approximation is reflected in the results of \Cref{tab:heuristic_edge_explored}, where STITCHER generates an average of 20\% fewer edges in Perlin Noise and 13\% fewer in  Willow Garage.
The reduced effectiveness of the heuristic in Willow Garage was attributed to having more tight corridors, so more motion primitives were in collision (see \cref{fig:environment_comparison}).
Therefore, while environment dependent, using a search heuristic that closely approximates the remaining cost to the goal greatly reduces search effort. 

\begin{figure}[t!]
 \centering
 \includegraphics[width=\columnwidth]{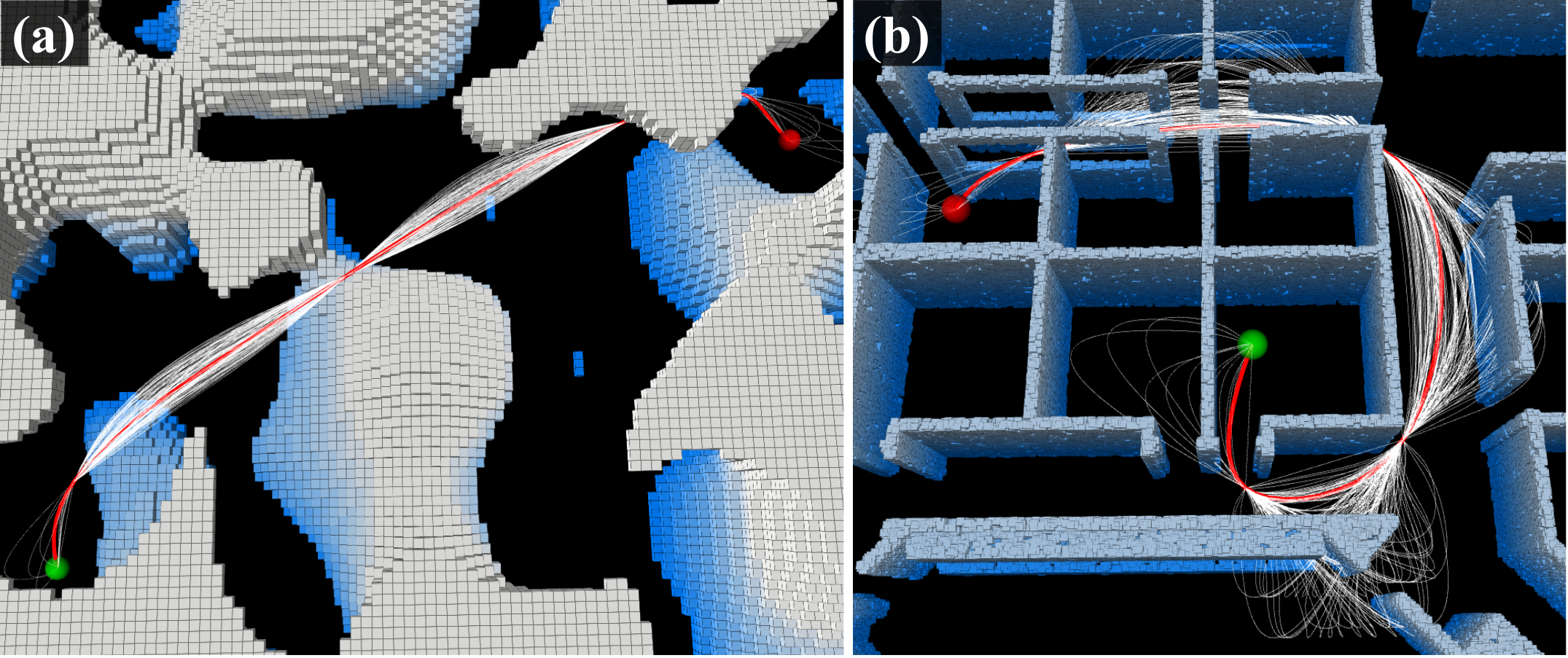}
 \caption{Final trajectories (red) through a six waypoint path in (a): Perlin Noise and (b): Willow Garage environment. The trajectory options (white) which inform the heuristic are more likely to be in collision in the Willow Garage environment due to tight corridors.}
\label{fig:environment_comparison}
\end{figure}

\subsubsection{Varying Acceleration Constraint}
Varying acceleration constraints may also lead to differing performance of the heuristic as this constraint directly affects admissibility. 
We tested three different acceleration constraints: (1) admissible truncated cone, (2) admissible box, and (3) inadmissible truncated cone (See \cref{fig:heuristic_accel_constr}). 
The truncated cone constraint comes from the resulting achievable mass-normalized thrust volume with constraints on the magnitude and tilt. 
This constraint $c({\boldsymbol{a}})$ is admissible when it is maximally bounded by the maximum control input $u_{max}$ imposed during heuristic generation in Stage 2 (see \cref{fig:heuristic_accel_constr}a), while the inadmissible variant applies a smaller $u_{max}$, e.g., $u_{x,max} \leq \max_{a_x}c({\boldsymbol{a}})$ (see \cref{fig:heuristic_accel_constr}c). 
An admissible box constraint is an acceleration constraint that perfectly matches that of Stage 2 (see \cref{fig:heuristic_accel_constr}b).
\Cref{tab:heuristic_edge_explored} shows that improved performance can be achieved by applying an admissible box constraint on acceleration compared to the admissible truncated cone.
In this case, STITCHER achieves approximately a 30\% reduction in edges in the Perlin Noise environment and a 15\% reduction in the Willow Garage compared to Dijkstra's algorithm.
Because the box constraint on acceleration is equivalent to that used in the heuristic generation, the STITCHER algorithm guides the search away from branches that are likely to exceed acceleration constraints. 
Further improvement in search speed may be observed while using an inadmissible truncated cone constraint, and is reflected in \cref{tab:heuristic_edge_explored}, where this algorithm variant achieves the highest percent reduction among all STITCHER variations. 
By allowing the final motion primitive search to have a larger allowance in achievable acceleration, the true edge cost may be smaller than that estimated by the heuristic. 
The use of inadmissible heuristics in graph search foregos desirable optimality guarantees, but can more rapidly motivate the search toward the goal as a larger weighting is placed on the heuristic cost. 
\begin{figure}[t]
 \centering
 \includegraphics[width=\columnwidth]{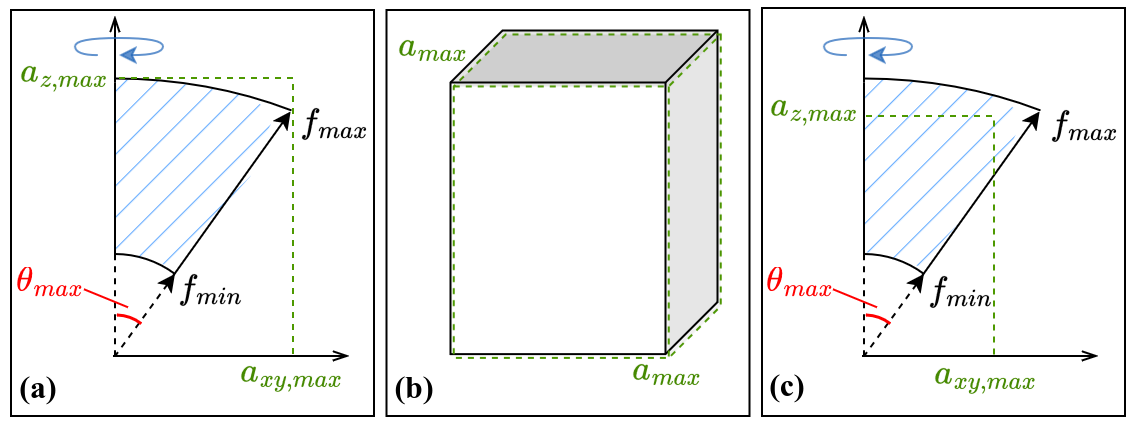}
 \caption{Different acceleration constraints used for heuristic study: (a) admissible truncated cone, (b) admissible box, and (c) inadmissible truncated cone. Green dashed outlines are Stage 2 constraints used for heuristic creation while solid black outlines denote constraints imposed for the final trajectory.}
\label{fig:heuristic_accel_constr}
\end{figure}

\subsubsection{Summary} This study showed that the proposed heuristic is effective, but its performance depends on the environment, graph edge cost, and the chosen acceleration constraint.
For the remaining experiments, we used an LQMT edge cost to ensure fair comparisons to state-of-the-art algorithms, and apply an admissible truncated cone acceleration constraint to retain graph search optimality guarantees and accurately reflect the true physical limits of quadrotor systems. 

\subsection{Computation Time Composition Analysis}
\Cref{fig:planning_time} shows the computation time of the different components of STITCHER averaged over six tested trials.
The average time to perform the velocity graph search is 1.8 ms, compared to 2.2 ms for the motion primitive search.
Although both searches have the same graph size, it is important to note that the motion primitives from \eqref{eq:triple_min_effort} are more time-consuming to generate than the minimum time trajectories used in the velocity graph. 
Therefore, the similar computation times arise from the search heuristic reducing the number of edges generated.
The low computation time of the velocity search further indicates its effectiveness in computing an informative admissible heuristic for the motion primitive search.
Finally, constraint-checking with uniform samples at every 0.1 s averaged only 0.3 ms, showing the efficacy of the relatively simple approach.
The time to build the \textit{kd}-tree is not included in \cref{fig:planning_time} as it is a one-time occurrence at simulation start-up. It takes approximately 27 ms and 83 ms to build the tree for the Perlin and Willow Garage environment respectively.

\begin{figure}[t]
 \centering
 \includegraphics[width=\columnwidth]{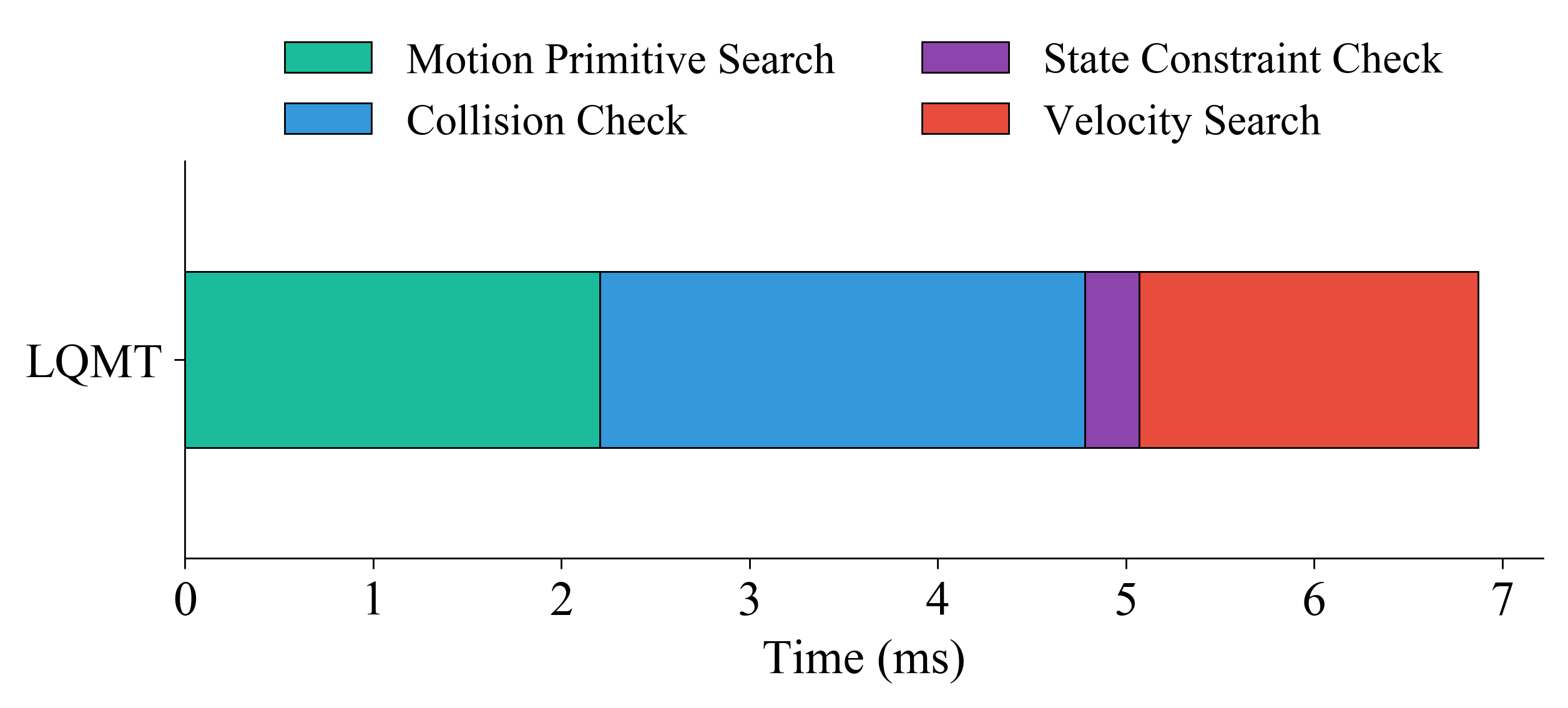}
 \caption{The average contribution of different path planning components.}
\label{fig:planning_time}
\end{figure}

\subsection{Constrained Motor Forces} \label{sec:indiv_motor}
We present a case study demonstrating STITCHER's ability to generate trajectories that satisfy individual thruster limits for a quadrotor, which also extends to lander vehicles. 
We assume the vehicle dynamics takes the form

\begin{equation*}
    \begin{aligned}
        m \ddot{\boldsymbol{r}} & = R\,  \boldsymbol{T} - m\boldsymbol{g} \\
        \dot{R} & = R \left[\boldsymbol{\omega}\right]_\times \\
        J \dot{\boldsymbol{\omega}} & = \boldsymbol{\tau} - \boldsymbol{\omega}\times J \boldsymbol{\omega},
    \end{aligned}
\end{equation*}
where $\boldsymbol{r}$ is the vehicle's position vector, $R$ is the rotation matrix that represents the vehicle orientation with respect to an inertial frame, $\boldsymbol{T} = T \hat{\boldsymbol{e}}_z$ is the thrust vector directed along the body $z$-axis, $\boldsymbol{g}$ is the gravity vector,  $\boldsymbol{\omega}$ is the angular velocity vector, $[\cdot]_\times$ is the cross product matrix, $J$ is the inertia tensor, and $\boldsymbol{\tau}$ is the body torque vector. 

For the case study, we assume a quadrotor with a mass $m = 0.8 \text{ kg}$, moment of inertia $J = \text{diag}(0.05,0.05,0.05) \text{ kg m}^2$, arm length $l = 12.5 \text{ cm}$, motor drag coefficient $c = 0.2$ m, minimum single-motor thrust $F_{min} = 0.16 \text{ N}$, and maximum single-motor thrust $F_{max} = 3.75 \text{ N}$. Each motor thrust $F_i$ is constrained by 
\begin{align*}
0 < F_{min} \leq F_i \leq F_{max}, \quad \forall i \in \{1,2,3,4\}.
\end{align*}
At any given time, individual motor thrusts $F_i$ are found by solving a linear system that depends on motor placement and physical vehicle properties, i.e., solving
\begin{align*}
\begin{bmatrix}
T\\
\tau_x \\
\tau_y \\ 
\tau_z 
\end{bmatrix} =
M
\begin{bmatrix}
F_1 \\
F_2 \\
F_3 \\
F_4
\end{bmatrix},
\end{align*}
for an invertible allocation matrix $M \in \mathbb{R}^{4\times4}$ with $T$ denoting the magnitude of the thrust $T = m\|\boldsymbol{f}\|_2$ and $\boldsymbol{f}$ if the mass normalized thrust vector. 
The torque $\boldsymbol{\tau} = [\tau_x$, $\tau_y$, $\tau_z]^\top$ along the trajectory is $\boldsymbol{\tau} = J\dot{\boldsymbol{\omega}} + \boldsymbol{\omega}\times J \boldsymbol{\omega}$ and  can be found via differential flatness, with components of $\boldsymbol{\omega}$ and $\dot{\boldsymbol{\omega}}$ given by
\begin{align*}
\begin{bmatrix}
{\omega}_y \\
-{\omega}_x \\
0
\end{bmatrix} = R^\top \dot{\hat{\boldsymbol{T}}},
~~
\begin{bmatrix}
\dot{\omega}_y \\
-\dot{\omega}_x \\
0
\end{bmatrix} = R^\top \ddot{\hat{\boldsymbol{T}}} - \boldsymbol{\omega} \times R^\top \dot{\hat{\boldsymbol{T}}},
\end{align*}
where $\hat{\boldsymbol{T}}$ is the normalized thrust vector.

\begin{figure*}[t]
\begin{subfigure}{\textwidth}
\centering
 \includegraphics[width=\columnwidth]{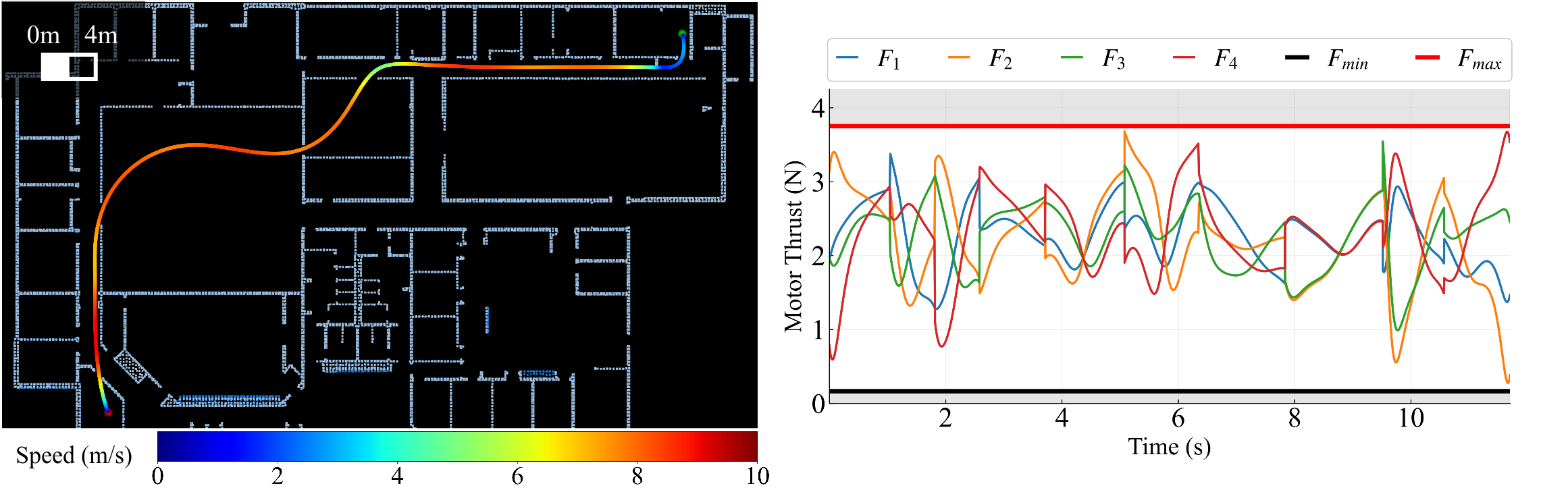}
 \caption{Constrained Individual Motor Commands.}
 \label{fig:motor_command_enforced}
 \vspace{6pt}
\end{subfigure}
\\
\begin{subfigure}{\textwidth}
\centering
 \includegraphics[width=\columnwidth]{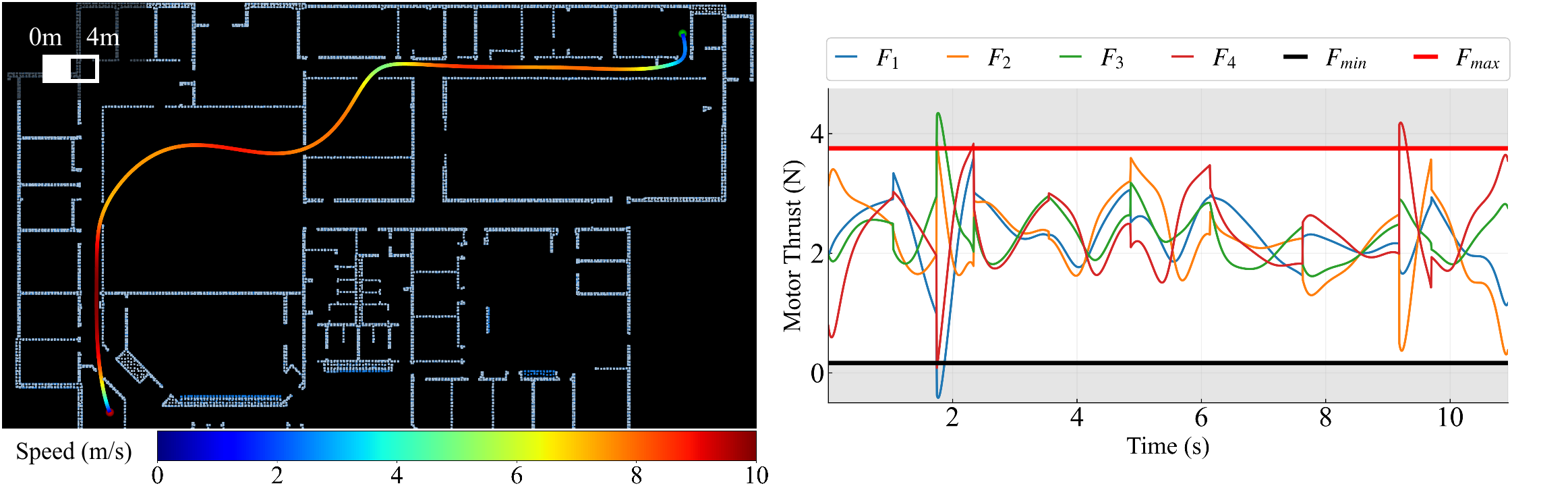}
 \caption{Unconstrained Individual Motor Commands.}
 \label{fig:motor_command_not_enforced}
\end{subfigure}
\caption{Experiment in the Willow Garage environment comparing trajectory plans that (a): constrain individual motors by the maximum thrust limits given by manufacturer specifications and (b): do not enforce individual motor constraints. Left: Trajectory colored by speed. Right: The corresponding thrust profile of individual motors depicting constraint satisfaction.}
\label{fig:motor_command_traj}
\vspace{-0.1in}
\end{figure*}

We conducted an experiment in the Willow Garage environment, where the tight corridors may make individual motor constraint satisfaction more challenging.
To further increase problem difficulty, we use a quadcopter model with a large moment of inertia, resulting in large thrust commands for rotational maneuvers.
In \cref{fig:motor_command_traj}, we use STITCHER to plan two trajectories: one in which thruster limits are enforced (\cref{fig:motor_command_enforced}) and one in which they are unenforced (\cref{fig:motor_command_not_enforced}). 
It is important to note that the unconstrained case still imposes a limit on the overall thrust (sum of individual thrusters) but does not enforce the constraint on a per-motor basis. 
As shown in \cref{fig:motor_command_enforced}, the constrained trajectory still achieves high speeds while strictly obeying the per-motor limits on thrust.
On the other hand, \cref{fig:motor_command_not_enforced} shows that the unconstrained trajectory violates these limits for several motors.
In one instance, $F_1$ requires a motor to provide a negative thrust which is physically impossible and would certainly lead to catastrophic failure as motor performance is already unreliable at low throttle.
These results showcase the versatility of our framework in adhering to highly nonconvex constraints that are directly limited by hardware.

\subsection{Comparison with State-of-the-Art}
We compared STITCHER with three state-of-the-art algorithms: GCOPTER \cite{Wang22:Geometrically}, FASTER \cite{Tordesillas22:FASTER}, and MIGHTY \cite{Kondo26:Mighty}.
These methods were selected for comparison because they similarly consider a known environment setting (i.e., plan on an infinite-horizon) \cite{Wang22:Geometrically} or can be faithfully converted to the infinite-horizon setting 
\cite{Tordesillas22:FASTER, Kondo26:Mighty}.
The motion primitive search planner \cite{Liu18:Search_Based} and mixed-integer optimization method \cite{Marcucci23:Motion} are omitted from the comparison, as their computation time exceeds several seconds. We note that there are many other novel receding-horizon planners \cite{Romero22:Model, Krinner24:MPCC++, Ren22:Bubble, Lu25:hepp} but conversion to infinite-horizon makes the comparison inequitable.

Having established a common planning setting, we briefly summarize the selected baseline algorithms used for comparison. 
FASTER solves a mixed integer quadratic program, with integer variables assigning polynomial segments to different safe flight corridors. 
GCOPTER performs an online optimization over a polynomial trajectory by incorporating state constraints into the cost functional and running a quasi-newton method.
Similarly, MIGHTY runs a quasi-newton optimization but parametrizes the trajectory as a Hermite spline. 
All algorithms rely on a sparse geometric path to form safe flight corridors, but do not enforce final trajectories to pass through waypoints.
We evaluate each planner by time (computation time versus execution time) and
failure (constraint violation or incomplete/no path found).
For the Perlin Noise environment, the path lengths were 12.5 m, 30 m, and 55 m while the path lengths for Willow Garage were 20 m, 25 m, and 30 m with $N =$ 4, 6, 8 waypoints for both environments.

\begin{table*}[t]
\caption{State-of-the-Art Comparison Time Analysis.}
\vskip -0.1in  
\label{tab:lqmt_SOA_comp}
\begin{center}
\begin{tabular}{c|c|c|c|c|c|c|c|c|c}
\hline
\multirow{2}{*}{Map}  & \multirow{2}{*}{$N$} & \multicolumn{4}{c|}{Computation time (ms)} &\multicolumn{4}{c}{Execution time (s)}\\
& & \cite{Tordesillas22:FASTER} & \cite{Wang22:Geometrically} & \cite{Kondo26:Mighty} & Ours &  \cite{Tordesillas22:FASTER} & \cite{Wang22:Geometrically} & \cite{Kondo26:Mighty} &  Ours  \\
\hline
\multirow{3}{*}{\parbox{0.9cm}{\centering Perlin Noise}} & 4 & 109 & 19.0 & 56.3 & \textbf{4.03} & 3.43 & 3.51 & 4.20 & 3.67\\
& 6  & 192 & 59.4 & 54.7 & \textbf{8.30} & 4.42 & 4.42 & 5.18 & 5.75\\
& 8 & 1020 & 116 & 78.7 & \textbf{14.0} & 7.62 & 7.34 & 9.02 & 9.62 \\
\hline
\multirow{3}{*}{\parbox{0.9cm}{\centering Willow Garage}} & 4 & 240 & 44.1 & 33.8 & \textbf{6.04} & 4.40 & 4.37 & 5.97 & 4.53 \\
& 6 & 5240 & 59.9 & 41.9 & \textbf{11.7} & 8.25 & 5.97 & FAILED & 6.30 \\
& 8 & 10400 & 110 & 103 & \textbf{14.3} & 10.5 & FAILED & FAILED & 7.95\\
\hline
\end{tabular}
\end{center}
\vskip -0.15in  

\end{table*}

\subsubsection{Time Analysis}
\Cref{tab:lqmt_SOA_comp} compares the computation times and trajectory execution times of each planner given the six different geometric paths.
STITCHER's computation times are faster than those measured for FASTER, GCOPTER, and MIGHTY for every test, and are up to 7x faster than MIGHTY, 8x faster than GCOPTER and 700x faster than FASTER. 
In some cases the baseline planners achieved lower execution times.
This can be attributed to their ability to treat waypoints as soft constraints, i.e., the trajectory is only required to pass nearby a waypoint rather than through it, as well as the chosen resolution of state samples in STITCHER. 
The difference is most noticeable in the Perlin Noise environment where more free space is available for waypoint adjustment. 
\Cref{tab:sampling_wp_ablation} compares the planning and execution times of GCOPTER \cite{Wang22:Geometrically} to different algorithm variants of STITCHER evaluating the effect of sampling and waypoint selection in the Perlin Noise environment. 
Case S increases the sampled speed set to $|\mathcal{V}_m|=9$, Case W utilizes the optimized waypoints from GCOPTER, and Case SW is a combination of both strategies. All case variants reduce STITCHER's baseline execution time, motivating future work in incorporating waypoint flexibility.

\begin{table}[t]
\caption{Effect of Different Waypoints and Sampling in Perlin Noise.}
\vskip -0.1in  
\label{tab:sampling_wp_ablation}
\begin{center}
\begin{tabular}{c|c|c|c|c|c|c|c|c}
\hline
\multirow{3}{*}{$N$} & \multicolumn{4}{c|}{Computation} time (ms) &\multicolumn{4}{c}{Execution time (s)}\\
& & \multicolumn{3}{c|}{STITCHER} & &  \multicolumn{3}{c}{STITCHER} \\
& \cite{Wang22:Geometrically} & S & W & SW&  \cite{Wang22:Geometrically} & S & W & SW \\
\hline
4 & 27.2 & 14.8 & 3.01 & 10.5 & 3.51 & 3.44 & 3.61 & 3.56\\
6 & 72.0 & 44.7 & 13.8 & 42.4 & 4.42 & 5.43 & 5.38 & 4.90\\
8 & 130 & 64.1 & 24.6 & 86.9 & 7.34 & 9.13 & 8.99 & 9.26 \\
\hline
\end{tabular}
\end{center}
\footnotesize{\textbf{Abbreviations}: S -- speed sample set expanded to $|\mathcal{V}_m| = 9$; W -- optimized waypoints from \cite{Wang22:Geometrically} used; SW -- combination of S and W.}
\end{table}

\begin{table*}[t!]
\caption{State-of-the-Art Comparison Failure Analysis.}
\vskip -0.1in  
\label{tab:SOA_failure}
\begin{center}
\begin{tabular}{c|c|c|c|c|c|c|c|c|c|c|c|c}
\hline
\multirow{2}{*}{Map} & \multicolumn{4}{c|}{No Path Found (\%)}  & \multicolumn{4}{c|}{State Violation (\%)} & \multicolumn{4}{c}{Collisions (\%)} \\
& \cite{Tordesillas22:FASTER} & \cite{Wang22:Geometrically} & \cite{Kondo26:Mighty} &  Ours &  \cite{Tordesillas22:FASTER} & \cite{Wang22:Geometrically} & \cite{Kondo26:Mighty} &   Ours &
\cite{Tordesillas22:FASTER} & \cite{Wang22:Geometrically} & \cite{Kondo26:Mighty} &  Ours \\
\hline
\parbox{0.82cm}{\vspace{0.08cm} \centering Perlin Noise\vspace{0.08cm}} & 6 & \textbf{0} & 2 & \textbf{0} & 2 & 4 & \textbf{0} & \textbf{0} & 6  & 6 & 26 &\textbf{0} \\
\hline
\parbox{0.82cm}{\vspace{0.08cm}\centering Willow Garage\vspace{0.08cm}} & 14 & \textbf{0} & 2 & 2 & 6 & \textbf{0} & \textbf{0} & \textbf{0} & 2 & 20 & 26 &\textbf{0} \\
\hline
\end{tabular}
\end{center}
\vskip -0.2in 
\end{table*}

\subsubsection{Failure Analysis}
A Monte Carlo simulation composed of 50 realizations was conducted to evaluate the different modes of failure experienced by each planner. \Cref{tab:SOA_failure} compares the rate at which each planner does not find a path, generates a trajectory violating state constraints or generates a trajectory in collision. 
The ``No Path Found" metric includes a numerical solver not returning a solution, or if the solution does not reach the goal. 
For the baseline planners,
this can occur due to numerical instability or when a feasible solution is not within the calculated set of safe corridors. 
Conversely, in STITCHER's graph search framework, a graph disconnection occurs when a feasible solution is not within the discrete set of sampled states. 
Across all test cases, STITCHER is the only planner to achieve both zero state violations and collisions. 
The single failure point was a graph disconnection in the Willow Garage environment, where narrow corridors make collisions more likely.
In the Willow Garage environment, the number of failed solutions by FASTER and collisions by GCOPTER significantly increases, while MIGHTY collides at the same rate across both environments. 
This result highlights a weakness of soft constraint penalties as safety can be exchanged for other objective costs.
In contrast, STITCHER never violates constraints (state, control, or obstacles) because all constraints are strictly enforced.
As an example, \cref{fig:mass_norm_thrust_ex} is a representative mass-normalized thrust profile generated by STITCHER which remains within the valid limits.

\begin{figure}[t]
 \centering
 \includegraphics[width=\columnwidth]{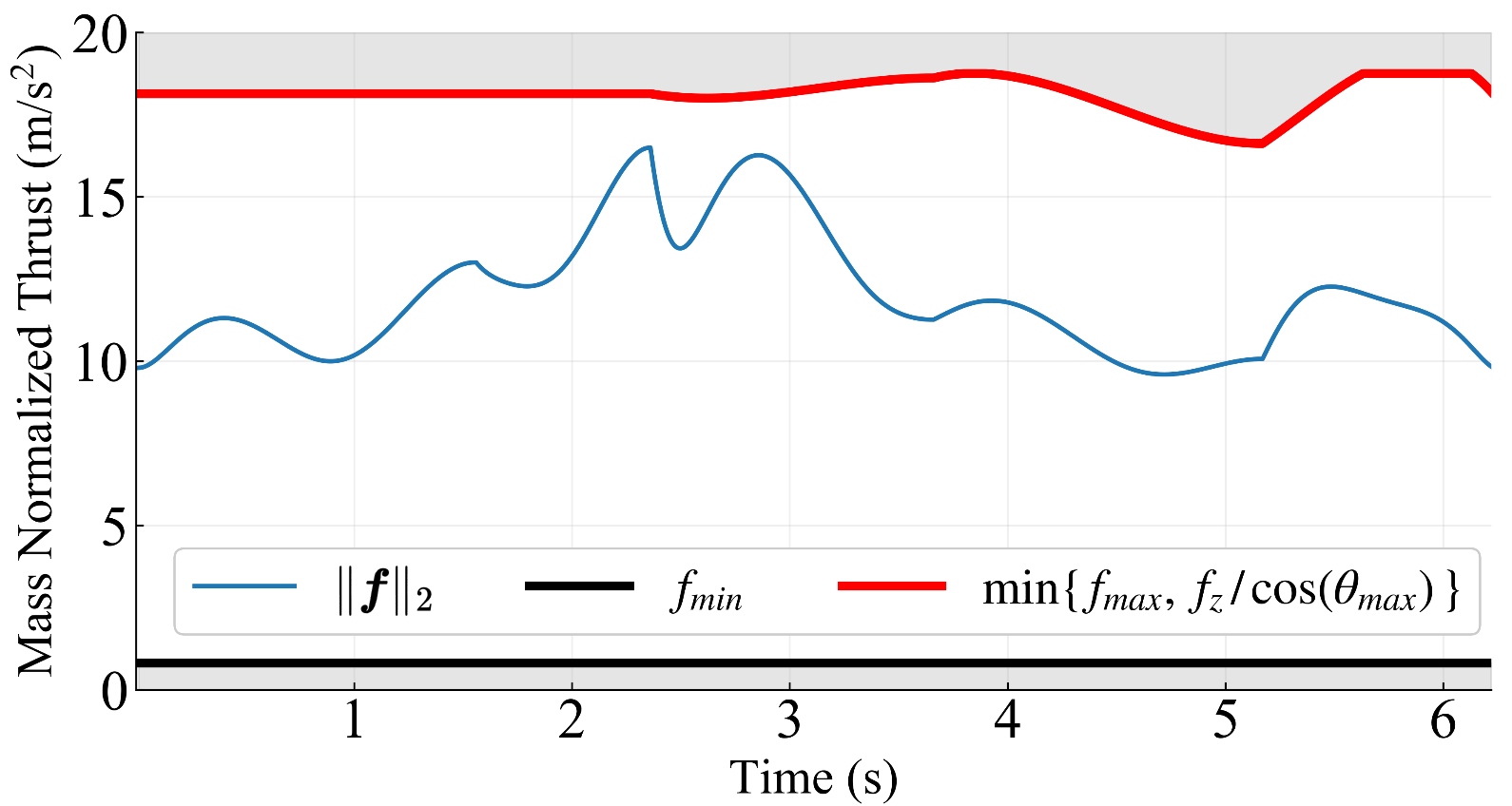}
 \caption{Example of mass-normalized thrust profile strictly satisfying constraints in Willow Garage environment.}
\label{fig:mass_norm_thrust_ex}
\end{figure}

\vfill

\section{HARDWARE RESULTS} \label{sec:hardware_results}

\begin{figure}[t]
 \centering
 \includegraphics[width=\columnwidth]{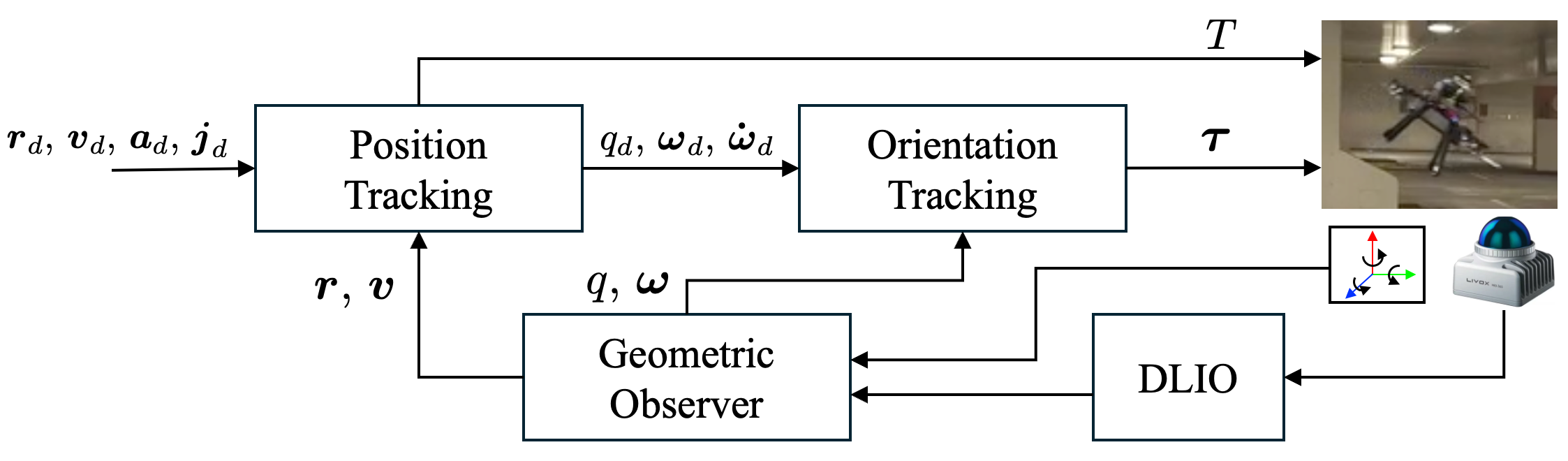}
 \caption{Control and estimation architecture for custom quadrotor platform used in hardware experiments. The vehicle is equipped with a 10th generation Intel NUC computer, Teensy 4.0 microcontroller, an InvenSense ICM-20948 IMU, and a Livox Mid-360 LiDAR.}
\label{fig:UAV_hardware}
\end{figure}

The custom quadrotor used for hardware experiments is shown in Figure \ref{fig:UAV_hardware}. Onboard hardware includes a 10th generation Intel NUC with i7 CPU, Teensy 4.0 microcontroller, an InvenSense ICM-20948 IMU, and a Livox Mid-360 LiDAR. The system was flown in an underground parking structure covering a flight area of approximately 45m x 35m x 3m.

\subsection{Control and Estimation Architecture}
The control and state estimation architecture used in the hardware experiments is shown in \cref{fig:UAV_hardware}.
We employed the quaternion-based cascaded geometric controller \cite{Lopez25:New}, which builds upon \cite{Frazzoli00:Trajectory,Lee10:Geometric}, to track the trajectory generated by STITCHER with the desired acceleration and jerk as feedforward. 
The acceleration and jerk feedback needed by the cascaded geometric controller were computed by numerically differentiating the velocity feedback signal and applying a low-pass filter.
The outer loop PID position tracking controller ran on the  NUC flight computer, and the inner loop quaternion-based attitude tracking controller from \cite{Lopez17:Sliding} ran on the Teensy microcontroller; the two processors communicate via serial. 
The outer loop control rate was 100Hz while the inner loop control rate was 500Hz. 
For estimation, the Direct LiDAR-Inertial Odometry (DLIO) algorithm \cite{Chen23:DLIO} ran on the NUC computer, which provided pose estimates from inertial and LiDAR data at approximately 100 Hz. 
The pose measurements were fused with an IMU to estimate position, velocity, orientation, angular velocity, and IMU biases with a nonlinear geometric observer \cite{Lopez23:Contracting}.
Estimates were generated by the observer at 100Hz and used by the outer loop controller.

\begin{figure}[t]
 \centering
 \includegraphics[width=\columnwidth]{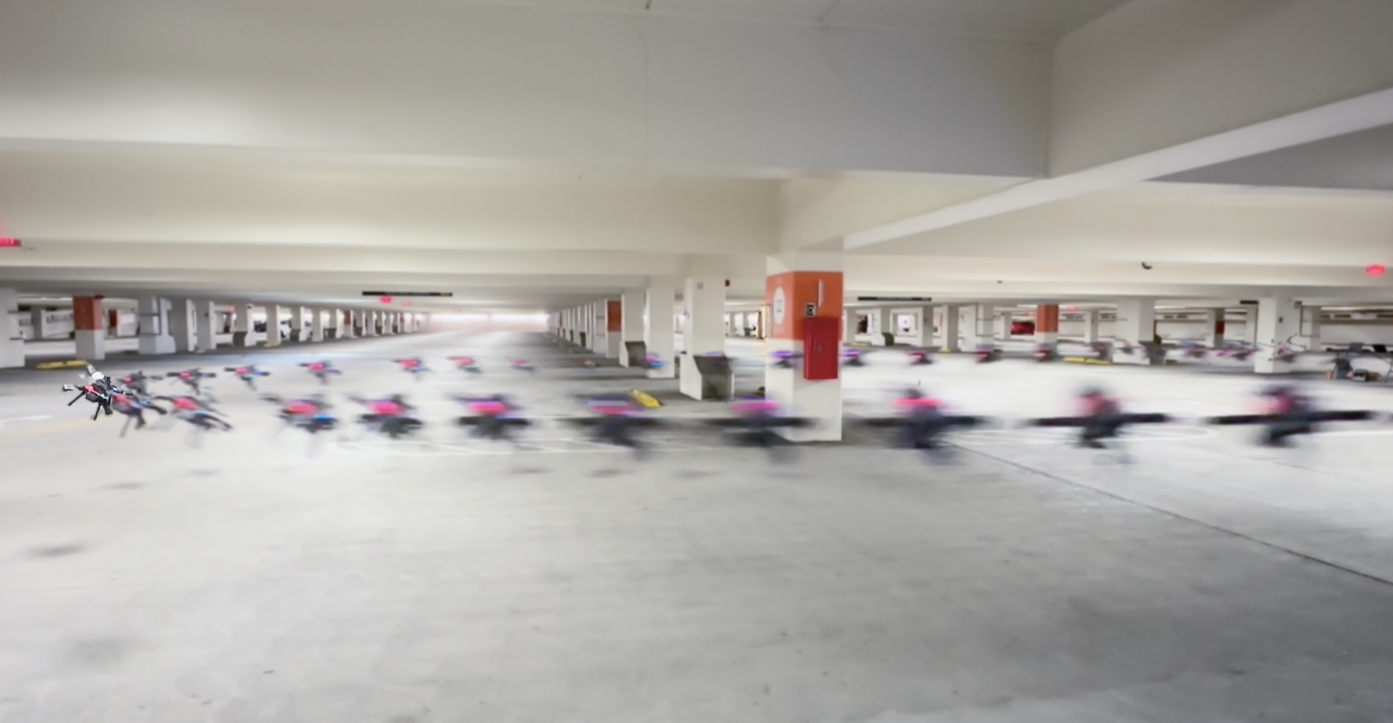}
 \caption{Motion trail of quadrotor executing trajectory from Experiment 2 in underground parking structure.}
\label{fig:hardware_lab_flight}
\end{figure}

\begin{figure*}[t]
\begin{subfigure}{0.49\linewidth}
\centering
 \includegraphics[width=\linewidth]{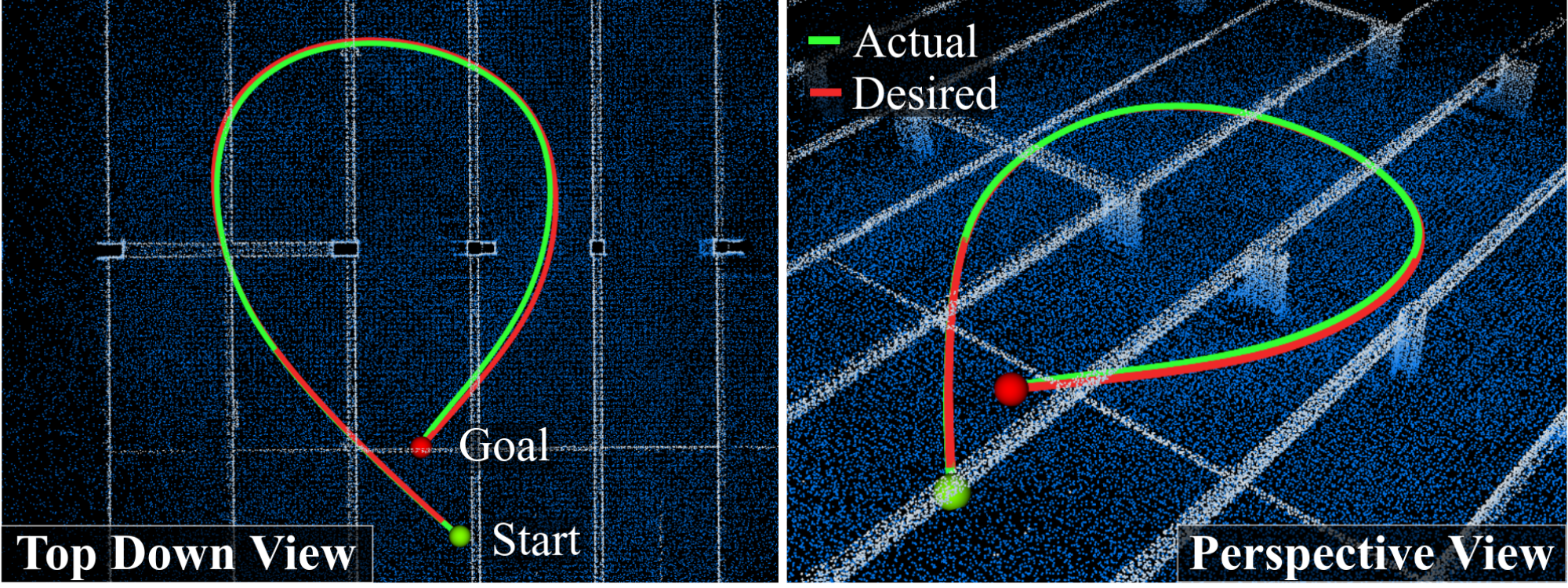}
 \caption{Experiment 1: Single-loop trajectory,  $v_{max} = 10 $ m/s}
\label{fig:hardware_p7_vmax_10}
 \vspace{6pt}
\end{subfigure}
\hfill
\begin{subfigure}{0.49\linewidth}
\centering
 \includegraphics[width=\linewidth]{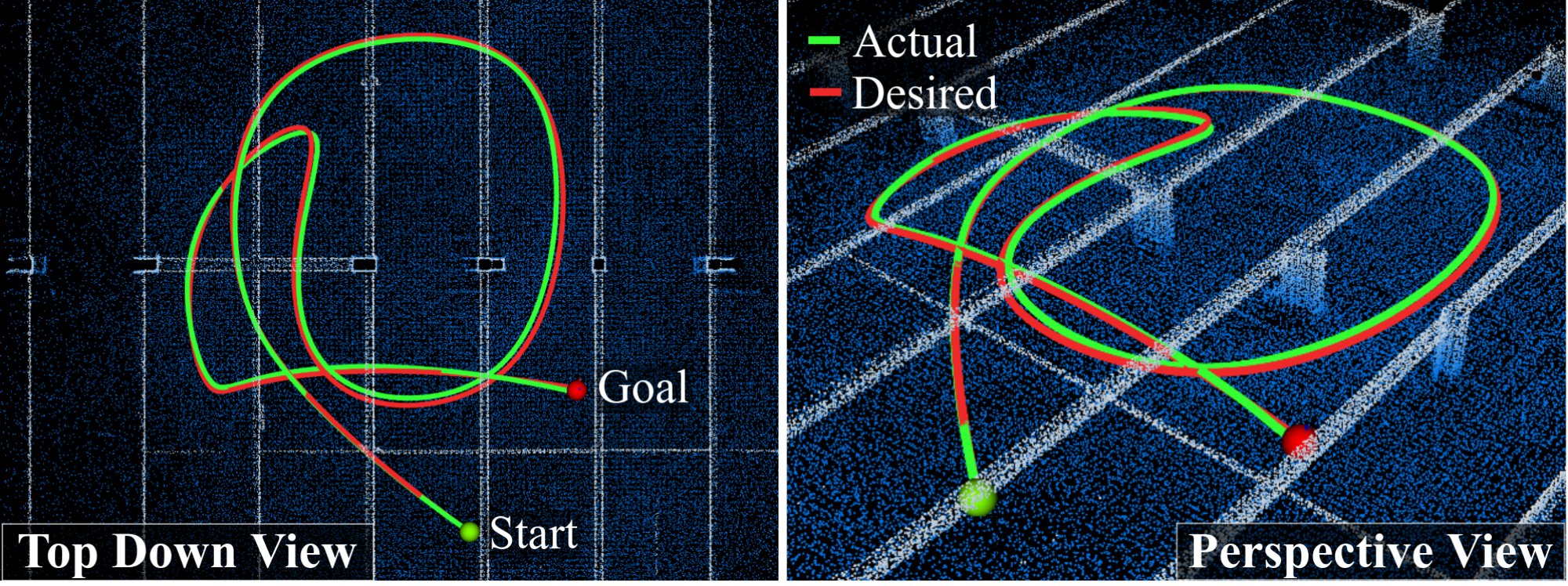}
 \caption{Experiment 2:  Multi-loop trajectory,  $v_{max} = 10 $ m/s}
\label{fig:hardware_p7_loops_vmax_10}
\vspace{6pt}
\end{subfigure}
\\
\begin{subfigure}{0.49\linewidth}
\centering
\includegraphics[width=\linewidth]{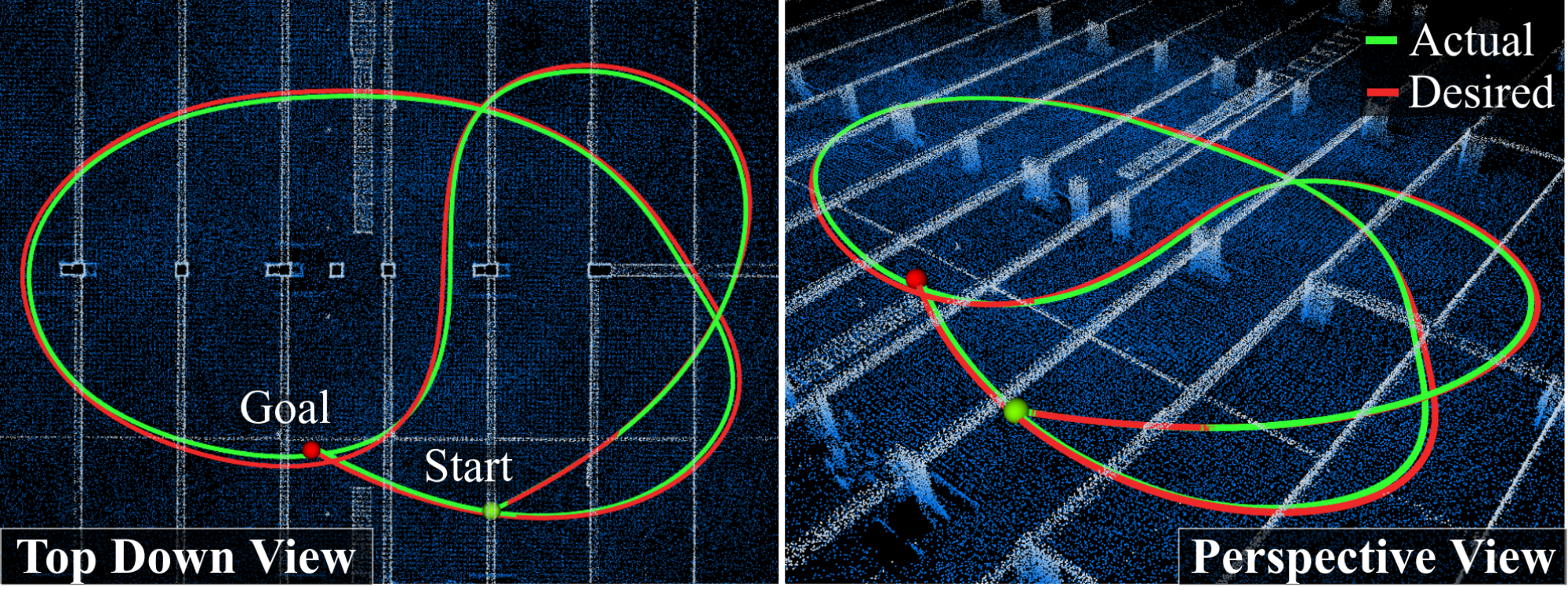}
 \caption{Experiment 3:  Multi-loop trajectory,  $v_{max} = 12.5 $ m/s}
\label{fig:hardware_p7_2_vmax_12p5}
\vspace{6pt}
\end{subfigure}
\hfill
\begin{subfigure}{0.49\linewidth}
\centering
 \includegraphics[width=\linewidth]{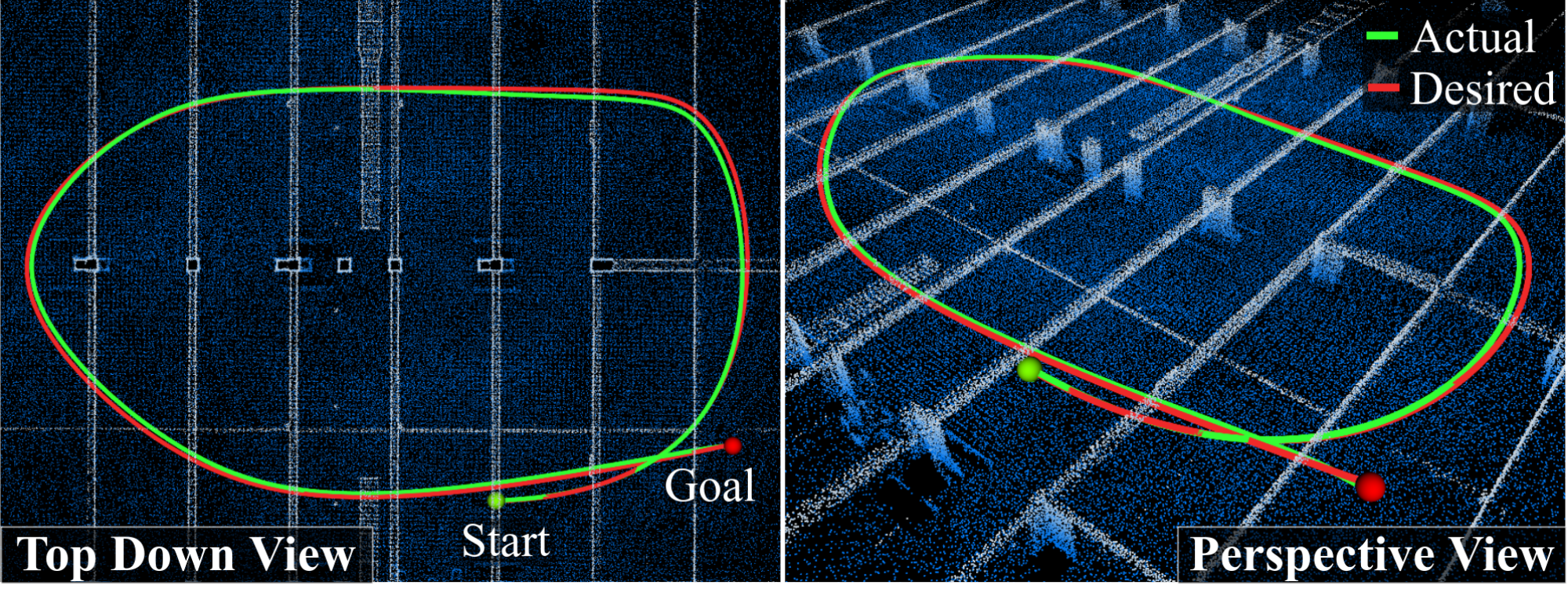}
 \caption{Experiment 4: Single-loop trajectory,  $v_{max} = 15 $ m/s}
\label{fig:hardware_p7_2_vmax_15}
\vspace{6pt}
\end{subfigure}
\\
\begin{subfigure}{0.49\linewidth}
\centering
\includegraphics[width=\linewidth]{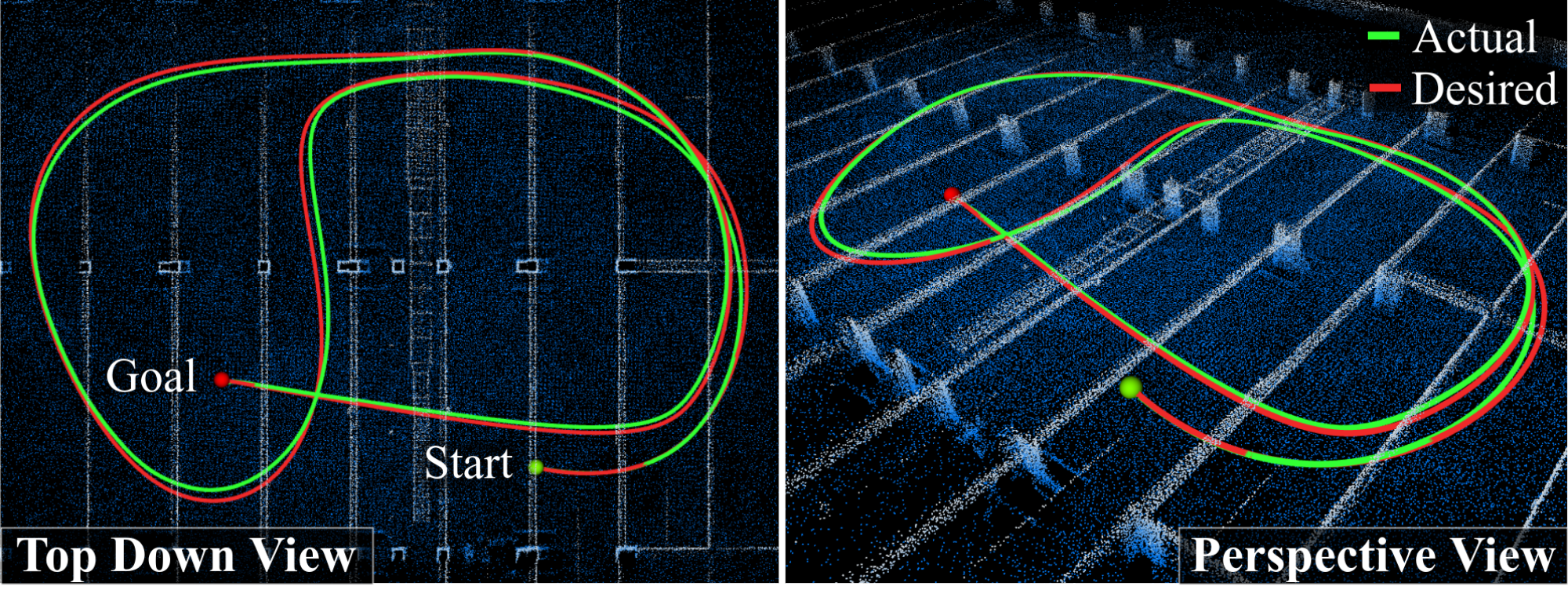}
 \caption{Experiment 5:  Multi-loop trajectory,  $v_{max} = 15 $ m/s}
\label{fig:hardware_p7_3_vmax_15_loops}
\end{subfigure}
\hfill
\begin{subfigure}{0.49\linewidth}
\centering
\includegraphics[width=\linewidth]{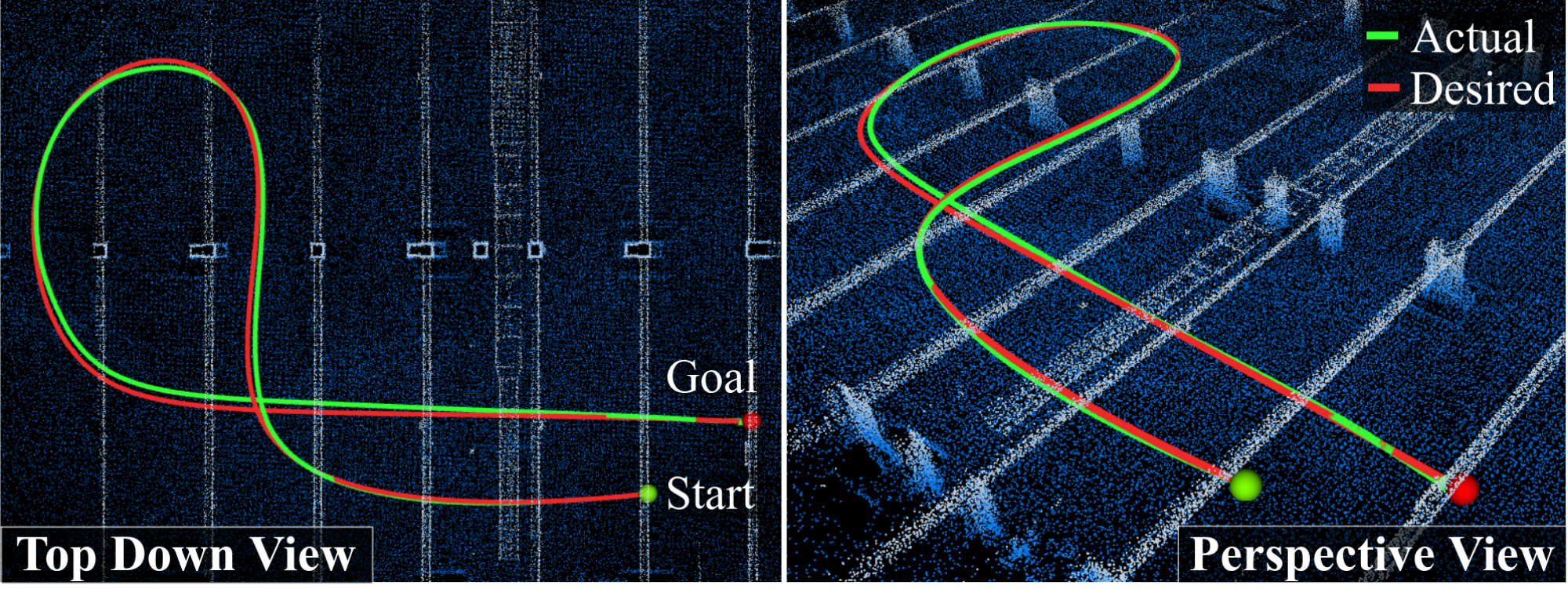}
 \caption{Experiment 6:  Single-loop trajectory,  $v_{max} = 17.5 $ m/s}
\label{fig:hardware_p7_3_vmax_17p5}
\end{subfigure}
\caption{Hardware experiments in 
Parking Structure 7 with six different trajectories: (a) Single-loop $v_{max} = 10 $ m/s, (b) Multi-loop $v_{max} = 10 $ m/s, (c) Multi-loop $v_{max} = 12.5 $ m/s, (d) Single-loop $v_{max} = 15 $ m/s, (e) Multi-loop  $v_{max} = 15 $ m/s, and (f) Single-loop $v_{max} = 17.5 $ m/s. The desired trajectory planned by STITCHER is shown in red and the actual trajectory flown by the quadcopter is shown in green. Two views of the environment are provided in each subfigure. Left: Top-down view. Right: Perspective view.}
\label{fig:rviz_hardware}
\vspace{-0.1in}
\end{figure*}

\begin{figure}[t]
 \centering
 \includegraphics[width=\columnwidth]{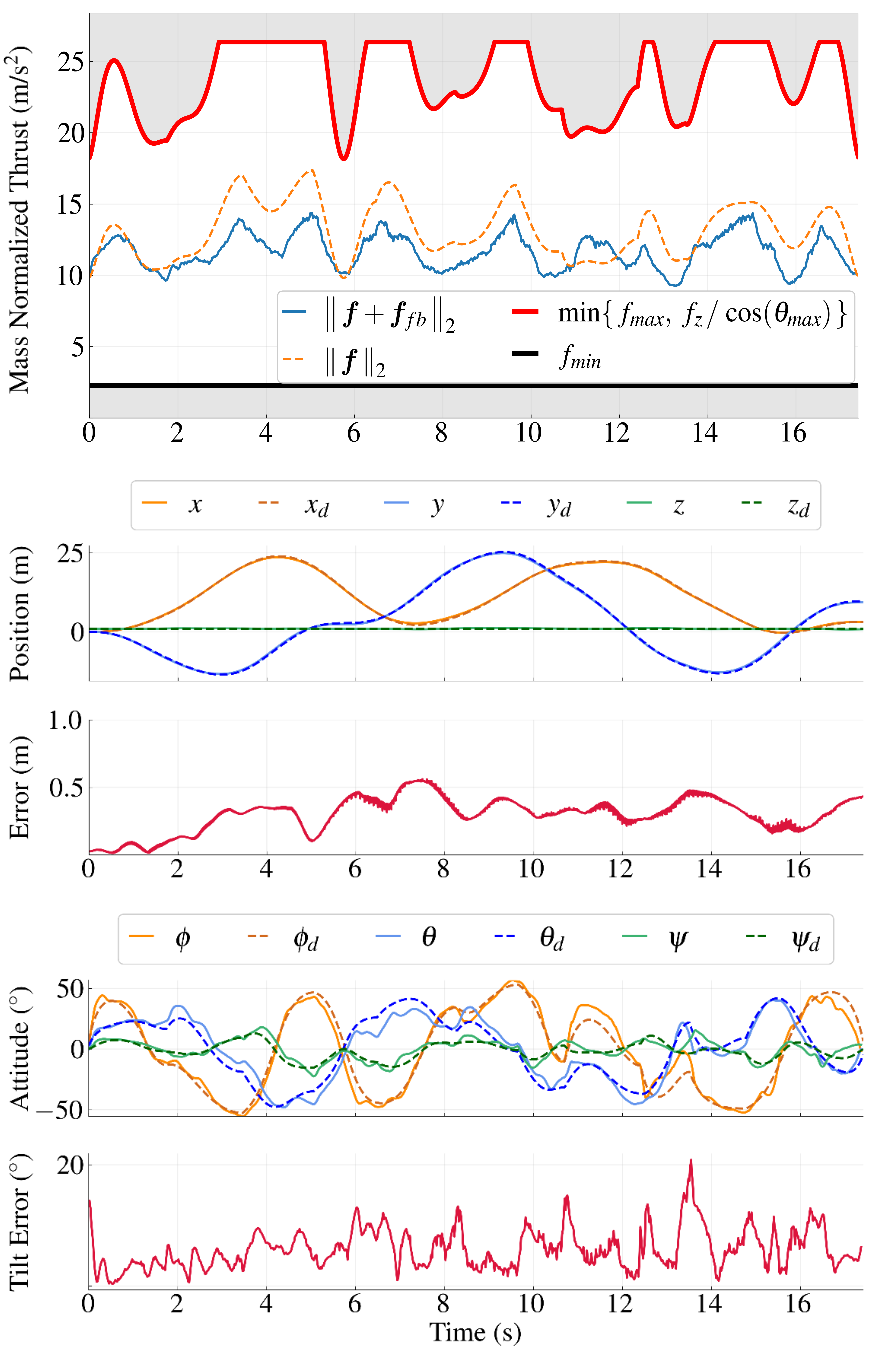}
 \caption{Experiment 3: State profiles comparing the actual flown trajectory and STITCHER's planned trajectory. Top: The actual (blue) and desired (orange) mass-normalized thrust profiles remaining within permissible regions, outside of invalid regions (grey). Middle: Comparison of the actual and desired position in the three coordinate axes along with the total error (red). Bottom: The actual and desired roll $\phi$, pitch $\theta$, and yaw $\psi$ as well as the tilt error (red) of the trajectory over the flight duration.}
\label{fig:hardware_exp3_plot}
\end{figure}

\begingroup

\renewcommand{\arraystretch}{1.25} 

\begin{table}[t]
\caption{Trajectory Tracking Error.}
\label{tab:tracking_error}
\centering
\begin{tabular}{c|c|c|c|c|c|c|c}
\hline
\multicolumn{2}{c|}{Metric} & \multicolumn{6}{c}{Experiment} \\
\multicolumn{2}{c|}{} & 1 & 2 & 3 & 4 & 5 & 6 \\
\hline
\multirow{2}{*}{\parbox{1.0cm}{\centering Path}}
& $v_{max}$ (m/s) & 10 & 10 & 12.5 & 15 & 15 & 17.5\\
& Length (m) & 55.8 & 110 & 163 & 116 & 199 & 96\\
\hline
\multirow{2}{*}{RMSE}
& Position (m) & 0.22 & 0.23 & 0.32 & 0.33 & 0.41 & 0.36\\
& Tilt (deg) & 4.81 & 6.12 & 6.90 & 7.00 & 7.30 & 8.80\\
\hline
\end{tabular}
\end{table}

\endgroup
\subsection{Flight Experiments} \label{sec:flight_exp}
Flight experiments were conducted in an underground parking structure to evaluate the dynamic feasibility of the trajectories planned with STITCHER. The parking garage was first mapped with the onboard LiDAR using DLIO. Next, trajectories were generated in the resulting pointcloud maps and flown on our custom quadrotor (see \cref{fig:hardware_lab_flight}). In \cref{fig:rviz_hardware}, planned trajectories generated by STITCHER are displayed alongside the actual trajectories flown by the quadrotor in six experiments of varying complexity. The experiments include both single-loop and multi-loop trajectories showcasing aggressive turns and collision avoidance around structural columns in the parking garage. The commanded top speeds range between 10 m/s (36 km/h) to 17.5 m/s (63 km/h), and the path lengths range from 55 to 200 meters. 

\Cref{tab:tracking_error} reports the root mean square errors (RMSE) of position and tilt for each experiment in addition to the corresponding trajectory length and top speed. Across all experiments, the maximum position RMSE of 0.41 m occurs during our longest multi-loop trajectory experiment spanning 199 meters with a top speed of 15 m/s (\cref{fig:hardware_p7_3_vmax_15_loops}). The largest tilt RMSE of 8.8 degrees occurs during our fastest flight experiment with a top speed of 17.5 m/s (\cref{fig:hardware_p7_3_vmax_17p5}).
These flight experiments show that STITCHER generates trajectories that satisfy constraints and can be tracked with low error using a standard cascaded geometric controller at aggressive flight speeds.
In addition, while STITCHER may use any $p-$th order integrator with $p\geq2$ to model system dynamics, the results show that a triple integrator model is sufficient for quadrotor tracking control despite discontinuities in jerk at waypoints. 

\Cref{fig:hardware_exp3_plot} compares the actual and desired profiles of the mass-normalized thrust (top), position (middle), and attitude (bottom) for Experiment 3, a multi-loop trajectory with top speed at 12.5 m/s.
The quadrotor remains within the physical limits dictated by the thrust magnitude and tilt constraints.
The position and attitude error profiles further show that for the duration of the flight, the maximum deviations in position and tilt are less than 0.56 m and 21 degrees, respectively. 
The speed profiles of the desired and actual trajectories for Experiment 5, with a commanded maximum speed of 15~m/s, and Experiment 6, with a commanded maximum speed of 17.5~m/s, are shown in \cref{fig:hardware_vel_norm_plot}. 
These correspond to the longest and fastest hardware experiments, respectively. 
In both experiments, the quadrotor closely tracks the desired speed profile, reaching the prescribed maximum speed without violating the imposed velocity constraint.
These results demonstrate that STITCHER generates trajectories that safely and accurately express complex system-level constraints.

\begin{figure}[t]
 \centering
\begin{subfigure}{\linewidth}
\centering
\includegraphics[width=\linewidth]{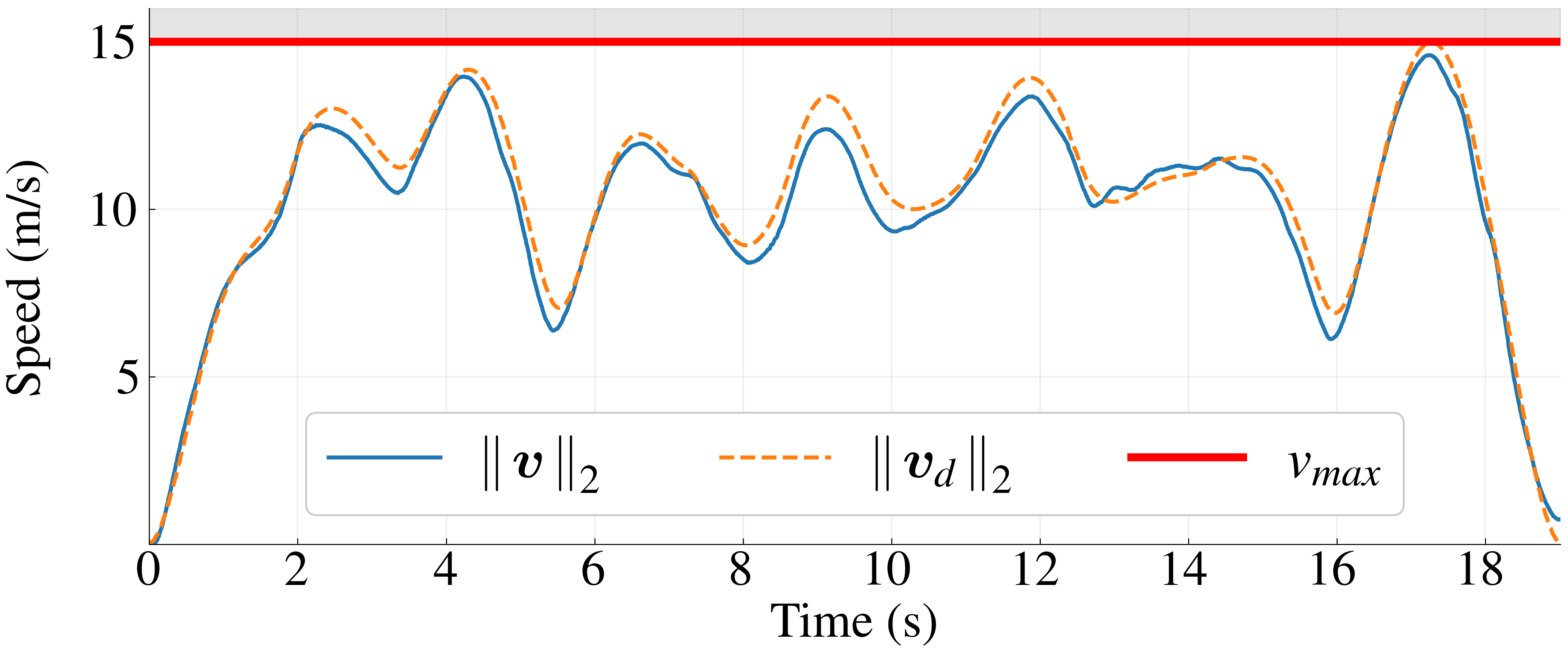}
 \caption{Experiment 5: Speed profile, $v_{max} =$ 15 m/s.}
\label{fig:hardware_vel_norm_15}
\vspace{6pt}
\end{subfigure} 
\\
 \begin{subfigure}{\linewidth}
\centering
\includegraphics[width=\linewidth]{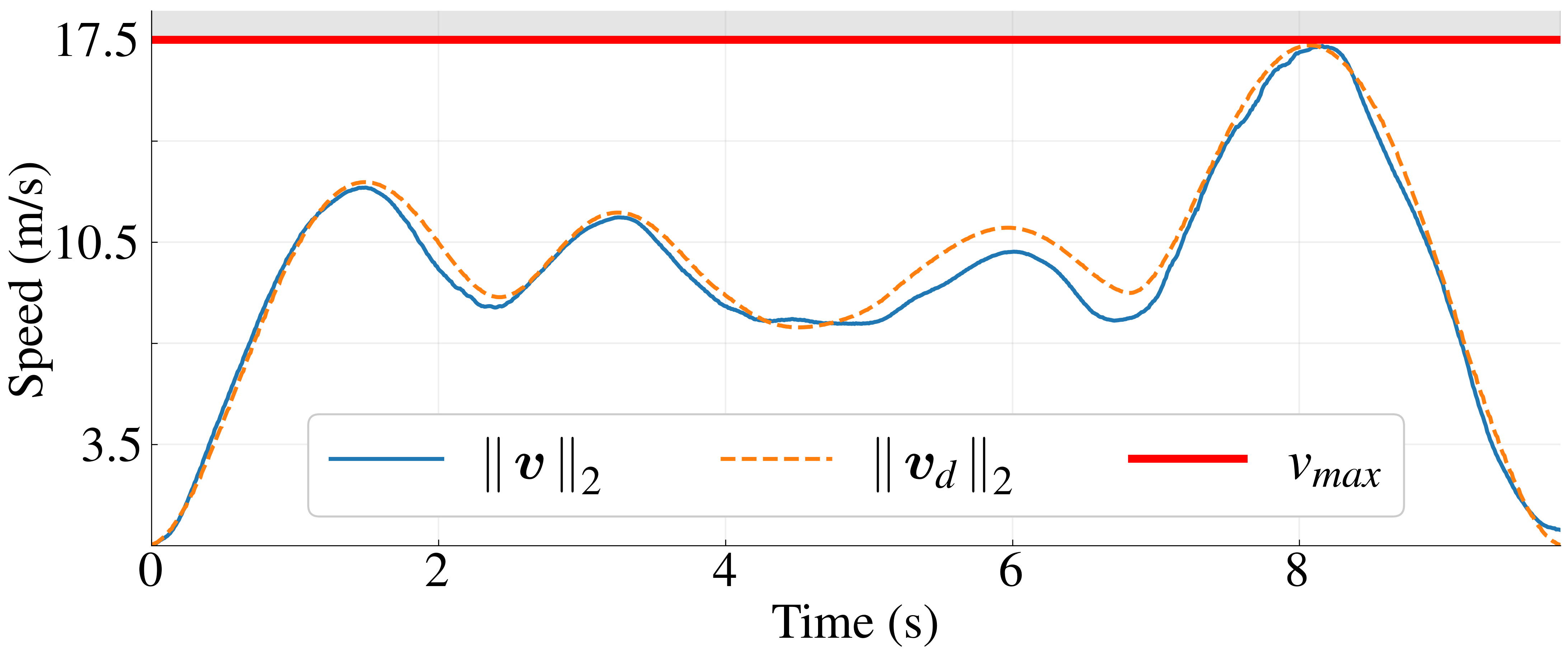}
 \caption{Experiment 6: Speed profile, $v_{max} =$ 17.5 m/s.}
\label{fig:hardware_vel_norm_17p5}
\vspace{6pt}
\end{subfigure} 
\caption{Speed profiles comparing the actual (blue) and desired (orange) trajectories achieving a top speed of (a): 15 m/s and (b): 17.5 m/s.}
\label{fig:hardware_vel_norm_plot}
\end{figure}

\subsection{Comparison with State-of-the-Art in Hardware}
Additional hardware experiments were performed to assess the impact of model fidelity on trajectory quality by comparing flight data of STITCHER and GCOPTER \cite{Wang22:Geometrically} trajectories. 
STITCHER uses a chain of integrators model for trajectory planning. Specifically in our experimentation, we use a triple-integrator model which yields discontinuities in jerk. 
From the hardware experiments performed in \cref{sec:flight_exp}, we saw that this model yields trackable high-speed trajectories for a standard cascaded controller. 
However, we recognize that planners such as GCOPTER may create smooth control-input profiles in jerk, resulting in smooth angular velocity commands. 
Therefore, here we examine how smoothness in jerk states practically affects tracking performance. 
We conducted three SOA comparison experiments comparing STITCHER to GCOPTER over the following trajectories: (1) a single-loop trajectory with a top speed of 5 m/s, (2) a multi-loop trajectory with a top speed of 10 m/s, and (3) a multi-loop trajectory with a top speed of 15 m/s. 
The experiments were performed by providing the same reference path through Parking Structure 7 to each respective planner. 
\Cref{fig:hardware_gcopter} depicts the trajectories planned by STITCHER and GCOPTER and the actual flight data for all experiments. 
In SOA Comparison Experiment 3, the GCOPTER trajectory could not be flown due to a collision violation caused by an invalid free space corridor (see \cref{fig:hardware_gcop_vmax_15_comp} left). 
This highlights a single point failure mode of optimization-based methods that require safe flight corridors in their formulation. 
In order to qualitatively assess trajectory quality, we compare the component-wise velocity profile of STITCHER and GCOPTER for SOA Comparison 2 (see \cref{fig:stticher_v_gcopter_vel}). 
The velocity profiles are very similar, with some deviation around 12 seconds. 

\begin{table}[t]
\caption{State-of-the-Art Comparison Trajectory Tracking Error.}
\vskip -0.1in  
\label{tab:SOA_traj_track}
\begin{center}
\begin{tabular}{c|c|c|c|c|c|c|c}
\hline
\multirow{2}{*}{\parbox{0.5cm}{\centering SOA Exp.}} & \multirow{2}{*}{\parbox{0.7cm}{\centering $v_{max}$ (m/s)}} & \multicolumn{2}{c|}{Path Len. (m)} & \multicolumn{2}{c|}{Pos. RMSE (m)} &\multicolumn{2}{c}{Tilt RMSE ($^\circ$)}\\
& & \cite{Wang22:Geometrically} &  Ours & \cite{Wang22:Geometrically} &  Ours &  \cite{Wang22:Geometrically} &  Ours  \\
\hline
1 & 5 & 48 & 51 & 0.13 & \textbf{0.12} & \textbf{2.65} & 4.28\\
2 & 10 & 140 & 153 & \textbf{0.29} & 0.30 & \textbf{4.92} & 5.88 \\
\hline
\end{tabular}
\end{center}
\vskip -0.1in  
\end{table}

\Cref{tab:SOA_traj_track} shows the tracking error in position and tilt for the successful SOA comparison experiments. GCOPTER is able to achieve a lower RMSE in tilt for both experiments. It should be noted that the maximum difference between the two planning methods is 1.6 degrees. STITCHER achieved a lower RMSE in position for Experiment 1, while GCOPTER achieved a lower value for Experiment 2. The difference in position RMSE for both experiments is 1 cm. From these results, we can conclude that for missions relying on accurate position tracking, STITCHER can reliably be used all while benefiting from fast computation speed. However, missions which require more precise tilt tracking may benefit from a higher fidelity planning model.

In summary, these experiments show that STITCHER produces sufficiently smooth trajectories enabling low position tracking error. 
While there are added benefits in tilt tracking when using a higher-order planning model, we find the overall tracking error of the proposed STITCHER planner to be sufficient for many high-speed missions. 

\begin{figure*}[t]
\begin{subfigure}{0.49\linewidth}
\centering
 \includegraphics[width=\linewidth]{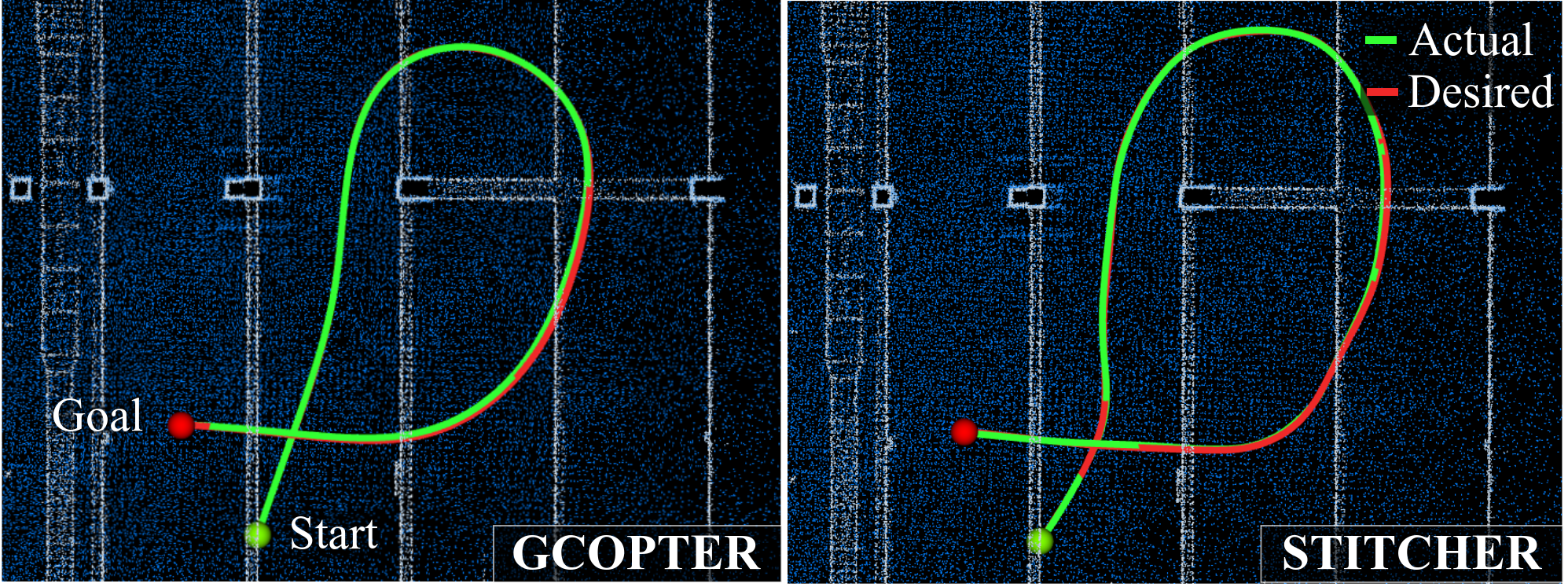}
 \caption{SOA Comparison 1: Single-loop trajectory,  $v_{max} = 5 $ m/s}
\label{fig:hardware_gcop_vmax_5}
\vspace{6pt}
\end{subfigure}
\hfill
\begin{subfigure}{0.49\linewidth}
\centering
 \includegraphics[width=\linewidth]{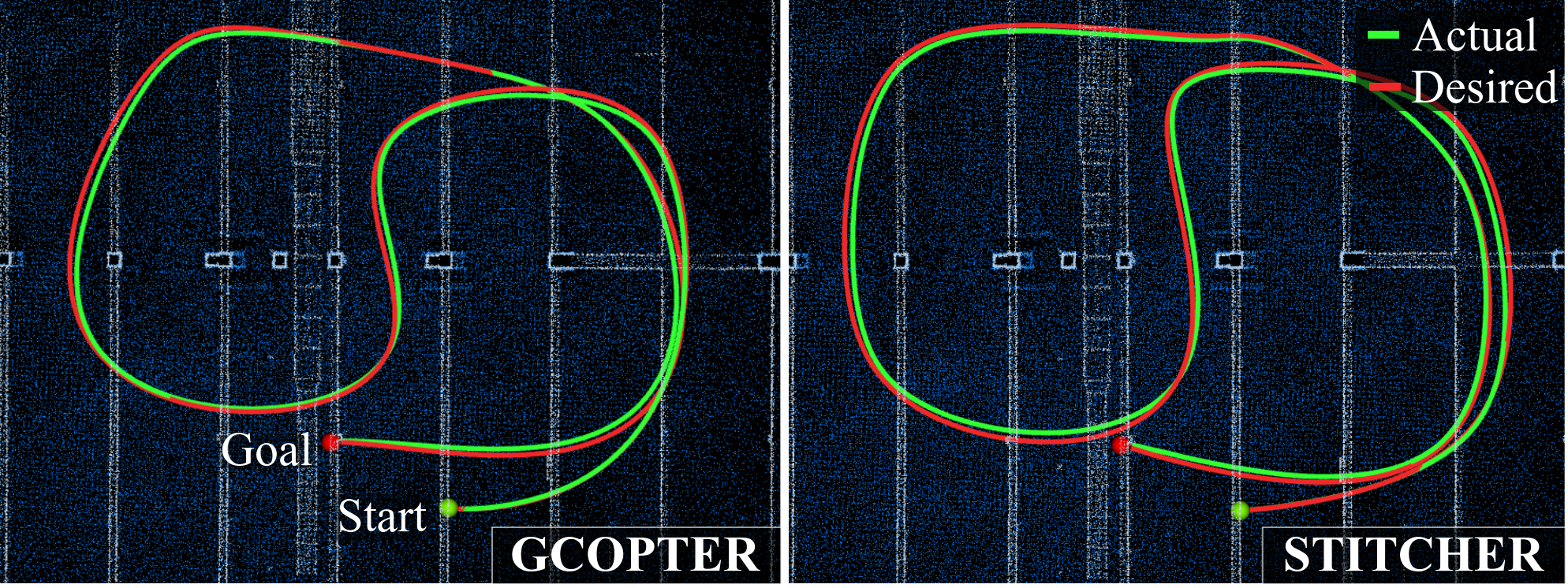}
 \caption{SOA Comparison 2:  Multi-loop trajectory,  $v_{max} = 10 $ m/s}
\label{fig:hardware_gcop_vmax_10_comp}
\vspace{6pt}
\end{subfigure}
\\
\centering
\begin{subfigure}{0.67\linewidth}
\centering
 \includegraphics[width=\linewidth]{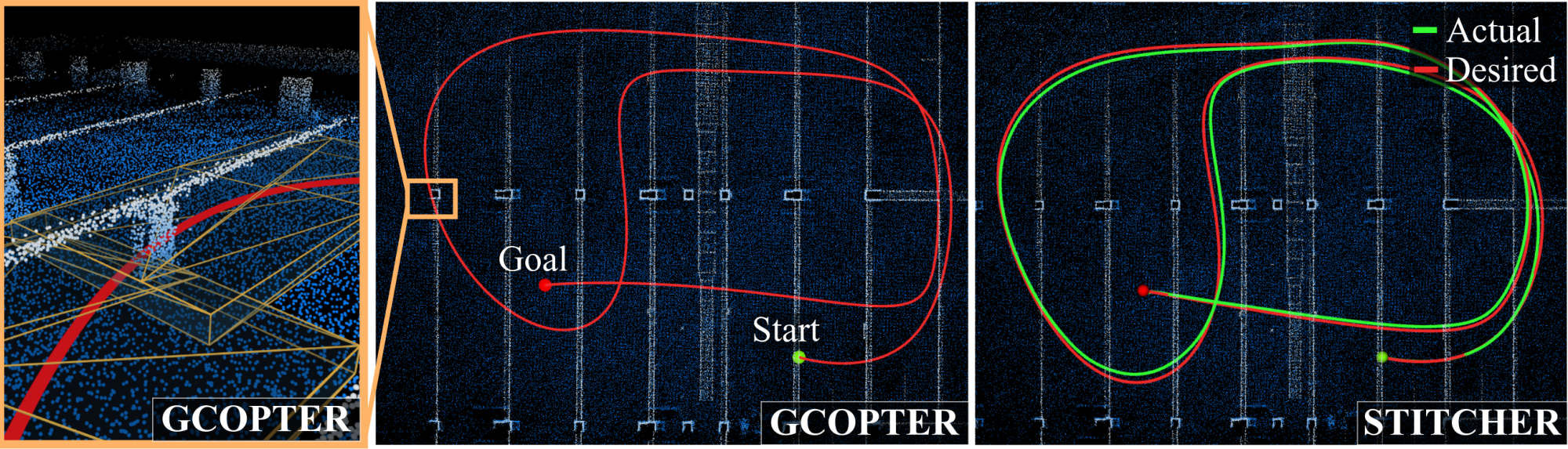}
 \caption{SOA Comparison 3:  Multi-loop trajectory,  $v_{max} = 15 $ m/s}
\label{fig:hardware_gcop_vmax_15_comp}
\end{subfigure}
\caption{Hardware experiments in 
Parking Structure 7 with three different trajectories: (a) Single-loop $v_{max} = 5 $ m/s, (b) Multi-loop $v_{max} = 10 $ m/s, and (c) Multi-loop $v_{max} = 15 $ m/s. The desired trajectory is shown in red and the actual trajectory flown by the quadcopter is shown in green. Two planner implementations are shown in each subfigure. Left: GCOPTER. Right: STITCHER.}
\label{fig:hardware_gcopter}
\vspace{-0.2in}
\end{figure*}

\begin{figure}[t]
\begin{subfigure}{\linewidth}
\centering
 \includegraphics[width=\linewidth]{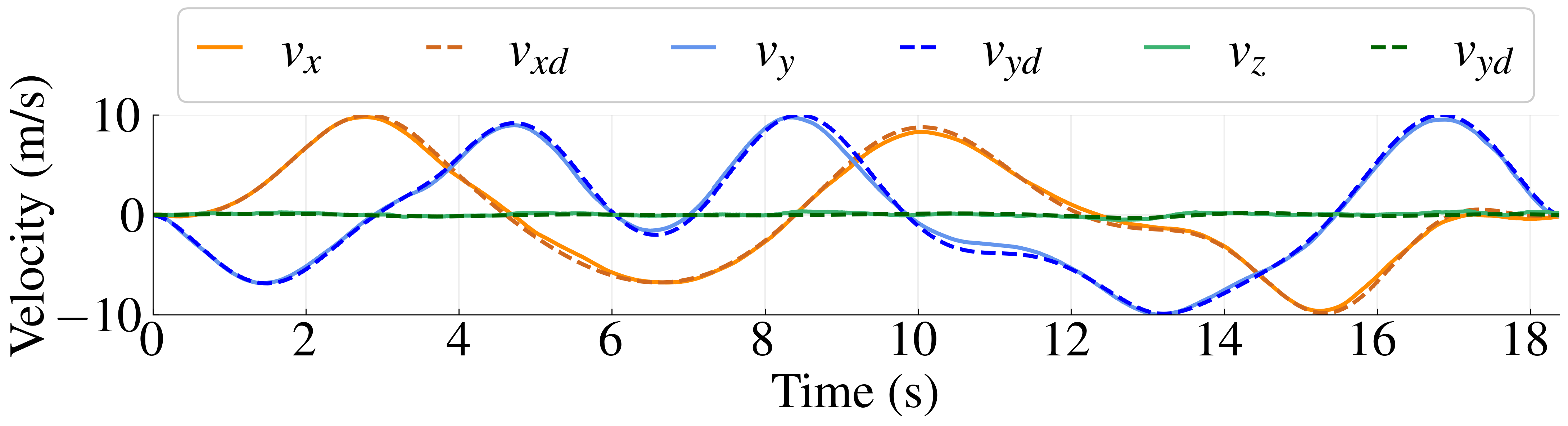}
 \caption{GCOPTER Velocity Profile}
\label{fig:gcopter_vel}
\vspace{6pt}
\end{subfigure}
\\
\begin{subfigure}{\linewidth}
\centering
 \includegraphics[width=\linewidth]{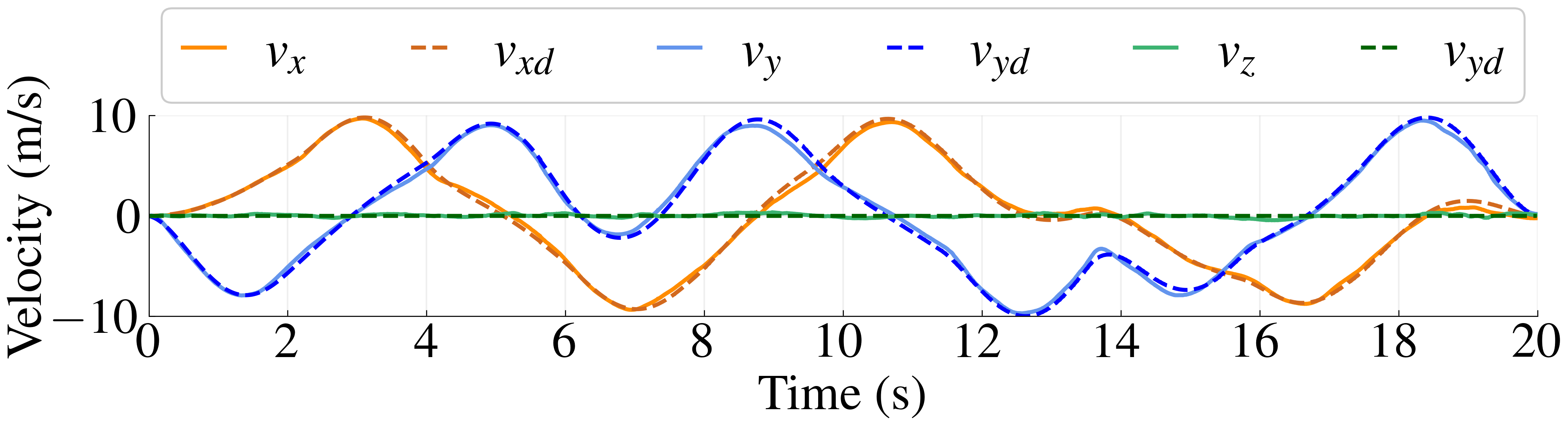}
 \caption{STITCHER Velocity Profile}
\label{fig:stitcher_Vel}
\end{subfigure}
\caption{SOA Comparison 2: Velocity tracking performance compared to state-of-the-art. (a): GCOPTER velocity profiles. (b): STITCHER velocity profiles. Dashed lines are desired profiles and solid lines are true profiles.}
\label{fig:stticher_v_gcopter_vel}
\end{figure}

\begingroup

\renewcommand{\arraystretch}{1.25} 

\section{CONCLUSIONS}
In this work, we presented STITCHER, a motion primitive search planning algorithm that utilizes a novel three-stage planning architecture to design constrained trajectories in real-time over long distances.
We proved the search graph is finite, and the proposed search heuristic is admissible, so STITCHER is guaranteed to i) have \textit{a priori} bounded time and memory complexity and ii) generate optimal trajectories with respect to the sampled set of states. 
Real-time search speeds were achieved through our novel heuristic crafting
technique, greedy graph pre-processing method, and non-conservative constraint and collision checking procedure.
Our extensive simulation study showed the trade-off in terms of path quality and computation time for different sampled velocity sets, the effectiveness of the proposed heuristic with varied edge costs and state constraints, a case study imposing individual thruster limits, and the average computation times of the components that make up STITCHER.
We also found that our greedy motion primitive graph pre-processing step has a negligible effect on solution cost compared to the observed computation speed up owing to the reduced graph size.
Importantly, STITCHER was shown to consistently generate trajectories faster than three state-of-the-art optimization-based planners while never violating constraints. 
Hardware experiments with flight speeds up to 63 km/h further proved that our planner is effective in generating trajectories suitable for position and attitude tracking control while remaining within set physical limits.
Future work includes developing a receding horizon implementation for navigating through unknown environments, a free-waypoint formulation, the use of imitation learning to improve search efficiency, and learning motion primitives for more general optimal control problems.

\addtolength{\textheight}{0cm}   



\section*{ACKNOWLEDGMENT}
\vskip 0.1in  
\noindent \textbf{Acknowledgments} The authors would like to thank lab members Grace Kwak, Ryu Adams, James Row, Qiyuan Wu, and Alan Song for implementation and hardware support.


\bibliographystyle{IEEEtran}
\bibliography{ref}

@inproceedings{Lopez17:Aggressive,
        author       = {B. T. Lopez and J. P. How},
        booktitle    = {IEEE International Conference on Robotics and Automation},
	title        = {Aggressive 3-D collision avoidance for high-speed navigation},
	year         = {2017},
        volume       = {},
        number       = {},
	pages        = {5759--5765}
}

@inproceedings{Lopez17:Sliding,
        author       = {B. T. Lopez and J.-J. Slotine},
        booktitle    = {IEEE International Conference on Robotics and Automation},
	title        = {Sliding on manifolds: Geometric attitude control with quaternions},
	year         = {2021},
        volume       = {},
        number       = {},
	pages        = {11140--11146}
}

@inproceedings{Frazzoli00:Trajectory,
  title={Trajectory tracking control design for autonomous helicopters using a backstepping algorithm},
  author={Frazzoli, Emilio and Dahleh, Munther A and Feron, Eric},
  booktitle={IEEE American Control Conference},
  pages={4102--4107},
  year={2000},
}

@inproceedings{Lee10:Geometric,
  title={Geometric tracking control of a quadrotor UAV on SE (3)},
  author={Lee, Taeyoung and Leok, Melvin and McClamroch, N Harris},
  booktitle={IEEE Conference on Decision and Control},
  pages={5420--5425},
  year={2010},
}

@inproceedings{Lopez25:New,
    author = {Lopez, Brett T},
    title = {New Insights into Cascaded Geometric Flight Control: From Performance Guarantees to Practical Pitfalls},
    booktitle = {IEEE European Control Conference (to appear)},
    year = {2026},
}

@article{Howard08:State,
	author        = {T. M. Howard and C. J. Green and A. Kelly and D. Ferguson},
	title         = {State space sampling of feasible motions for high-performance mobile robot navigation in complex environments},
	journal       = {Journal of Field Robotics},
	year          = {2008},
	volume        = {25},
	number        = {6-7},
	pages         = {325--345},
	month         = {June},
}

@inproceedings{Mellinger11:Minimum,
        author       = {D. Mellinger and V. Kumar},
        booktitle    = {IEEE International Conference on Robotics and Automation},
	title        = {Minimum snap trajectory generation and control for quadrotors},
	year         = {2011},
        volume       = {},
        number       = {},
	pages        = {2520--2525}
}

@article{Hehn11:Quadrocopter,
	author        = {M. Hehn and R. D'Andrea},
	title         = {Quadrocopter Trajectory Generation and Control},
	journal       = {IFAC Proceedings Volumes},
	year          = {2011},
	volume        = {44},
	number        = {1},
	pages         = {1485--1491},
	month         = {Jan.},
}

@article{Mueller15:A_Computationally,
	author        = {M. W. Mueller and M. Hehn and R. D'Andrea},
	title         = {A Computationally Efficient Motion Primitive for Quadrocopter Trajectory Generation},
	journal       = {IEEE Transactions on Robotics},
	year          = {2015},
	volume        = {31},
	number        = {6},
	pages         = {1294--1310},
	month         = {Dec.},
}

@article{Liu17:Planning,
	author        = {Liu, Sikang and Watterson, Michael and Mohta, Kartik and Sun, Ke and Bhattacharya, Subhrajit and Taylor, Camillo J. and Kumar, Vijay},
	title         = {Planning Dynamically Feasible Trajectories for Quadrotors Using Safe Flight Corridors in 3-D Complex Environments},
	journal       = {IEEE Robotics and Automation Letters},
	year          = {2017},
	volume        = {2},
	number        = {3},
	pages         = {1688--1695},
	month         = {July},
}

@article{Liu18:Search_Based,
	author        = {S. Liu and K. Mohta and N. Atanasov and V. Kumar},
	title         = {Search-based motion planning for aggressive flight in {SE(3)}},
	journal       = {IEEE Robotics and Automation Letters},
	year          = {2018},
	volume        = {3},
	number        = {3},
	pages         = {2439--2446},
	month         = {July},
}

@inproceedings{Liu17:Search_Based,
  title={Search-based motion planning for quadrotors using linear quadratic minimum time control},
  author={Liu, Sikang and Atanasov, Nikolay and Mohta, Kartik and Kumar, Vijay},
  booktitle={IEEE/RSJ International Conference on Intelligent Robots and Systems},
  pages={2872--2879},
  year={2017}
}

@book{Pearl84:Heuristics,
	author       = {J. Pearl},
	title        = {Heuristics: Intelligent Search Strategies for Computer Problem Solving},
	year         = {1984},
	volume       = {},
	publisher    = {Addison-Wesley Longman Publishing Co., Inc.},
}

@inproceedings{Deits15:Efficient,
        author       = {R. Deits and R. Tedrake},
        booktitle    = {IEEE International Conference on Robotics and Automation},
	title        = {Efficient mixed-integer planning for UAVs in cluttered environments},
	year         = {2015},
        volume       = {},
        number       = {},
	pages        = {42--49}
}

@inproceedings{Landry16:Aggressive,
  author={Landry, Benoit and Deits, Robin and Florence, Peter R. and Tedrake, Russ},
  booktitle={IEEE International Conference on Robotics and Automation}, 
  title={Aggressive quadrotor flight through cluttered environments using mixed integer programming}, 
  year={2016},
  pages={1469-1475}
}

@inproceedings{Mellinger12:Mixed-integer,
   author = {Daniel Mellinger and Alex Kushleyev and Vijay Kumar},
   title = {Mixed-integer quadratic program trajectory generation for heterogeneous quadrotor teams},
   booktitle = {IEEE International Conference on Robotics and Automation},
   year = {2012},
   pages={477-483}
}

@inproceedings{Pivtoraiko11:Kinodynamic,
        author       = {M. Pivtoraiko and A. Kelly},
        booktitle    = {IEEE/RSJ International Conference on Intelligent Robots and Systems},
	title        = {Kinodynamic motion planning with state lattice motion primitives},
	year         = {2011},
        volume       = {},
        number       = {},
	pages        = {2172--2179}
}

@article{Dolgov10:Path,
	author        = {D. Dolgov and S. Thrun and M. Montemerlo},
	title         = {Path planning for autonomous vehicles in unknown semi-structured environments},
	journal       = {International Journal of Robotics Research},
	year          = {2010},
	volume        = {29},
	number        = {5},
	pages         = {485--501},
	month         = {Jan.},
}

@inproceedings{Richter16:Polynomial,
    author       = {C. Richter and A. Bry and N. Roy},
    booktitle    = {International Symposium of Robotics Research},
    title        = {Polynomial Trajectory Planning for Aggressive Quadrotor Flight in Dense Indoor Environments},
    year         = {2016},
    volume       = {},
    number       = {},
	pages        = {649--666}
}

@inproceedings{Andersson18:Receding,
    author = {O. Andersson and O. Ljungqvist and M. Tiger and D. Axehill and F. Heintz},
    title = {Receding-Horizon Lattice-Based Motion Planning with Dynamic Obstacle Avoidance},
    booktitle = {IEEE Conference on Decision and Control},
    year = {2018},
    volume        = {},
    number        = {},
    pages         = {4467--4474}
}

@article{Zhou19:Robust,
   author = {B. Zhou and F. Gao and L. Wang and C. Liu and S. Shen},
   title = {Robust and Efficient Quadrotor Trajectory Generation for Fast Autonomous Flight},
   journal = {IEEE Robotics and Automation Letters},
   year = {2019},
   volume = {4},
   number = {4},
   pages = {3529--3536},
   month = {Oct.}
}

@article{Hart68:Astar,
   author = {P. Hart and N. Nilsson and B. Raphael},
   title = {A Formal Basis for the Heuristic Determination of Minimum Cost Paths},
   journal = {IEEE Transactions on Systems Science and Cybernetics},
   year = {1968},
   volume = {4},
   number = {2},
   pages = {100--107},
   month = {July}
}

@inproceedings{Harabor11:JPS,
   author = {D. Harabor and A. Grastien},
   title = {Online graph pruning for pathfinding on grid maps},
   booktitle = {National Conference on Artificial Intelligence},
   year = {2011},
   pages = {1114--1119}
}

@article{Tordesillas22:FASTER,
   author = {J. Tordesillas and B. T. Lopez and M. Everett and J. P. How},
   title = {{FASTER}: Fast and Safe Trajectory Planner for Navigation in Unknown Environments},
   journal = {IEEE Transactions on Robotics},
   year = {2022},
   volume = {38},
   number = {2},
   pages = {922--938},
   month = {Apr.}
}

@inproceedings{Dharmadhikari20:Motion,
   author = {M. Dharmadhikari and T. Dang and L. Solanka and J. Loje and H. Nguyen and N. Khedekar and K. Alexis},
   title = {Motion Primitives-based Path Planning for Fast and Agile Exploration using Aerial Robots},
   booktitle= {IEEE International Conference on Robotics and Automation},
   year = {2020},
   pages = {179--185}
}

@article{Wang22:Geometrically,
  author={Z. Wang and X. Zhou and C. Xu and F. Gao},
  title={Geometrically Constrained Trajectory Optimization for Multicopters}, 
  journal={IEEE Transactions on Robotics}, 
  year={2022},
  volume={38},
  number={5},
  pages={3259-3278}, 
  month={May}
}

@article{Foehn21:Alphapilot,
  title={Alphapilot: Autonomous drone racing},
  author={Foehn, Philipp and Brescianini, Dario and Kaufmann, Elia and Cieslewski, Titus and Gehrig, Mathias and Muglikar, Manasi and Scaramuzza, Davide},
  journal={Autonomous Robots},
  volume={46},
  number={1},
  pages={307--320},
  year={2021},
  month={Oct.}
}

@article{Penicka22:Minimum,
  title={Minimum-time quadrotor waypoint flight in cluttered environments},
  author={R. Penicka and D. Scaramuzza},
  journal={IEEE Robotics and Automation Letters},
  volume={7},
  number={2},
  pages={5719--5726},
  year={2022},
  month={Apr.}
}

@inproceedings{Ren22:Bubble,
  author={Ren, Yunfan and Zhu, Fangcheng and Liu, Wenyi and Wang, Zhepei and Lin, Yi and Gao, Fei and Zhang, Fu},
  title={Bubble planner: Planning high-speed smooth quadrotor trajectories using receding corridors},
  booktitle={IEEE/RSJ International Conference on Intelligent Robots and Systems},
  year={2022},
  pages={6332--6339}
}

@article{Romero22:Model,
  title={Model Predictive Contouring Control for Time-Optimal Quadrotor Flight},
  author={A. Romero and S. Sun and P. Foehn and D. Scaramuzza},
  journal={IEEE Transactions on Robotics},  
  volume={38},
  number={6},
  pages={3340-3356},
  year={2022},
  month={Dec.}
}

@article{Romero22:Time,
    author = {Angel Romero and Robert Penicka and Davide Scaramuzza},
    title = {Time-Optimal Online Replanning for Agile Quadrotor Flight},
    journal = {IEEE Robotics and Automation Letters},
    year = {2022},
    volume = {7},
    issue = {3},
    month = {July}
}

@inproceedings{Florence16:Integrated,
  title={Integrated perception and control at high speed: Evaluating collision avoidance maneuvers without maps},
  author={Florence, Pete and Carter, John and Tedrake, Russ},
  booktitle={Algorithmic Foundations of Robotics},
  pages={304--319},
  year={2016},
  organization={Springer}
}

@inproceedings{Deits15:Computing,
  title={Computing large convex regions of obstacle-free space through semidefinite programming},
  author={Deits, Robin and Tedrake, Russ},
  booktitle={Algorithmic Foundations of Robotics},
  pages={109--124},
  year={2015},
  organization={Springer}
}

@book{Kirk04:Optimal,
  title={Optimal control theory: an introduction},
  author={Kirk, Donald E},
  year={2004},
  publisher={Courier Corporation}
}

@inproceedings{Lopez17:AggressiveFOV,
  author={B. T. Lopez and J. P. How},
  title={Aggressive collision avoidance with limited field-of-view sensing}, 
  booktitle={IEEE/RSJ International Conference on Intelligent Robots and Systems}, 
  year={2017},
  pages={1358-1365}
}

@book{Russell16:Artificial,
  title={Artificial intelligence: a modern approach},
  author={Russell, Stuart J and Norvig, Peter},
  year={2016},
  publisher={Pearson}
}

@article{Marcucci23:Motion,
title = {Motion planning around obstacles with convex optimization},
author = {T. Marcucci  and M. Petersen  and D. von Wrangel and R. Tedrake },
journal = {Science Robotics},
volume = {8},
number = {84},
year = {2023},
month = {Nov.}
}

@article{Marcucci24:Fast,
  title={Fast path planning through large collections of safe boxes},
  author={Marcucci, Tobia and Nobel, Parth and Tedrake, Russ and Boyd, Stephen},
  journal={IEEE Transactions on Robotics},
  volume = {40},
  year={2024},
  month={July}
}

@article{Zhou2021:RAPTOR,
title={RAPTOR: Robust and Perception-Aware Trajectory Replanning for Quadrotor Fast Flight},
author={Zhou, Boyu and Pan, Jie and Gao, Fei and Shen, Shaojie},
journal={IEEE Transactions on Robotics},  
volume={37},
number={6},
pages={1992-2009},
year={2021},
month={Dec.}
}

@inproceedings{Oleynikova16:Continuous-time,
  author={Oleynikova, Helen and Burri, Michael and Taylor, Zachary and Nieto, Juan and Siegwart, Roland and Galceran, Enric},
  title={Continuous-time trajectory optimization for online UAV replanning}, 
  booktitle={IEEE/RSJ International Conference on Intelligent Robots and Systems}, 
  year={2016},
  pages={5332-5339},
}

@inproceedings{Ryll19:Efficient,

  author={M. Ryll and J. Ware and J. Carter and N. Roy},
  title={Efficient Trajectory Planning for High Speed Flight in Unknown Environments}, 
  booktitle={IEEE International Conference on Robotics and Automation},  
  year={2019},
  pages={732-738},
}

@inproceedings{Jarin21:Dispersion,
  title={Dispersion-minimizing motion primitives for search-based motion planning},
  author={Jarin-Lipschitz, Laura and Paulos, James and Bjorkman, Raymond and Kumar, Vijay},
  booktitle={IEEE International Conference on Robotics and Automation},
  pages={12625--12631},
  year={2021}
}

@article{Palmieri20:Dispertio,
  author={Palmieri, Luigi and Bruns, Leonard and Meurer, Michael and Arras, Kai O.},
  title={Dispertio: Optimal Sampling For Safe Deterministic Motion Planning}, 
  journal={IEEE Robotics and Automation Letters}, 
  year={2020},
  volume={5},
  number={2},
  pages={362-368},
  month={Apr.}}

@book{Demmel97:Applied,
author = {Demmel, James W.},
title = {Applied Numerical Linear Algebra},
year = {1997},
publisher = {Society for Industrial and Applied Mathematics}
}

@article{Acikmese07:Convex,
  title={Convex programming approach to powered descent guidance for mars landing},
  author={Acikmese, Behcet and Ploen, Scott R},
  journal={Journal of Guidance, Control, and Dynamics},
  volume={30},
  number={5},
  pages={1353--1366},
  year={2007},
  month={Sept.}
}

@article{Lopez23:Contracting,
  title={A contracting hierarchical observer for pose-inertial fusion},
  author={Lopez, Brett T},
  journal={arXiv preprint arXiv:2303.02777},
  year={2023}
}

@inproceedings{Krinner24:MPCC++,
  title={{MPCC}++: Model predictive contouring control for time-optimal flight with safety constraints},
  author={Krinner, Maria and Romero, Angel and Bauersfeld, Leonard and Zeilinger, Melanie and Carron, Andrea and Scaramuzza, Davide},
  booktitle={Robotics: Science and Systems},
  year={2024}
}

@article{Dijkstra59:Dijkstra,
   author = {E. W. Dijkstra},
   title = {A note on two problems in connexion with graphs},
   journal = {Numerische Mathematik},
   year = {1959},
   volume = {1},
   issue = {1}
}

@inproceedings{Pandy22:Learning,
  title={Learning graph search heuristics},
  author={P{\'a}ndy, Michal and Qiu, Weikang and Corso, Gabriele and Veli{\v{c}}kovi{\'c}, Petar and Ying, Zhitao and Leskovec, Jure and Li{\`o}, Pietro},
  booktitle={Learning on Graphs Conference},
  pages={10--1},
  year={2022}
}

@inproceedings{Kim20:LHA*,
  title={Learning Heuristic A: Efficient Graph Search using Neural Network}, 
  author={Kim, Soonkyum and An, Byungchul},
  booktitle={IEEE International Conference on Robotics and Automation}, 
  year={2020},
  volume={},
  number={},
  pages={9542-9547}
}

@inproceedings{Bhardwaj17:Learning,
  title={Learning heuristic search via imitation},
  author={Bhardwaj, Mohak and Choudhury, Sanjiban and Scherer, Sebastian},
  booktitle={Conference on Robot Learning},
  pages={271--280},
  year={2017}
}

@article{Paden17:Verification,
  author={Paden, Brian and Varricchio, Valerio and Frazzoli, Emilio},
  journal={IEEE Robotics and Automation Letters}, 
  title={Verification and Synthesis of Admissible Heuristics for Kinodynamic Motion Planning}, 
  year={2017},
  month={Apr.},
  volume={2},
  number={2},
  pages={648-655}
}

@inproceedings{Thayer11:Learning,
  title={Learning inadmissible heuristics during search},
  author={Thayer, Jordan and Dionne, Austin and Ruml, Wheeler},
  booktitle={International Conference on Automated Planning and Scheduling},
  pages={250--257},
  year={2011}
}

@article{Hou25:Primitive-Swarm,
  author={Hou, Jialiang and Zhou, Xin and Pan, Neng and Li, Ang and Guan, Yuxiang and Xu, Chao and Gan, Zhongxue and Gao, Fei},
  journal={IEEE Transactions on Robotics}, 
  title={Primitive-Swarm: An Ultra-Lightweight and Scalable Planner for Large-Scale Aerial Swarms}, 
  year={2025},
  volume={41},
  number={},
  pages={3629-3648},
  month={May}}

@article{Lu25:hepp,
  title={{HEPP}: Hyper-efficient Perception and Planning for High-speed Obstacle Avoidance of UAVs},
  author={Lu, Minghao and Fan, Xiyu and Xu, Bowen and Yan, Zexuan and Peng, Rui and Chen, Han and Zhang, Lixian and Lu, Peng},
  journal={arXiv preprint arXiv:2505.17438},
  year={2025}
}

@inproceedings{Chen23:DLIO,
  title={Direct LiDAR-Inertial Odometry: Lightweight LIO with Continuous-Time Motion Correction},
  author={Chen, Kenny and Nemiroff, Ryan and Lopez, Brett T},
  booktitle={IEEE International Conference on Robotics and Automation},
  year={2023},
  pages={3983--3989},
}

@article{Kondo26:Mighty,
  title={MIGHTY: Hermite Spline-Based Efficient Trajectory Planning},
  author={Kondo, Kota and Wu, Yuwei and Kumar, Vijay and How, Jonathan P},
  journal={IEEE Robotics and Automation Letters},
  volume={11},
  number={6},
  pages={6664--6671},
  month={Apr.},
  year={2026}
}
 



\vfill

\end{document}